\newcommand{\btotal}{m}
\newcommand{\bsize}{n}
\newcommand{\goodfrac}{\alpha}
\newcommand{\allbatches}{{B}}
\newcommand{\goodbatches}{{G}}
\newcommand{\advbatches}{A}
\newcommand{\Bsc}{{\allbatches^\prime}}
\newcommand{\Gsc}{{\goodbatches^\prime}}
\newcommand{\size}[1]{|{#1}|}
\newcommand{\sizegood}{\size{\goodbatches}}
\newcommand{\dist}{\cD}
\newcommand{\batch}{b}
\newcommand{\reals}{\mathbb{R}}
\newcommand{\domain}{{\reals}^d}
\newtheorem{condition}{Condition}
\newcommand{\sampgrad}{\nabla f^\batch_i (w)}
\newcommand{\sampgradtilde}{\nabla f^\batch_i (\tilde w)}
\newcommand{\clipsampgrad}{\nabla f^\batch_i(w,\kappa)}
\newcommand{\connum}{{C_3}}
\newcommand{\weight}{\beta}
\newcommand{\batchgrad}{\nabla f^\batch (w)}
\newcommand{\batchgradtilde}{\nabla f^\batch (\tilde w)}
\newcommand{\batchgradexp}{\sum_{i\in [\bsize]}\frac{1}{\bsize} \sampgrad}
\newcommand{\clipbatchgrad}{\nabla f^\batch(w,\kappa)}
\newcommand{\clipbatchgradexp}{\sum_{i\in [\bsize]}\frac{1}{\bsize} \clipsampgrad}
\newcommand{\Cov}{\text{Cov}}
\newcommand{\samplenoise}{n_i^\batch}
\newcommand{\nhypcon}{C_{\npower}}
\newcommand{\npower}{p}
\newcommand{\ucb}{c_2}
\newcommand{\ucmf}{c_3}
\newcommand{\uccb}{c_4}
\newcommand{\uccc}{c_5}
\newcommand{\thresholdA}{\theta_0}
\newcommand{\thresholdB}{\theta_1}
\newcommand{\thresholdC}{\theta_2}
\newcommand{\wc}{w_C}
\newcommand{\tcr}[1]{\textcolor{red}{#1}}
\renewcommand\norm[1]{\lVert#1\rVert}
\newcommand{\inner}[2]{\langle #1, #2 \rangle}
\newcommand{\rs}[1]{\textcolor{red}{[RS: #1]}}
\newcommand{\aj}[1]{\textcolor{blue}{[AJ: #1]}}
\title{Efficient List-Decodable Regression using Batches\thanks{Authors are listed in alphabetical order.}} 
\author{
Abhimanyu Das\thanks{Google Research.  Email: \url{abhidas@google.com }.}
\and
Ayush Jain\thanks{UC San Diego. Email: \url{ayjain@eng.ucsd.edu}. This work was done while the author was interning at Google Research.}
\and
Weihao Kong\thanks{Google Research.  Email: \url{weihaokong@google.com}.}
\and
Rajat Sen\thanks{Google Research. Email: \url{senrajat@google.com}.}
}
\begin{document}

\maketitle

\begin{abstract}
We begin the study of list-decodable linear regression using batches.
In this setting only an $\alpha \in (0,1]$ fraction of the batches are genuine. Each genuine batch contains $\ge n$ i.i.d. samples from a common unknown distribution and the remaining batches may contain arbitrary or even adversarial samples. 
We derive a polynomial time algorithm that for any $n\ge \tilde \Omega(1/\alpha)$ returns a list of size $\mathcal O(1/\alpha^2)$ such that one of the items in the list is close to the true regression parameter. The algorithm requires only $\tilde\cO(d/\alpha^2)$ genuine batches and works under fairly general assumptions on the distribution.

The results demonstrate the utility of batch structure, which allows for the first polynomial time algorithm for list-decodable regression, which may be impossible for the non-batch setting, as suggested by a recent SQ lower bound~\cite{diakonikolas2021statistical} for the non-batch setting.
\end{abstract}

\section{Introduction}

Linear regression is one of the most fundamental tasks in supervised learning with applications in various sciences and industries~\cite{mcdonald2009handbook, dielman2001applied}. It has been studied extensively in the fixed design setting~\cite{baraud2000model} and the random design setting~\cite{hsu2011analysis}. In the basic random design setting, one is given $m$ samples $(x_i, y_i)$ such that $y_i = \inner{w^*}{x_i} + n_i$ where $n_i$ is the observation noise and the covariates $x_i \in \reals^d$ are drawn i.i.d from some fixed distribution. The task is the recover the unknown regression vector $w^*$. The classical least-squares estimator~\cite{hsu2011analysis} that minimizes the loss $\sum_i f_i(w) := \sum_i (y_i - \inner{w}{x_i})^2/2$ is optimal in the sub-Gaussian covariates setting.

In the presence of even a small amount of adversarial corruption, the classical estimators can fail catastrophically~\cite{huber2011robust}. Such corruptions are common place in many applications like biology~\cite{rosenberg2002genetic, paschou2010ancestry}, machine learning security~\cite{barreno2010security, biggio2012poisoning} and can generally be used to guard against model misspecification. Classical robust estimators have been proposed in~\cite{huber2011robust, rousseeuw1991tutorial} but they suffer from exponential runtime. Recent works~\cite{LaiRV16,diakonikolas2019robust,diakonikolas2017being} have derived efficient algorithms for robust mean estimation with provable guarantees even when a small fraction of the data can be corrupt or adversarial. These works have inspired the efficient algorithms for robust regression~\cite{prasad2018robust,DiakonikolasKKLSS19,diakonikolas2019efficient, pensia2020robust} under the same corruption model. ~\cite{CherapanamjeriATJFB20,jambulapati2021robust} have obtained robust regression algorithms with near-optimal run time and sample complexity.



However, in some applications the majority of the data can be corrupt i.e $\alpha < 1/2$. Such applications include crowd-sourcing~\cite{steinhardt2017certified, steinhardt2016avoiding,charikar2017learning} where a majority of the participants can be unreliable. This setting also generalizes the problem of learning mixture of regressions~\cite{jordan1994hierarchical, zhong2016mixed, kong2020meta, pal2022learning} because any solution of the former immediately yields a solution to the latter by setting $\alpha$ to be the proportion of the data from the smallest mixture component. Note that it is not possible to solve this problem when one is allowed to output only one regression weight $\hat{w}$ as the adversary can hide a completely different solution in the corrupted part of the data. Instead, it is possible to arrive at a solution if the learner is allowed to output a list of size $poly(1/\alpha)$ at least one of which should be close to correct. 
For high dimensional mean estimation~\cite{charikar2017learning} derived the first polynomial algorithm for list decodable setting. \textit{List-decodable} linear regression has been studied in~\cite{karmalkar2019list, raghavendra2020list} yielding algorithms with runtime and sample complexity of $O(d^{\mathrm{poly}(1/\alpha)})$.
In contrast to list-decodable mean estimation, recent work~\cite{diakonikolas2021statistical} has shown that a sub-exponential runtime and sample complexity might be impossible.

These prior results may suggest a pessimistic conclusion for obtaining practical algorithms for one of the most fundamental learning paradigms when a majority of data is corrupt. Fortunately, in many applications like federated learning~\cite{wang2021field}, learning from multiple sensors~\cite{wax1989unique}, and crowd-sourcing~\cite{steinhardt2016avoiding} it is possible to gather batches of data from different sources/agents and a whole batch is either genuine or corrupt. For instance, in crowd-sourcing, an agent can respond to multiple tasks and a particular agent is either adversarial or genuine. Similarly, in federated learning data collected from a device can either be corrupt or not. 

The batch setting has another advantage in the list decodable context where the majority of the data is corrupted. If one is able to identify a set of $poly(1/\alpha)$ answers containing the correct one; then one can use a hold-out portion of a particular batch to find the best-fitting solution for that batch. 
This is especially relevant for the mixture of regression setting where there are $k = 1/\alpha$ different types of sources. A source of type $i$ generates a batch of samples using the true weight $w^{*,i}$. For instance, in federated learning, each device generating a batch of data can belong to one of $k$ different types of users. In this case, if one is able to solve the list-decodable problem of obtaining the weights $\{w^{*,i}\}_{i=1}^k$, then given a few hold-out samples from a batch/source one can quickly identify which of the $k$ answers fit that source the best. Then this weight can be used for future predictions for that particular source. This post-hoc identification of the best weight for a source/batch is naturally not feasible in the single sample setting.

This motivates the problem of list-decodable linear regression using batches. Formally, there are $m$ batches $b \in B$. Each batch $b$ has a collection of $n$ regression samples $(x_i^b, y_i^b)$ which can either all come from a global regression model with true weight $w^*$ (good batch) and noise variance $\sigma^2$ or are arbitrarily corrupted (adversarial batch). The task is to output a list of regression vectors at least one of which is approximately correct given that only $\alpha < 1/2$ fraction of the batches are good. Our main result is the following theorem:

\begin{theorem}[Informal]
\label{thm:informal_main}
There exists an algorithm for list-decodable regression with only $\alpha < 1/2$ fraction of the batches being good, that uses $m = \tilde O_{\bsize,\goodfrac}(d)$ batches each of size $n = \tilde \Omega(1/\alpha)$, and outputs $O(1/\alpha^2)$ weights such that at least one of them, $\tilde{w}$ satisfies $\norm{\tilde w - w^*}_2 = \tilde\cO(\sigma/\sqrt{n\alpha})$ with high probability.
\end{theorem}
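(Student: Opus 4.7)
The overall plan is to design an iterative batch-level gradient descent with a multi-filter, exploiting the fact that batch statistics concentrate $\sqrt{\bsize}$-times more tightly than single-sample statistics. Concretely, for a good batch $\batch$ and candidate point $w$, the batch gradient $\batchgrad = \batchgradexp$ has mean $\Sigma (w-w^*)$ and, at $w = w^*$, variance of order $\sigma^2/\bsize$ per coordinate. So whenever the batch gradients of a weighted subset of batches have small empirical covariance, the weighted mean gradient is a reliable descent direction; whenever the covariance is large, some adversarial batches must be detectable. This is the standard filter template from robust mean estimation, now applied to gradients, but the improved per-batch concentration is precisely what enables a polynomial guarantee here instead of the sub-exponential lower bound of the non-batch regime.

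The first step is to establish the concentration ``good events'' for good batches: for $\sizegood = \tilde\Omega(d/\goodfrac^2)$ good batches and $\bsize \ge \tilde\Omega(1/\goodfrac)$, the empirical mean and covariance of $\{\batchgrad\}_{\batch \in \goodbatches}$ are within $\tilde O(\sigma/\sqrt{\bsize \goodfrac})$ and $\tilde O(\sigma^2/\bsize)$, respectively, of their population counterparts, uniformly over $w$ in a suitable ball. This uses the paper's hypercontractivity/moment assumption on the covariate distribution to upgrade a one-point bound to a uniform bound via a net argument. A second ingredient is a ``certificate lemma'': if a weighted set $\Bsc$ of batches has $\goodfrac$-fraction mass on good batches and the empirical covariance of $\{\nabla f^\batch(w)\}_{\batch \in \Bsc}$ has spectral norm $\le \tilde O(\sigma^2/\bsize)$, then the weighted mean gradient at $w$ is within $\tilde O(\sigma/\sqrt{\bsize \goodfrac})$ of the true gradient, so a gradient step makes quantifiable progress toward $w^*$.

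Given these tools, I would run the following procedure from a single starting point. Maintain a collection of weighted subsets of batches, initially just the uniform distribution on all batches. For each subset, at the current iterate $w$: (i) if the empirical covariance of batch gradients passes the certificate threshold, take a gradient step using the smallest eigenvector direction for strong convexity (leveraging the covariate covariance lower bound); (ii) otherwise, find the top eigenvector $v$ of the covariance, and apply a multi-filter along $\inner{v}{\batchgrad}$ that splits the current subset into $O(1/\goodfrac)$ children, each obtained by softly down-weighting batches whose projected gradient is far from a different candidate cluster mean. A potential-function argument based on the total fractional good-batch weight, combined with the fact that the multi-filter cannot simultaneously remove mass from all $1/\goodfrac$ clusters containing good batches, bounds the total number of leaves by $O(1/\goodfrac^2)$ and guarantees that at least one leaf retains an $\goodfrac$-fraction of good mass throughout. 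At its termination the certificate holds, so that leaf yields a $w$ with $\norm{w - w^*}_2 = \tilde\cO(\sigma/\sqrt{\bsize\goodfrac})$.

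The main obstacle is the coupling between the filter step and the gradient-descent step: each filter changes the weighted subset, which changes the effective loss landscape and the relevant batch-gradient covariance at every future iterate. Controlling this requires a uniform-over-$w$ version of the concentration lemma together with a careful potential that is monotone both across filter branches and across gradient steps. A secondary difficulty is bounding the list size: I expect to show that whenever a branching occurs, the parent's good-mass is partitioned approximately proportionally to the adversarial-mass among the children, so the number of branchings along any root-to-leaf path is $O(\log(1/\goodfrac))$ and the total leaf count is $O(1/\goodfrac^2)$. The rest — polynomial runtime, the $\tilde O(d/\goodfrac^2)$ batch complexity from a union bound over a net in $\reals^d$, and the final error rate — then falls out of the certificate lemma applied at the surviving leaf.
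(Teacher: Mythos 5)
Your overall template (batch gradients concentrate $n$ times better, small empirical covariance certifies a good mean gradient, large covariance triggers a multi-filter, weight-squared potential bounds the list size) matches the paper's high-level strategy. But there is a genuine gap at the very first step, and it is exactly the step the paper identifies as the main technical obstacle: your ``good event'' asserts that with only $\tilde{\Omega}(d/\alpha^2)$ good batches the empirical covariance of the \emph{unclipped} batch gradients $\nabla f^b(w) = \frac{1}{n}\sum_i (w\cdot x_i^b - y_i^b)x_i^b$ is within $\tilde{\mathcal{O}}(\sigma^2/n)$ of its population value, uniformly over $w$, via hypercontractivity plus a net. Under the paper's assumptions (only $L4$--$L2$ hypercontractive covariates with $\|x\|\le C_1\sqrt d$, and noise with only a $p\ge 2$ moment bound), this does not hold at that sample size even at a single $w$: the gradient is a heavy-tailed vector whose norm can be of order $d(\|w-w^*\|+|n_i^b|)$, and a spectral-norm bound on the empirical covariance at the level $(\sigma^2 + \|w-w^*\|^2)/n$ requires on the order of $d^2$ samples for the raw gradients (the paper notes prior work needed $d^5$ total samples and that $d^2$ is unavoidable by this route). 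This is precisely why the paper replaces the square loss by Huber/clipped loss: the clipped gradient is bounded by $\kappa\|x_i^b\|$, which makes a Bernstein-plus-net argument give the covariance bound with $\tilde{\mathcal{O}}_{n,\alpha}(d)$ batches, but only when the clipping level satisfies $\kappa = \Theta(\|w-w^*\|) + \Theta_{n\alpha}(\sigma)$.

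That constraint creates the second missing ingredient: the correct $\kappa$ depends on the unknown quantity $\|w-w^*\|$, and clipping too aggressively biases the gradient (killing the lower bound $\|\mathbb{E}[\nabla f^b(w,\kappa)]\| = \Omega(\|w-w^*\|)$ that your certificate/descent step needs), while clipping too little destroys the sample complexity. The paper therefore needs an additional estimation layer absent from your plan: a fixed-point subroutine that finds a stationary point and a consistent clipping level simultaneously, plus a separate ``Type-1'' multi-filter on the scalar statistic $v^b = \frac{1}{n}\sum_i |w\cdot x_i^b - y_i^b|$ whose role is to ensure that, on any cluster retaining most good batches, $\mathbb{E}_\beta[v^b]$ is a constant-factor proxy for $\|w-w^*\| \pm \mathcal{O}(\sigma)$ so that $\kappa$ and the covariance threshold can be set correctly. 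Without clipping and this adaptive scale estimation, your iteration has no valid concentration certificate at the claimed $\tilde{\mathcal{O}}(d)$ batch complexity, so the proof does not go through as written. (A smaller secondary issue: your claim that each branching splits good mass roughly proportionally, giving depth $\mathcal{O}(\log(1/\alpha))$, is not what the filter guarantees; the paper instead bounds the list purely through the non-increasing sum of squared cluster weights and the $\alpha|B|/2$ minimum-weight rule, and lets the recursion depth be as large as $|B|$.)
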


It should be noted that list-decodable regression from batches poses several technical challenges that are not addressed by the core techniques in list-decodable mean estimation that has been previously studied~\cite{DiakonikolasKK20}. Our algorithm proceeds by generating a series of weights of different batches. Each such batch weight vector corresponds to a weighted loss function with a stationary point $w$. We need tight concentration bounds on the empirical covariance of the batch gradients of the good batches at all these different $w$'s over the course of the algorithm. In order for our algorithm to succeed with only $O(d)$ batches we use clipped batch gradients and derive novel concentration bounds for these clipped gradients. Further, the clipping level crucially depends on the $w$ among other things. The clipping level is not known as apriori. One of our main contributions is to come up with an algorithm to adaptively set this clipping level such that it is of the order of the typical $\ell_1$ error of using $w$ in good batches.
In Section~\ref{sec:contrib}, we provide a detailed overview of our technical contributions after having defined the problem in Section~\ref{sec:psetting} and presented our main result in Section~\ref{sec:main}. We present our algorithm and proof of the main result in Section~\ref{sec:alg}.

\section{Related Work}
\label{sec:rwork}

{\bf Robust Estimation and Regression.}
Designing estimators which are robust under the presence of outliers has been broadly studied since 1960s~\cite{tukey1960survey, anscombe1960rejection, huber1964robust}. However, most prior works either requires exponential time or have a dimension dependency on the error rate, even for basic problems such as mean estimation. Recently, \cite{diakonikolas2019robust} proposed a filter-based algorithm for mean estimation which achieves polynomial time and has no dependency on the dimensionality in the estimation error. There has been a flurry of research on robust estimation problems, including mean estimation~\cite{LaiRV16,diakonikolas2017beingrobust, dong2019quantum, hopkins2020robust, hopkins2020mean, diakonikolas2018robustly}, covariance estimation~\cite{cheng2019faster, li2020robust}, linear regression and sparse regression \cite{bhatia2015robust,BhatiaJKK17, BDLS17, gao2020robust, prasad2018robust, klivans2018efficient, DiakonikolasKKLSS19, liu2018high, karmalkar2019compressed,dalalyan2019outlier, mukhoty2019globally, diakonikolas2019efficient, karmalkar2019list, pensia2020robust, cherapanamjeri2020optimal}, principal component analysis~\cite{kong2020robust, jambulapati2020robust}, mixture models~\cite{diakonikolas2020robustly, jia2019robustly, kothari2018robust, hopkins2018mixture}. The results on robust linear regression are particularly related to the setting of this work, though those papers considered non-batch settings and the fraction of good examples $\alpha > 1/2$. \cite{prasad2018robust,DiakonikolasKKLSS19,diakonikolas2019efficient, pensia2020robust, cherapanamjeri2020optimal,jambulapati2021robust} considered the setting when both both covariate $x_i$ and label $y_i$ are corrupted. When there are only label corruptions, \cite{bhatia2015robust, dalalyan2019outlier, kong2022trimmed} achieve nearly optimal rates with $O(d)$ samples. Under the oblivious label corruption model, i.e., the adversary only corrupts a fraction of labels in complete ignorance of the data, \cite{ bhatia2017consistent, suggala2019adaptive} provide a consistent estimator whose approximate error goes to zero as the sample size goes to infinity.


{\bf Robust Learning from Batches.}
\cite{QiaoV18} introduced the problem of learning discrete distribution from untrusted batches and derived an exponential time algorithm. Subsequent works~\cite{chen2019learning} improved the run-time to quasi-polynomial and~\cite{JainO20a} obtained polynomial time with an optimal sample complexity. ~\cite{JainO21, chen2020learning} extended these results to one-dimensional structured distributions.
~\cite{JainO20b,konstantinov2020sample} studied the problem of classification from untrusted batches. 
~\cite{acharya2022robust} studies a closely related problem of learning parameter of Erd\H{o}s-R\'enyi random graph when a fraction of nodes are corrupt. 
All these works focus on different problems than ours and only consider the case when a majority of the data is genuine.

{\bf List Decodable Mean Estimation and Regression.}
List decodable framework was first introduced in~\cite{charikar2017learning} to obtain learning guarantees when a majority of data is corrupt. They derived the first polynomial algorithm for list decodable mean estimation under co-variance bound. Subsequent works~\cite{DiakonikolasKK20,cherapanamjeri2020list,diakonikolas2021list} obtained a better run time. \cite{diakonikolas2018list,kothari2018robust} improved the error guarantees, however, under stronger distributional assumptions and has higher sample and time complexities.


\cite{karmalkar2019list} studies the problem of list-decodable linear regression with batch-size $n=1$ and derive an algorithm with sample complexity $(d/\alpha)^{O(1/\alpha^4)}$ and runtime $(d/\alpha)^{O(1/\alpha^8)}$.
\cite{raghavendra2020list} show a sample complexity of $(d/\alpha)^{O(1/\alpha^4)}$ with runtime $(d/\alpha)^{O(1/\alpha^8)}(1/\alpha)^{\log(1/\alpha)}$. 
Polynomial time might indeed be impossible for the single sample setting owing to the statistical query lower bounds in~\cite{diakonikolas2021statistical}.

{\bf Mixed Linear Regression.}
When each batch has only one sample, (i.e. $n=1$) and contains samples of one of the $k$ regression components the problem becomes the classical mixed linear regression which has been widely studied~\cite{diakonikolas2020small, chen2019learning, li2018learning, sedghi2016provable, zhong2016mixed, yi2016solving, CL13}. It is worth noting that no algorithm is known to achieve polynomial sample complexity in this setting. 
The problem is only studied very recently in the batched setting with $n>1$ by~\cite{kong2020meta, kong2020robust}, where all the samples in the batch are from the same component. \cite{kong2020meta} proposed a polynomial time algorithm which requires $O(d)$ batches each with size $O(\sqrt{k})$.  \cite{kong2020robust} leveraged sum-of-squares hierarchy to introduce a class of algorithms which is able to trade off the batch size $n$ and the sample complexity. Both of these works assume that the distributions of covariates for all components is identical and Gaussian.
Since the above problem is a special case of the list-decodable linear regression, our algorithm is able to recover the $k$ regression components with batch size $n = O(k)$ and $O(d)$ number of batches.
Our algorithms allow more general distributions for the covariates than allowed by the Gaussian assumption in the previous works. Further, our algorithms allow the distributions of covariates for the different components to differ.
It is worth noting that list-decodable linear regression is a strictly harder problem than mixed linear regression as shown in~\cite{diakonikolas2021statistical} and thus our result is incomparable to the ones in the mixed linear regression setting. Leaning mixture of linear dynamical systems has been studied in~\cite{chen22t}.


\section{Problem formulation}
\label{sec:psetting}
We have $\btotal$ sources. Of these $\btotal$ sources at least $\goodfrac$-fraction of the sources are 
genuine and provide $\ge \bsize$ i.i.d. samples from a common distribution.
The remaining sources may provide arbitrary data, that may depend even on the samples from genuine sources. The identity of the genuine sources is unknown.
We can use only the first $\bsize$ samples from each source and ignore the rest, hence, wlog we assume that each source provides exactly $\bsize$ samples. We will refer to the collection of all samples from a single source as a batch.

To formalize the setting, let $\allbatches$ be a collection of $\btotal$ batches.
Each batch $\batch\in \allbatches$ in this collection, has $\bsize$ samples $\{(x^\batch_i,y^\batch_i)\}_{i=1}^\bsize$, where $x^\batch_i\in \domain$ and $y^\batch_i\in \reals$.

Among these batches $\allbatches$, there is a sub-collection $\goodbatches$ of \emph{good batches}
such that for each $\batch\in \goodbatches$ and $i\in [\bsize]$ samples $(x^\batch_i,y^\batch_i)$ are generated independently from a common distribution $\dist$ and the size of this sub-collection is $\sizegood \ge \goodfrac \size{\allbatches}$.
The remaining batches $\advbatches\ed \allbatches\setminus\goodbatches$ of \emph{adversarial batches} have arbitrary samples that may be selected by an adversary depending on good batches.

Next, we describe the assumption of distribution $\dist$.
We require the same set of general assumptions on the distribution, as in the recent work~\cite{CherapanamjeriATJFB20}, which focuses on the case when $1-\goodfrac$ is small, that is when all but a small fraction of data is genuine.

\paragraph{Distribution Assumptions.} For an unknown $d$-dimensional vector $w^*$, the \emph{sample noises} $n_i^\batch$, the \textit{covariates} \textit{$x_i^\batch$} and the outputs $y_i^\batch$ are random variables that are related as $y_i^\batch = x_i^\batch\cdot w^* +n_i^\batch$. Let $\Sigma = \E_{\dist}[x_i^\batch (x_i^\batch)^\intercal]$. For scaling purposes, we assume $\|\Sigma\|=1$. We have the following general assumptions.
\begin{enumerate}
    \item $x_i^\batch$ is $L4$-$L2$ hypercontractive, that is for some $C\ge 1$ and all vectors $u$, $\E_{\dist}[(x_i^\batch\cdot u)^4]\le C\E_{\dist}[(x_i^\batch\cdot u)^2]^2$.
    \item For some constant $C_1>0$, $\|x_i^\batch\|\le C_1\sqrt{d}$ a.s.
    \item The condition number of $\Sigma$ is at most $\connum$, that is for each unit vector $u$, we have $u^\intercal\Sigma u \ge \frac{\|\Sigma\|}{\connum} = \frac{1}{\connum}$.
    \item Sample noise $n_i^\batch$ is independent of $x_i^\batch$ and has zero mean $\E{}_\dist[n_i^\batch] = 0$ and bounded co-variance $\E{}_\dist[(n_i^\batch)^2] \le \sigma^2$.
    \item For some $\npower\ge 2$, noise is $L\npower$-$L2$ hypercontractive, that is $\E{}_\dist[|n_i^\batch|^\npower]\le \nhypcon(\E{}_\dist[(n_i^\batch)^2])^{\npower/2} \le \nhypcon\sigma^\npower$. Note that this assumption holds for $\npower=2$ and $\nhypcon=1$ if the assumption in item 4 holds. The assumption for $\npower=2$ suffices for our results. However, if it holds a larger $\npower$ number of batches required in our theorem improves.   
\end{enumerate}

\section{Notation}
For any function $h^\batch$ over batches $\batch$, we use $\E_{\dist}[h^\batch]$ to denote the expected value of $h^\batch$ when all $\bsize$ samples in batch $\batch$ were generated independently from $\dist$.\footnote{A function over batches may be a function of some or all the samples in the batches. With slight abuse of notation, instead of $h(\batch)$, we use $h^\batch$ to denote function over batches. 
}

For any batch sub-collection $\Bsc\subseteq\allbatches$ and any function $h^\batch$ over batches, $\E_{\Bsc} [h^\batch] = \sum_{\batch\in \Bsc} \frac{1}{\size{\Bsc}}h^\batch$ and $\Cov_{\Bsc}(h^\batch) = \sum_{\batch\in \Bsc}\frac{1}{\size{\Bsc}} (h^\batch-\E_{\Bsc} [h^\batch])(h^\batch-\E_{\Bsc} [h^\batch])^\intercal$ denote the expectation and co-variance of $h^\batch$, respectively, when batch $\batch$ is sampled uniformly from $\Bsc$. 

The above notation can be generalized to the weighted expectation and co-variance over batch sub-collections.
A \emph{weight vector} $\weight$ is a collection of weights $\weight^\batch\in [0,1]$ for each batch $\batch \in \allbatches$. For weight vector $\weight$ and any batch sub-collection $\Bsc\subseteq\allbatches$, we use $\weight^{\Bsc}:= \sum_{\batch\in\Bsc}\weight^\batch$ to denote the weight of all batches in $\Bsc$.
It follows that $\weight^\Bsc \le \size{\Bsc}$.
We refer to $\weight^\allbatches$ as the \emph{total weight} of weight vector $\weight$. Any such weight vector $\weight$ will also be referred to a \textit{soft cluster} of batches.

For any function $h^\batch$ over batches, let $\E_{\weight} [h^\batch] \ed \sum_{\batch\in \allbatches} \frac{\weight^\batch}{\weight^{\allbatches}}h^\batch$ and $\Cov_{\weight}(h^\batch) \ed \sum_{\batch\in \allbatches}\frac{\weight^\batch}{\weight^{\allbatches}} (h^\batch-\E_{\weight} [h^\batch])(h^\batch-\E_{\weight} [h^\batch])^\intercal$ denote the expectation and co-variance of $h^\batch$, respectively, when probability of sampling a batch $\batch\in \allbatches$ is $\frac{\weight^\batch}{\weight^{\allbatches}}$. 

We use $f(x) = \tilde \cO(g(x))$ as a shorthand for $f(x) =  \cO(g(x)\log^k x)$, where $k$ is some integer, and $f(x) = \cO_y(g(x))$ implies that if $y$ is bounded then $f(x) = \cO(g(x))$.

\section{Main Results}
\label{sec:main}
Recently there has been a significant interest in the problem of list decodable linear regression.
The prior works considered only the non-batch setting. 
The sample and time complexity of algorithm in \cite{karmalkar2019list,raghavendra2020list} are $d^{\cO(1/\goodfrac^4)}$ and $d^{\cO(1/\goodfrac^8)}$, respectively. ~\cite{raghavendra2020list} achieves an error  $\cO(\sigma/\goodfrac^{3/2})$ with a list of size $(1/\goodfrac)^{\cO(\log(1/\goodfrac))}$, and ~\cite{karmalkar2019list} obtains an error guarantee $\cO(\sigma/\goodfrac)$ with a list of size ($1/\goodfrac$).

\cite{diakonikolas2021statistical} improved the sample complexity for the non-batch setting, 
when covariates are distributed according to standard Gaussian distribution and Gaussian noise. They gave an information-theoretic algorithm that uses $\cO(d/\goodfrac^3)$ samples and estimates $w$ to a accuracy $ \cO(\sigma\sqrt{\log(1/\goodfrac)}/\goodfrac)$ using a list of size $\cO(1/\goodfrac)$. They also showed that no algorithm, even with infinite samples, can achieve an error  $\ll  \sigma/\goodfrac\sqrt{\log(1/\goodfrac)}$ with a Poly($1/\goodfrac$) size list.

As these works considered the non-batch setting, they do not obtain a polynomial time algorithm for this problem, which may in fact be impossible~\cite{diakonikolas2021statistical}.

Our main result shows that using batches one can achieve a polynomial time algorithm for this setting using only $\tilde\cO_{n,\goodfrac}(d)$ genuine samples. 

\begin{theorem}\label{thm:mainresult}
For any $0< \goodfrac <1$, $\bsize \ge \Theta(\frac{\connum^2 C^2\log^2(2/\goodfrac)}{\goodfrac})$ and $\sizegood = \Omega_{C}\mleft(d\bsize^2 \log(d)  \mleft(\frac{\nhypcon\sqrt{\bsize\goodfrac} }{\log(2/\goodfrac)}\mright)^{\frac8{(\npower-1)}}\mright)$, Algorithm~\ref{alg:main} runs in time poly$(\sizegood, \goodfrac, d, \bsize)$ and returns a list $L$ of size at most $4/\goodfrac^2$ such that with probability $\ge 1-4/d^2$,
\[
\min_{w\in L}\|w-w^*\|\le\cO\mleft(\frac{\connum C\log(2/\goodfrac)}{\sqrt{\bsize\goodfrac}}\sigma\mright).\] 
\end{theorem}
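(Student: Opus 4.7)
The plan is to adapt the filter-based approach from robust mean estimation to batched list-decodable regression, working throughout with \emph{clipped} batch-gradients. The algorithm should maintain a soft cluster $\weight\in[0,1]^\allbatches$: in each round it would minimize the weighted clipped loss $\sum_\batch \weight^\batch f^\batch(w,\kappa)$ to obtain a stationary point $w$, and then examine the weighted covariance $\Cov_\weight(\clipbatchgrad)$ of clipped batch-gradients at that $w$. If the operator norm of this covariance is small, $w$ is added to the output list; otherwise the top eigenvector is used as a filter direction to down-weight batches whose projected clipped gradient deviates substantially from the weighted mean. The filter should preserve the invariant that the residual good-batch weight stays $\gtrsim \goodfrac$ while adversarial weight shrinks, via the usual score/potential argument: the only way a large covariance can arise under small good-batch covariance is if adversarial batches are over-represented in the tail one is removing.

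First I would establish concentration results for good batches. For any fixed $w$, $L4$-$L2$ hypercontractivity of $x_i^\batch$ and $L\npower$-$L2$ hypercontractivity of the noise imply $\Cov_\dist(\batchgrad) \preceq \cO((\sigma^2+\|w-w^*\|^2)/\bsize)\cdot \Sigma$ up to constants $C,\nhypcon$. The novel ingredient is clipping: choosing $\kappa$ on the order of the typical $\ell_1$ residual $|x_i^\batch\cdot w-y_i^\batch|$ on good batches leaves the good-batch gradient essentially unchanged while uniformly bounding each per-batch gradient, so a matrix Bernstein bound yields an $\tilde\cO(d)$ genuine-batch complexity instead of the naive $\tilde\cO(d^2)$. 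Combining this with a covering argument over $w$ and the a.s.\ bound $\|x_i^\batch\|\le C_1\sqrt{d}$ lifts the estimate to the uniform-in-$w$ statement needed along the trajectory of the iterative algorithm.

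Next I would analyze the ``no filter'' case. At a stationary point of the weighted clipped loss where $\|\Cov_\weight(\clipbatchgrad)\|$ is small and the residual good-batch weight is $\ge \goodfrac/2$, the stationarity condition forces $\|\nabla F_\goodbatches(w)\|$ to be small, where $F_\goodbatches$ is the average loss on good batches. Expanding around $w^*$ and using $u^\intercal\Sigma u\ge 1/\connum$ for unit $u$ gives $\|w-w^*\|=\cO(\connum C\log(2/\goodfrac)\sigma/\sqrt{\bsize\goodfrac})$, matching the theorem. For the list size, each ``no filter'' emission can be charged to a disjoint sub-cluster of batches of weight $\gtrsim\goodfrac$, so after at most $\cO(1/\goodfrac^2)$ rounds the process terminates and at least one emitted candidate is close to $w^*$.

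The main obstacle I anticipate is proving the uniform-in-$w$ concentration of the clipped good-batch gradient covariance \emph{while} keeping the clipping threshold $\kappa$ adaptive. Since $\kappa$ must be read off from data that itself mixes good and adversarial batches, the analysis must be robust to estimation error in $\kappa$; moreover, an overly small $\kappa$ biases the good-batch gradient, while an overly large $\kappa$ lets adversaries inflate the covariance. I would handle this with a doubling search over candidate values of $\kappa$, coupled with an invariant tracking how much adversarial weight is consistent with each trial value and with a stopping rule tied to the observed weighted $\ell_1$ residual; this costs only an extra logarithmic factor and does not change the final list size of $\cO(1/\goodfrac^2)$.
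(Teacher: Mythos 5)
Your overall architecture matches the paper's: soft clusters refined by covariance-based multifiltering of clipped batch gradients, stationary points of the weighted clipped loss as candidates, concentration of the clipped-gradient covariance via boundedness plus a Bernstein-and-covering argument with $\tilde\cO(d)$ good batches, and the condition-number step converting a small good-batch gradient into a small parameter error. The genuine gap is exactly at the point you yourself flag as the main obstacle and then dispatch in one sentence: certifying, from data in which a majority of batches are adversarial, that the clipping scale $\kappa$ and the covariance threshold are in the admissible range. Your ``doubling search over candidate $\kappa$, an invariant tracking how much adversarial weight is consistent with each trial value, and a stopping rule tied to the observed weighted $\ell_1$ residual'' is not an argument: the weighted residual $\E_{\weight}[v^\batch]$ (with $v^\batch=\frac1\bsize\sum_i|w\cdot x_i^\batch-y_i^\batch|$) is precisely the quantity the adversary controls, so it can be inflated to push $\kappa$ above the upper bound required for the $\tilde\cO(d)$-sample covariance concentration (and to inflate the no-filter threshold, so a far-away $w$ passes), or kept small so that clipping biases the good-batch gradient toward zero and the stationarity argument no longer certifies $\|w-w^*\|$ small; nothing in the proposal says what certificate a trial value produces or why a wrong one is detectable. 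The paper resolves this with two concrete mechanisms you would need to supply: (i) a fixed-point subroutine (\textsc{FindClippingPparameter}) handling the circular dependence between $\kappa$ and the stationary point $w$ by starting at $\kappa=\infty$ and halving, with a monotonicity argument showing the returned pair satisfies $\kappa=\Theta(\E_{\weight}[v^\batch])+\Theta(\sigma)$; and (ii) a second filtering statistic, namely $v^\batch$ itself, with a weighted-quantile threshold $\thresholdA$ that provably dominates the good-batch median, yielding the dichotomy: either $\Var_{\weight}(v^\batch)$ is large, in which case a Type-1 multifilter on $v^\batch$ makes progress, or it is small, in which case $\E_{\weight}[v^\batch]=\Theta(\E_{\dist}[|w\cdot x_i^\batch-y_i^\batch|])\pm\cO(\sigma)$ is certified and hence $\kappa$ and the covariance threshold are valid. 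Without an equivalent of this dichotomy, the no-filter branch can accept a cluster whose covariance is small only relative to a corrupted threshold, and the final bound does not follow.

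Two smaller points. First, the good-batch expected \emph{clipped} gradient is not exactly $\Sigma(w-w^*)$; one needs the certified lower bound $\kappa\gtrsim\|w-w^*\|+\sigma$ to show the clipping bias is a small fraction of $\|w-w^*\|/\connum$ (the paper's Theorem on the clipped mean), so your ``expanding around $w^*$'' step again relies on the missing scale certificate. Second, your list-size accounting (charging each emission to a \emph{disjoint} sub-cluster of weight $\gtrsim\goodfrac$) would give $\cO(1/\goodfrac)$ if the clusters were truly disjoint; the multifilter's children overlap, and the correct $4/\goodfrac^2$ bound comes from the potential $\sum(\weight^{\allbatches})^2$ being nonincreasing across splits together with every retained leaf having total weight at least $\goodfrac|\allbatches|/2$.
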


Interestingly, for $\bsize = \tilde\Omega(1/\goodfrac)$, the estimation error of our polynomial algorithm has a better dependence on $\goodfrac$ than the best possible $\sigma/\goodfrac\sqrt{\log(1/\goodfrac)}$ for $\bsize = 1$, for any algorithm (with infinite resources) and 
even under more restrictive assumptions on the distribution of genuine data.

We restate the above result as the following corollary, which for a given $\epsilon, d$ and $\goodfrac$ characterizes the number of good batches $\sizegood$ and $\bsize$ required by Algorithm~\ref{alg:main} to achieve an estimation error $\cO(\epsilon\sigma)$.

\begin{corollary}
For any $0< \goodfrac <1$, $0\le \epsilon \le 1$, $n_{\min} = \Theta_{\connum,C}(\frac{\log^2(2/\goodfrac)}{\goodfrac \epsilon^2})$,  $\bsize \ge n_{\min} $, and $\sizegood = \Omega_{C}\mleft(d n_{\min}^2\log(d)  \mleft(\frac1\epsilon\mright)^{\frac8{(\npower-1)}}\mright)$, Algorithm~\ref{alg:main} runs in time poly$(\goodfrac, d, \epsilon)$ and returns a list $L$ of size at most $4/\goodfrac^2$ such that with probability $\ge 1-4/d^2$,
\[
\min_{w\in L}\|w-w^*\|\le\cO\mleft(\epsilon\sigma\mright).\] 
\end{corollary}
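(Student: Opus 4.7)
The plan is to deduce the corollary directly from Theorem~\ref{thm:mainresult} by a parameter substitution: pick $\bsize = n_{\min}$, so that the theorem's error bound collapses to the target $\cO(\epsilon\sigma)$, and then verify that the corollary's hypotheses on $n_{\min}$ and $\sizegood$ are strong enough to imply the theorem's hypotheses on $\bsize$ and $\sizegood$. Because the problem setting permits using only the first $\bsize$ samples of each batch, there is no loss in instantiating $\bsize = n_{\min}$.

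First, I would choose
\[
n_{\min} = \Theta\mleft(\frac{\connum^2 C^2 \log^2(2/\goodfrac)}{\goodfrac\,\epsilon^2}\mright),
\]
which is the $\Theta_{\connum,C}$ value stated in the corollary. With $\bsize = n_{\min}$, the theorem's error bound becomes
\[
\frac{\connum C \log(2/\goodfrac)}{\sqrt{\bsize\,\goodfrac}}\,\sigma \;=\; \cO(\epsilon\sigma),
\]
as desired. Moreover, since $\epsilon\le 1$, $n_{\min}$ is automatically at least $\Theta(\connum^2 C^2 \log^2(2/\goodfrac)/\goodfrac)$, so the lower bound on $\bsize$ required by Theorem~\ref{thm:mainresult} is satisfied.

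Second, I would translate the sample-complexity hypothesis. Plugging $\bsize = n_{\min}$ gives $\bsize\,\goodfrac = \Theta(\log^2(2/\goodfrac)/\epsilon^2)$, hence $\sqrt{\bsize\,\goodfrac}/\log(2/\goodfrac) = \Theta(1/\epsilon)$ and
\[
\mleft(\frac{\nhypcon\sqrt{\bsize\,\goodfrac}}{\log(2/\goodfrac)}\mright)^{\!8/(\npower-1)} \;=\; \Theta\mleft(\mleft(\frac{1}{\epsilon}\mright)^{\!8/(\npower-1)}\mright),
\]
where the $\nhypcon^{8/(\npower-1)}$ factor is absorbed into the hidden constant (the corollary's $\Omega_C$ and the paper's overall convention of treating distributional constants like $\nhypcon$, $\connum$, $\npower$ as fixed). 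This matches exactly the corollary's requirement $\sizegood = \Omega_C\bigl(d\,n_{\min}^2\log(d)\,(1/\epsilon)^{8/(\npower-1)}\bigr)$, so the sample-complexity hypothesis of Theorem~\ref{thm:mainresult} holds.

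Finally, the list-size bound $|L|\le 4/\goodfrac^2$ and the success probability $\ge 1-4/d^2$ are inherited verbatim from Theorem~\ref{thm:mainresult}, and the run-time $\mathrm{poly}(\sizegood,\goodfrac,d,\bsize)$ becomes $\mathrm{poly}(\goodfrac, d, 1/\epsilon)$ once $\connum, C, \nhypcon, \npower$ are treated as constants. There is no real obstacle here: the entire argument is a clean substitution. The only mild bookkeeping point is keeping track of which distributional constants are being folded into which $\Theta$/$\Omega$ subscript, so that the final $n_{\min}$ depends on $\connum, C$ while $\sizegood$ carries only the $C$ subscript together with the explicit $(1/\epsilon)^{8/(\npower-1)}$ factor.
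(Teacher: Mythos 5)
Your proposal is correct and follows essentially the same route as the paper, which states this corollary as a direct restatement of Theorem~\ref{thm:mainresult} obtained by substituting $\bsize = n_{\min} = \Theta(\connum^2C^2\log^2(2/\goodfrac)/(\goodfrac\epsilon^2))$ (truncating larger batches to their first $n_{\min}$ samples, as the problem setup allows) and absorbing the constants $\connum, C, \nhypcon$ into the asymptotic notation. Your handling of the $\bsize \ge n_{\min}$ case via truncation, which keeps the sample-complexity requirement tied to $n_{\min}$ rather than $\bsize$, is exactly the intended reading.
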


For $\epsilon = \Theta(1)$ in the above corollary, we get $n = \tilde \Omega(\frac{1}{\goodfrac})$ and $\sizegood = \tilde\Omega_{C}\mleft( \frac{d\log(d)}{\goodfrac^2}\mright)$.

Recall that we have assumed that the noise is $L\npower$-$L2$ hypercontractive for some $\npower\ge 2$. 
For sub-Gaussian noise, since $p= \infty$, hence 
the number of batches needed in the above corollary are $\sizegood = \Omega_{C}\mleft(d n_{\min}^2\log(d) \mright)$.

\begin{remark}
As discussed earlier, for the case where a majority of data is genuine, i.e. $\alpha >1/2$ then the algorithms in prior works~\cite{prasad2018robust,DiakonikolasKKLSS19,cherapanamjeri2020optimal} estimate the regression parameter even in the non-batch setting, and with a list of size 1. In particular,~\cite{cherapanamjeri2020optimal}, that requires $\cO(d/(1-\alpha)^2)$ genuine samples, runs in time linear in the number of samples and dimension $d$, and estimates the regression parameter $w^*$ to an $\ell_2$ distance $\cO(\connum\sqrt{(1-\alpha)}\sigma)$ for any $1-\alpha = \cO(\frac{1}{\connum^2})$, where $\connum$ is condition number of covariance matrix $\sigma$ of the covariates. A lower bound of $\Omega(\connum\sqrt{(1-\alpha)}\sigma)$ is also known for the non-batch setting.
The above results for the non-batch setting and $\alpha >1/2$ can be extended for the batch setting.
In the batch setting, by using batch gradients, instead of sample gradients in the algorithm of~\cite{cherapanamjeri2020optimal}, one can estimate the regression parameter $w^*$ to an $\ell_2$ distance $\cO(\connum\sqrt{(1-\alpha)}\sigma/\sqrt n)$. 
\end{remark}

\section{Technical Overview}\label{sec:techover}
\label{sec:contrib}



Recall that we have a collection $\allbatches$ of batches, each batch containing $\bsize$ samples.
An unknown sub-collection $\goodbatches\subseteq\allbatches$ of size $\sizegood\ge \goodfrac\size{\allbatches}$ has i.i.d. samples from $\dist$ and the remaining batches may have arbitrary samples.
We will work with the square loss of a regression weight $w$ i.e $f^\batch_i(w) = (w\cdot x_i^\batch-y_i^\batch)^2/2$ which denotes the loss function for $i^{th}$ sample of batch $\batch$ over parameter space $w$. The average loss in batch $b$ would be denoted by $f^\batch(w) = \frac{1}{n} \sum_i f^\batch_i(w)$.

In the presence of no adversarial batches the stationary point of the average of the losses across all batches i.e $\E_B[f^\batch(w)]$ would converge to $w = w^*$. However, even a single adversarial sample can cause this approach to fail catastrophically.


The gradient of the loss function $f^\batch_i(w) $ is $\sampgrad = (w\cdot x_i^\batch-y_i^\batch)\cdot x_i^\batch$.
Since samples in good batches are drawn from distribution $\dist$, for batch $\batch\in \goodbatches$ the expected value of the gradient is $\E_{\dist}[\sampgrad]= \Sigma\, (w-w^*)$.

When $\sizegood$ is sufficiently large,  
\[
\|\E{}_{\goodbatches}[\sampgrad]\| = \Big\|\frac{1}{\sizegood\bsize}\sum_{\batch\in \goodbatches}\sum_{i\in [\bsize]}\sampgrad  \Big\|\approx \|\E{}_{\dist}[\sampgrad]\| = \|\Sigma\, (w-w^*)\|.
\]
Suppose $\tilde w$ is a stationary point of the all samples, namely $\E{}_{\allbatches}[\nabla f^\batch_i (\tilde w)] = 0$.
If $\tilde w$ is far from $w^*$ then the mean of gradients for samples in good batches differ significantly from the mean of gradients for all samples. This difference ensures that the norm of the co-variance of the sample gradients is  at least $\|\Cov_{\allbatches}[\nabla f^\batch_i (\tilde w)]\|\ge \frac{\sizegood}{\size{\allbatches}}\|\E{}_{\goodbatches}[\nabla f^\batch_i (\tilde w)]-\E{}_{\allbatches}[\nabla f^\batch_i (\tilde w)]\| ^2
\approx \goodfrac\|\Sigma\, (\tilde w-w^*)\|^2$.

When the co-variance of good sample points is much smaller than the overall co-variance of all samples it is possible to iteratively divide or filter samples in two (possibly overlapping) clusters such that one of the clusters is ``cleaner" than the original~\cite{steinhardt2016avoiding,DiakonikolasKK20}.
So if we had $\|\Cov_{\allbatches}[\nabla f^\batch_i (\tilde w)]\|\gg \|\Cov_{\goodbatches}[\nabla f^\batch_i (\tilde w)]\|$ then we could have obtained a ``cleaner version" of $\allbatches$, that had a higher fraction of good batches.

However, for batch $\batch\in \goodbatches$ the norm of co-variance of gradients (of a single sample) is $\|\Cov_{\dist}[\nabla f^\batch_i (\tilde w)]\|=\Theta( \sigma^2 + \|\Sigma (\tilde w-w^*)\|^2)$ (using $L_2$ to $L_4$ hypercontractivity).
Even if we had $\|\Cov_{\goodbatches}[\nabla f^\batch_i (\tilde w)]\|\approx \|\Cov_{\dist}[\nabla f^\batch_i (\tilde w)]\|$, no matter how large the difference between the stationary point $w^*$ for the distribution $\dist$ and the stationary point $\tilde w$ for all samples is, since $\Theta( \sigma^2 + \|\Sigma (\tilde w-w^*)\|^2) =\Omega( \goodfrac\|\Sigma\, (\tilde w-w^*)\|^2) $ always holds, it fails to guarantee $\|\Cov_{\allbatches}[\nabla f^\batch_i (\tilde w)]\|\gg \|\Cov_{\goodbatches}[\nabla f^\batch_i (\tilde w)]\|$. We will now see that focusing on batch gradients rather than single sample gradients can alleviate this problem.

\paragraph{How batches help.} In the preceding approach we didn't leverage the batch structure. In fact~\cite{diakonikolas2021statistical} considered the non-batch setting and showed an SQ lower bound that suggests that a polynomial time algorithm for the non-batch setting may be impossible to achieve.

To take the advantage of the batch structure instead of considering the loss function and its gradient for each sample individually, we consider the loss of a batch and the gradient of the batch loss.

The loss function of a batch $\batch$ is $f^\batch(w) =\frac{1}{\bsize} \sum_{i=1}^\bsize f^\batch_i(w)$ i.e the average of the loss function in its samples. From the linearity of differentiation, the gradient of the loss function of a batch is the average of the gradient of the loss function of its samples.

Averaging preserves the expectation of the gradients, therefore, $\|\E{}_{\goodbatches}[\sampgrad]\|=\|\E{}_{\goodbatches}[\batchgrad]\|$ for any $w$. 
But averaging over $\bsize$ samples reduces the co-variance by a factor $\bsize$. Hence, for a good batch $\batch$, and for any $i\in [\bsize]$ and all $w$, we have $\Cov_{\dist}[\batchgrad]= \Cov_{\dist}[\sampgrad]/\bsize$.

And for $\sizegood$ large enough we will have population co-variance  
\[
\|\Cov_{\goodbatches}[\batchgradtilde]\| = \cO(\|\Cov_{\dist}[\batchgradtilde]\|) = \cO\mleft(\frac{\|\Cov_{\dist}[\sampgradtilde]\|}\bsize\mright)  = \cO\mleft(\frac{\sigma^2 + \|\Sigma (\tilde w-w^*)\|^2}\bsize\mright).
\]

If the batch size $\bsize = \Omega(\frac{\log^2(1/\goodfrac)}{\goodfrac})$ and $\tilde w$ is stationary point of average loss of all samples in $\allbatches$, then for a large value of $\|\tilde w-w^*\|=\Omega(\frac{\sigma\log(1/\goodfrac)}{\sqrt{\bsize\goodfrac}})$, we would have 
\[
\|\Cov_{\allbatches}[\batchgradtilde]\|\ge  \goodfrac\|\Sigma\, (\tilde w-w^*)\|^2 \gg  \log^2(1/\goodfrac) \cO\mleft(\frac{\sigma^2 + \|\Sigma (\tilde w-w^*)\|^2)}\bsize\mright)  \gg  \log^2(1/\goodfrac) \|\Cov_{\goodbatches}[\batchgradtilde]\| .
\]

Since the co-variance of good sample points is much smaller than the overall co-variance of all samples we can iteratively divide or filter samples in two (possibly overlapping) clusters such that one of the clusters is ``cleaner" than the original.
We will use \textsc{Multifilter} routine from~\cite{DiakonikolasKK20} for this purpose. Instead of hard clustering, this routine does soft clustering.
The soft clustering produces a membership or weight vector $\weight$ of length $|\allbatches|$ with each entry between $[0,1]$ that denotes the weight of the corresponding batch in the cluster.
While for simplicity we presented the argument for the initial cluster $B$, the same argument can be easily extended to any cluster that retains a major portion of good batches $\goodbatches$.

We start with initial cluster $\allbatches$.
We keep applying \textsc{Multifilter} routine iteratively on the clusters (or weight vectors) until covariance for all the clusters becomes small. The algorithm ensures that at least one of the clusters retains a major portion of good batches $\goodbatches$. From the discussion in the preceding paragraph, it follows that for any cluster that retains a major portion of good batches $\goodbatches$ the covariance is small only if stationary point $\tilde w$ of this cluster $\|\tilde w-w^*\|=\cO(\frac{\sigma\log(1/\goodfrac)}{\sqrt{\bsize\goodfrac}})$.
Therefore, there is at least one cluster in the end for whose stationary point $\tilde w$ satisfy $\|\tilde w-w^*\|=\cO(\frac{\sigma\log(1/\goodfrac)}{\sqrt{\bsize\goodfrac}})$. We also ensure that we do not have more than $\cO(1/\goodfrac^2)$ clusters at any stage. However, to apply this routine for our purpose we face additional challenges, which we address through several technical contributions.

\paragraph{Clipping to improve sample complexity.}
We would like to obtain a high probability concentration bound of $\|\Cov_{\goodbatches}[\batchgrad]\| \le \cO(\frac{\|w-w^*\|^2+\sigma^2}{\bsize})$ on the empirical covariance of the batch gradients in the good batches.
However, if we use known concentration bounds we would need a lot of good batches $\goodbatches$ (or samples). For instance, ~\cite{DiakonikolasKKLSS19} needed $d^5$ samples in total (for $\bsize=1$), and it can be shown that at least $d^2$ samples will be necessary using this approach.
~\cite{CherapanamjeriATJFB20} required $O(d)$ samples (for $\bsize=1$), but for each point $w$ they need to ignore certain samples. These samples can be different depending on $w$. While such guarantees sufficed for their application where a majority of data was genuine, it is unclear if it can be extended to the list-decodable setting.

To overcome this challenge we instead use Huber's loss. For a sample point $(x,y)$ the Huber loss at point $w$ is $(w\cdot x-y)^2/2$ for $|w\cdot x-y| \le \kappa$ and is linear in $|w\cdot x-y|$ otherwise. Here $\kappa$ is a \emph{clipping parameter} we choose in the algorithm. For a batch of samples, the Huber's loss is the average of Huber's loss of samples in this batch. From here onward we refer to Huber's loss as \emph{clipped loss}.

For any batch $\batch\in \allbatches$, the clipped loss for \emph{clipping parameter} $\kappa>0$ is given by
\[
f^\batch_i(w,\kappa) \ed
\begin{cases}
\frac{(w\cdot x_i^\batch-y_i^\batch)^2}2 & \text{if }|w\cdot x_i^\batch-y_i^\batch|\le \kappa\\
\kappa|w\cdot x_i^\batch-y_i^\batch|-\kappa^2/2 &\text{otherwise.}
\end{cases}
\]
Note that  $f^\batch(w,\kappa)\le f^\batch(w)$.

For {clipping parameter} $\kappa>0$, the gradient of clipped loss of $i^{th}$ sample of batch $\batch$ on point $w$ is
\[
\clipsampgrad\ed \frac{(x^\batch_i\cdot w-y^\batch_i)}{|x^\batch_i\cdot w-y^\batch_i|\vee \kappa} \kappa x^\batch_i. 
\]
We refer to the gradient of the clipped loss above as the \emph{clipped gradient}.
Recall that for a good batch $\batch\in\goodbatches$, $y^\batch_i= w^*\cdot x_i^\batch+n_i^\batch$, hence 
\begin{align}\label{eq:gradexp}
\clipsampgrad= \frac{(x^\batch_i\cdot (w-w^*)-n^\batch_i)}{|x^\batch_i\cdot (w-w^*)-n^\batch_i|\vee \kappa} \kappa x^\batch_i.
\end{align}

For a batch $b$, its loss, clipped loss, gradient, and clipped gradient are simply the average of the respective quantity over all its samples.
\emph{Loss} and \emph{clipped loss} for a batch $\batch$ at point $w$ is $f^\batch(w) \ed \frac{1}{\bsize}\sum_{i\in[\bsize]}f^\batch_i(w)$ and 
$f^\batch(w,\kappa) \ed \frac{1}{\bsize}\sum_{i\in[\bsize]}f^\batch_i(w,\kappa)$, respectively. 
Similarly, \emph{gradient} and \emph{clipped gradient} for a batch $\batch$ at point $w$ are $\batchgrad\ed\batchgradexp$ and $\clipbatchgrad\ed\clipbatchgradexp$, respectively. 
From the linearity of gradients, it follows that $\clipbatchgrad$ and $\batchgrad$ are gradients of $f^\batch(w,\kappa)$ and $f^\batch(w)$, respectively.

\paragraph{Ideal choice of Clipping parameter.}
Note that when the clipping parameter $\kappa\to \infty$, then clipped loss is the same as squared loss.
The other extreme is $\kappa \to 0$ which clips too much and potentially introduces bias in the gradients and makes their norm close to zero for a point $w$ far away from $w^*$.
Theorem~\ref{th:clipmean} shows that as long as clipping parameter $\kappa = \Omega(\|w-w^*\|) + \Omega_{\bsize\goodfrac}(\sigma)$
the expected norm of the 
clipped gradient $\|\E_{\dist}[\clipsampgrad]\| = \Omega(\|w-w^*\|) - \tilde\cO(\sigma/\sqrt{\bsize \goodfrac})$.
Therefore,  as long as $\|w-w^*\| =  \tilde\Omega(\sigma/\sqrt{\bsize \goodfrac})$, the norm of the clipped gradient at $w$ is $\|\E_{\dist}[\clipsampgrad]\| = \Omega(\|w-w^*\|)$, which is of the same order as the unclipped gradients.

Furthermore, Theorem~\ref{th:conofcov} shows for all points $w$, and for any clipping parameter $\kappa = \cO(\|w-w^*\|) + \cO_{\bsize\goodfrac}(\sigma)$ the covariance of the clipped gradients satisfies $\|\Cov_{\goodbatches}[\clipbatchgrad]\| \le \cO(\frac{\|w-w^*\|^2+\sigma^2}{\bsize})$ with only $\tilde O_{\bsize,\goodfrac}(d)$ samples. The same bound on the covariance of the un-clipped gradients would instead require $\Omega(d^2)$ samples. 

If we choose clipping parameter $\kappa \gg \Omega(\|w-w^*\|) + \Omega_{\bsize\goodfrac}(\sigma)$ then we pay in terms of the number of samples required for the covariance bound to hold. 
If we choose the clipping parameter to be too small then the clipping operation may cause $\|\E_{\dist}[\clipsampgrad]\|$ to be small even when $\|w-w^*\|$ is large.

For the ideal choice of clipping parameter $\kappa = \Theta(\|w-w^*\|) + \Theta_{\bsize\goodfrac}(\sigma)$ we need a constant factor approximation of $\|w-w^*\|$.
Such an approximation of $\|w-w^*\|$ is also required for estimating the upper bound on the norm of covariance of the clipped gradients $\|\Cov_{\goodbatches}[\clipbatchgrad]\|$.
Further, we need to do this estimation each time we update $w$.

\paragraph{Estimation of $\|w-w^*\|$.} To estimate $\|w-w^*\|$, again we designed a multi-filter type algorithm. For any (soft) cluster of batches $\weight$, that has more than $3\goodfrac\size{\allbatches}/4$ weight of good batches we either produce a cleaner version of this cluster or else find $\kappa$, $w$ and an estimate of $\|w-w^*\|$ such that 
(1) $w$ is a stationary point for clipped gradients $\E_{\weight}[f^\batch(\tilde{w})]$ , (2) $\kappa = \Theta(\E_{\weight}[\frac{1}{\bsize}\sum_{i\in[\bsize]}|w\cdot x_i^\batch-y_i^\batch|]) + \Theta_{\bsize\goodfrac}(\sigma)$ and 3) $\E_{\weight}[\frac{1}{\bsize}\sum_{i\in[\bsize]}|w\cdot x_i^\batch-y_i^\batch|] = \Theta(\|w-w^*\|)\pm \cO(\sigma)$.

Let $v^\batch(w) = \frac{1}{\bsize}\sum_{i\in[\bsize]}|w\cdot x_i^\batch-y_i^\batch|$. 
The purpose of bound on $\E_{\weight}[v^\batch(w)]$ in the third condition is two folds. First, it ensures that the choice of the clipping parameter $\kappa$ falls in the ideal range discussed before. Second, it is used to estimate an upper bound on $\|\Cov_{\goodbatches}[\clipbatchgrad]\|$.

We first derived a subroutine \textsc{FindClippingPparameter} (Algorithm~\ref{alg:clip}) that for any soft cluster $\weight$ finds a $w$ and $\kappa$ such that $w$ is a stationary point for clipped gradients $\clipbatchgrad$ for batches in the cluster and $\kappa = \Theta(\E_{\weight}\mleft[v^\batch(w)\mright])+ \Theta_{\bsize\goodfrac}(\sigma)$. 
Note that is the stationary point $w$ of the clipped loss and hence depends on the clipping parameter $\kappa$, which itself depends on $w$.
\textsc{FindClippingPparameter} overcomes the challenge posed due to the intertwined nature of the two parameters.

For a cluster $\weight$ if the variance of $v^\batch(w)$ is much higher compared to the variance of $v^\batch(w)$ for good batches then we apply multi-filtering of batches based on $v^\batch(w)$ to produce a cleaner cluster and if the variance of $v^\batch(w)$ is not too high then we show that expected value of $v^\batch(w)$ in the cluster is $\Theta(\|w-w^*\|)\pm\cO(\sigma)$. We explain more details of this algorithm as follows.

For $\sizegood= \tilde\Omega(d)$, we prove the following concentration bound
\begin{align}\label{eq:techovers}
\Var_{\goodbatches}\mleft(v^\batch(w)\mright)\le \E{}_{\goodbatches}\mleft[(v^\batch(w)-\E{}_{\dist}\mleft[v^\batch(w) \mright])^2\mright] =\cO\mleft(\frac{\sigma^2+ \E{}_{\dist}[v^\batch(w)]^2}{\bsize}\mright).    
\end{align}
From the above bound, it follows that for most good batches $v^\batch(w) = \Theta(\E{}_{\dist}[v^\batch(w)])\pm\cO(\sigma)$. For a weight vector (or soft cluster) $\weight$ with enough good batches, we can estimate $\thresholdA$ such that $\thresholdA = \Omega(\E{}_{\dist}[v^\batch(w)]-\sigma)$.
Then from Equation~\eqref{eq:techovers} and this bound on $\thresholdA$, we have  $\Var_{\goodbatches}\mleft(v^\batch(w)\mright) = \cO(\frac{(\thresholdA +\sigma)^2}\bsize)$.  

If for a weight vector $\weight$ the variance $\Var_{\weight}(v^\batch(w))$ is much larger than our estimate of the variance of good batches then we can apply filtering for this cluster using $v^\batch(w)$. In this case, the requirement $\E_\weight[v^\batch(w)]  = \Theta(\|w-w^*\|)\pm\cO(\sigma)$ is irrelevant, as we will not be applying the multifiltering using gradients.

When for the weight vector $\weight$ the variance of $\Var_{\weight}(v^\batch(w))$ is bounded, and $\weight$ has a significant weight of good batches then using the fact that for most good batches $v^\batch(w) = \Theta(\E{}_{\dist}[v^\batch(w)])\pm\cO(\sigma)$, 
we show that $\E_\weight[v^\batch(w)]  = \Theta(\|w-w^*\|)\pm\cO(\sigma)$.

Hence, in the (estimation part) either we apply Multi-filter algorithm for the cluster to obtain new clusters with one of them being cleaner, or else our estimate of the parameters is in the desired range to apply the multi-filtering w.r.t. the gradients.

\section{Algorithm and Proof of Theorem~\ref{thm:mainresult}}\label{sec:alg}

We present our main algorithm as Algorithm~\ref{alg:main}. The algorithm starts with an initial weight vector $\weight_{init}$ which has equal weight on all the batches. It then iteratively refines these weights until it ends up with $O(1/\alpha^2)$ soft clusters. We show that at least one of these soft clusters will have a stationary point that satisfies the guarantees in Theorem~\ref{thm:mainresult}.

\begin{algorithm}[!th]
   \caption{\textsc{MainAlgorithm}}
   \label{alg:main}
\begin{algorithmic}[1]
   \STATE {\bfseries Input:} Data $\{\{(x_i^\batch,y_i^\batch)\}_{i\in[\bsize]}\}_{\batch\in\allbatches}$, $\goodfrac$, $C$, $\sigma$,  $\npower$, $\nhypcon$.
   \STATE{For each $\batch\in \allbatches$, $\weight_{init}^\batch\gets1$ and $\weight_{init}\gets\{\weight_{init}^\batch\}_{\batch\in\allbatches}$.}
   \STATE{List $L \gets \{\weight_{init}\}$ and $M\gets \emptyset$.}
   \WHILE{ $L\neq \emptyset$}
        \STATE{Pick any element $\weight$ in $L$ and remove it from $L$.}
      
    \STATE{$a_1 = \frac{256C\sqrt{2}}{3}$ and $a_2 = \frac{a_1}{4} + 2\max\{2(8\nhypcon C)^{1/\npower}, 2(\frac{8\nhypcon\sqrt{\bsize\goodfrac} }{\log(2/\goodfrac)})^{1/{(\npower-1)}}\}$}       
    \STATE{$\kappa, w\gets \textsc{FindClippingPparameter}(\allbatches$, $\weight$,  $a_1$, $a_2$  $\{\{(x_i^\batch,y_i^\batch)\}_{i\in[\bsize]}\}_{\batch\in\allbatches})$}
     \STATE{Find top approximate unit eigenvector $u$ of $\Cov_{\weight}(\clipbatchgrad)$.}
    \STATE{For each batch $\batch\in \allbatches$, let $v^\batch = \frac{1}{\bsize}\sum_{i\in[\bsize]}|w\cdot x_i^\batch-y_i^\batch|$ and  $\tilde v^\batch =\clipbatchgrad\cdot u$.}
     \STATE{$\thresholdA\gets \inf\{v: \weight(\{\batch:v^\batch\ge v\})\le \goodfrac\size{\allbatches}/4$ and}
    \begin{align}\label{eq:inalgthres}
                \textstyle\thresholdB\gets \frac{\ucb}{\bsize}\mleft(\sigma^2+ \mleft(\frac{8\sqrt C\thresholdA}7+\frac{\sigma}7\mright)^2\mright).
    \end{align}
   \STATE{
    \begin{align}\label{eq:inalgthresb}
    \textstyle\thresholdC\gets \frac{\uccb}{\bsize}\mleft(\sigma^2 + 16C^2 \mleft(\E{}_{\weight}[v^\batch] +\sigma\mright)^2\mright).
    \end{align}
    }
   \IF{$\text{Var}_{B,\weight}(v^\batch)> {\ucmf\log^2(2/\goodfrac)}\thresholdB$}
   \STATE $\textsc{NewWeights}\gets$\textsc{Multifilter}($\allbatches, \goodfrac, \weight, \{v^\batch\}_{\batch\in\allbatches}, \thresholdB$)  \COMMENT{\textbf{Type-1 use}}
   \STATE{Append each  weight vector $\widetilde \weight\in \textsc{NewWeights}$ that has total weight $\widetilde \weight^{\allbatches} \ge \goodfrac\size{\allbatches}/2$ to list $L$.} 
    \ELSIF{$\text{Var}_{B,\weight}(\tilde v^\batch)> {\ucmf\log^2(2/\goodfrac)}\thresholdC$}
   \STATE $\textsc{NewWeights}\gets$\textsc{Multifilter}($\allbatches, \goodfrac, \weight, \{\tilde v^\batch\}_{\batch\in\allbatches}, \thresholdC).$ \COMMENT{\textbf{Type-2 use}}
   \STATE{Append each  weight vector $\widetilde \weight\in \textsc{NewWeights}$ that has total weight $\widetilde \weight^{\allbatches} \ge \goodfrac\size{\allbatches}/2$ to list $L$.}
   \ELSE
   \STATE{Append $(\weight,\kappa,w)$ to $M$.}
    \ENDIF
   \ENDWHILE
\end{algorithmic}
\end{algorithm}

In the remainder of the section, we describe the algorithm and prove Theorem~\ref{thm:mainresult}. 
The proof will make use of the following regularity conditions and the results proved later in the appendix.

\paragraph{Regularity Conditions.} Our proof will make use of the following two regularity conditions.

For all unit vectors $u$, all vectors $w$ and for all $\kappa\le c_7\mleft( C^2\sqrt{\E{}_\dist[|x_i^\batch\cdot (w-w^*)|^2]}+ (\nhypcon C)^{1/\npower}\sigma+ (\frac{\nhypcon\sqrt{\bsize\goodfrac} }{\log(2/\goodfrac)})^{1/{(\npower-1)}}\sigma \mright)$, 
\begin{align*}
&\E{}_{\goodbatches}\mleft[\mleft(\clipbatchgrad \cdot u - \E{}_\dist[\clipbatchgrad\cdot u] \mright)^2\mright]\le \uccb\frac{\sigma^2+ C \E{}_{\dist}[((w-w^*)\cdot x_i^\batch)^2]}{\bsize},
\end{align*}
and for all vectors $w$, 
\begin{align*}
\textstyle &\E{}_{\goodbatches}\mleft[\mleft(\frac{1}{\bsize}\textstyle\sum_{i\in[\bsize]}|w\cdot x_i^\batch-y_i^\batch|-\E{}_{\dist}[|w\cdot x_i^\batch-y_i^\batch|]\mright)^2\mright]\le \ucb\mleft(\frac{\sigma^2+ C\E{}_{\dist}[|w\cdot x_i^\batch-y_i^\batch|]^2}{\bsize}\mright).
\end{align*}

In Section~\ref{sec:regcon}, we show that the two regularity condition holds with high probability when the number of good batches $\sizegood=\tilde O_{\bsize,\goodfrac}(d)$.


As a simple consequence, the first regularity condition implies that
\begin{equation}\|\mathrm{Cov}_{\goodbatches}(\clipbatchgrad) \|\le \uccb\frac{\sigma^2+ C \E{}_{\dist}[((w-w^*)\cdot x_i^\batch)^2]}{\bsize}, \label{eq:covbound} \end{equation}
and similarly, the second regularity condition implies that
\begin{equation}\label{eq:covbound2}
\textstyle \Var_{\goodbatches}\mleft(\frac{1}{\bsize}\textstyle\sum_{i\in[\bsize]}|w\cdot x_i^\batch-y_i^\batch|\mright)\le \ucb\mleft(\frac{\sigma^2+ C\E{}_{\dist}[|w\cdot x_i^\batch-y_i^\batch|]^2}{\bsize}\mright).
\end{equation}

\begin{proof}[Proof of Theorem~\ref{thm:mainresult}]
Algorithm~\ref{alg:main} starts with $L = \{\weight_{init}\}$. 
The initial weight vector $\weight_{init}$ assigns an equal weight 1 to each batch in $B$. The total weight of all batches and all good batches in the initial weight vector is $|\allbatches|$ and $\sizegood$, respectively.

While weight vectors correspond to a soft cluster of batches, for simplicity, in the first read one may think of it as just denoting a cluster of batches or a subset of $B$.

Any weight vector $\weight$ is a \emph{nice} weight vector if the total weight assigned to all good batches by it is at least $\weight^{\goodbatches}\ge 3\sizegood/4$. Note that the starting weight vector $\weight_{init}$ in $L$ is nice. 

The algorithm produces a list $M$ of at-most $4/\goodfrac^2$ triplets of weight vectors $\weight$, clipping parameter $\kappa$ and estimate $w$ for regression parameter, such that at least one of the triplets $(\weight,\kappa, w)\in M$ satisfies the following four conditions.
\begin{condition}\label{cond:all}
$(\weight,\kappa, w)$ satisfies
\begin{enumerate}[(a)]
    \item $\weight$ is a nice weight vector, i.e. $\weight^{\goodbatches}\ge 3\sizegood/4$. \label{cond:nice}
    \item $ \max\{8 \sqrt{C\E{}_\dist[|x_i^\batch\cdot (w-w^*)|^2]}, 2(8\nhypcon C)^{1/\npower}\sigma, 2(\frac{8\nhypcon\sqrt{\bsize\goodfrac} }{\log(2/\goodfrac)})^{1/{(\npower-1)}}\sigma \} \le \kappa$ and \\
    $\kappa\le c_7\mleft( C^2\sqrt{\E{}_\dist[|x_i^\batch\cdot (w-w^*)|^2]}+ (\nhypcon C)^{1/\npower}\sigma+ (\frac{\nhypcon\sqrt{\bsize\goodfrac} }{\log(2/\goodfrac)})^{1/{(\npower-1)}}\sigma \mright)$
    \label{cond:kappa}
    \item $w$ is any approximate stationary point w.r.t. $\weight$ for clipped loss with clipping parameter $\kappa$, namely $\|\E_{\weight}[\clipbatchgrad]\|\le \frac{\log(2/\goodfrac)\sigma}{8\sqrt{\bsize\goodfrac}}$. \label{cond:stat}
    \item $\|\Cov_{\weight}(\clipbatchgrad)\| \le \frac{\uccc C^2\log^2(2/\goodfrac)(\sigma^2 + \E{}_\dist[|(w-w^*)\cdot x_i^\batch |]^2)}{\bsize}$. \label{cond:cov}
\end{enumerate}
In the above conditions $c_7,\uccc>0$ are positive universal constants.
\end{condition}

Any triplet $(\weight,w,\kappa)$ that satisfy all of the above condition is a \emph{nice triplet}.
Theorem~\ref{th:tobenamed} shows that for any nice triplet $(\weight,w,\kappa)$, we have $\|w-w^*\|\le\cO(\frac{\connum C\sigma\log(2/\goodfrac)}{\sqrt{\bsize\goodfrac}})$.
To prove the theorem then it suffices to show that the algorithm returns a small list of triplets such that at least one of them is nice.


Algorithm~\ref{alg:main} starts with initial weight vector $\weight_{init}$ in $L$. For this weight vector $\weight_{init}^\goodbatches = \sizegood$, hence it is a nice weight vector.
In each while loop iteration, the algorithm picks one of the weight vectors $\weight$ from the list $L$ until the list $L$ is empty.
Then it uses subroutine \textsc{FindClippingPparameter} for this weight vector $\weight$, which returns $\kappa$ and $w$.
Theorem~\ref{th:clipparbound} in the Appendix~\ref{sec:clippar} shows that the parameters $w$ and $\kappa$ satisfy:
\begin{enumerate}
    \item $w$ is an approximate stationary point for $\{f^\batch(\cdot,\kappa)\}$ w.r.t. weight vector $\weight$.
    \item $ \max\mleft\{\frac{a_1}2 \E_{\weight}\mleft[\frac{1}{\bsize}\sum_{i\in[\bsize]}|w_\kappa\cdot x_i^\batch-y_i^\batch|\mright], a_2\sigma\mright\} \le \kappa \le  \max\mleft\{4 a_1^2 \E_{\weight}\mleft[\frac{1}{\bsize}\sum_{i\in[\bsize]}|w_\kappa\cdot x_i^\batch-y_i^\batch|\mright], a_2\sigma\mright\}.$ 
\end{enumerate}

After \textsc{FindClippingPparameter} sets $\kappa$ and $w$ for the weight vector $\weight$, we calculate parameters $\thresholdA$, $\thresholdB$ and $\thresholdC$.
For a batch $\batch\in \allbatches$, the algorithm defines $v^\batch := \frac{1}{\bsize}\sum_{i\in[\bsize]}|w\cdot x_i^\batch-y_i^\batch|$.

When $\text{Var}_{\weight}\mleft(v^\batch\mright)\ge \ucmf\log^2(2/\goodfrac) \thresholdB$ then \textsc{MainAlgorithm} uses sub-routine \textsc{Multifilter} (Algorithm~\ref{alg:basicmult} stated in Appendix~\ref{sec:mult}) on weight vector $\weight$ w.r.t. $v^\batch$.
This subroutine returns a list \textsc{NewWeights} of weight vectors as a result. 
We refer to application of \textsc{Multifilter} w.r.t. $v^\batch$ as \emph{Type-1 application}. \textsc{MainAlgorithm} appends weight vectors in  \textsc{NewWeights}  that have total weights more than $\goodfrac|\allbatches|/2$ to list $L$ and the iteration terminates.

Equation~\eqref{eq:covbound2} upper bounds $\text{Var}_{\goodbatches}\mleft(v^\batch\mright)$. Combining this upper bound with
Theorem~\ref{thm:contyp1} shows that if weight vector $\weight$ is nice then parameter $\theta_1$ calculated is such that
\begin{align*}
\text{Var}_{\goodbatches}\mleft(v^\batch\mright)\le \thresholdB.
\end{align*}

Type-1 application of \textsc{Multifilter} is only used on $\weight$, when we have
\begin{align*}
\text{Var}_{\weight}\mleft(v^\batch\mright)\ge \ucmf\log^2(2/\goodfrac) \theta_1.     
\end{align*}

This ensures that Type-1 application on a good weight $\weight$ only takes place when,
\begin{align}\label{eq:nicea}
\text{Var}_{\weight}\mleft(v^\batch\mright)\ge \ucmf\log^2(2/\goodfrac) \text{Var}_{\goodbatches}\mleft(v^\batch\mright).     
\end{align}

Let $\tilde v^\batch :=\clipbatchgrad\cdot u$, where $u$ is a top approximate unit eigen-vector of $\Cov_{\weight}(\clipbatchgrad)$ such that $\| \Cov_{\weight}(\clipbatchgrad)u\| \ge 0.5 \|\Cov_{\weight}(\clipbatchgrad)\|$.
We make use of $\tilde v^\batch$ when the iteration doesn't terminate by Type-1 filtering.

If the variance of $\Var_{\weight}(\tilde v^\batch)\ge {\ucmf\log^2(2/\goodfrac)}\thresholdC$ then  \textsc{MainAlgorithm} use sub-routine \textsc{Multifilter} (algorithm~\ref{alg:basicmult}) on weight vector $\weight$ w.r.t. $\tilde v^\batch$.
As before this subroutine returns a list \textsc{NewWeights} of weight vectors as a result.
As before \textsc{MainAlgorithm} appends weight vectors in \textsc{NewWeights} that have total weights more than $\goodfrac|\allbatches|/2$ to list $L$ and the iteration terminates.
We refer to application of \textsc{Multifilter} w.r.t. $\tilde v^\batch$ as \emph{Type-2 application}. 

On the other hand, when the variance of $\tilde v^\batch$ is small then the iteration terminates by appending $(\weight,\kappa,w)$ to $M$.

{Now we prove that $M$ has at least one nice triplet and size of $M$ is not too large.}
To prove that $M$ has at least one nice triplet we show that $M$ ends up with at least one triplet $(\weight,w,\kappa)$ in which $\weight$ satisfies condition~\ref{cond:nice}.
Then we show that for any triplet $(\weight,w,\kappa)$, which ends in $M$, if $\weight$ satisfy condition~\ref{cond:nice} then it satisfies the remaining conditions as well, and therefore is a nice triplet.

To prove this we show that whenever Type-2 use of multi-filter happens for a nice weight vector then
\begin{align}\label{eq:niceb}
\text{Var}_{\weight}\mleft(\tilde v^\batch\mright)\ge \ucmf\log^2(2/\goodfrac)\text{Var}_{\goodbatches}\mleft(\tilde v^\batch\mright).    
\end{align}

Theorem~\ref{th:temp} in Appendix~\ref{sec:mult} shows that as long as for all Type-1 use of this subroutine on nice weight vectors $\weight$ Equation~\eqref{eq:nicea} holds, and for all Type-2 use of this subroutine on nice weight vectors $\weight$, Equation~\eqref{eq:niceb} hold, then at least one of the triplet in the final list $M$ would contain some nice weight vector.
It also shows that the size of $M$ is at most $4/\goodfrac^2$ and at most $\cO(|\allbatches|/\goodfrac^2)$ calls to \textsc{Multifilter} are made. 
Since each iteration ends by a call to \textsc{Multifilter} or by adding a triplet to $M$, hence, the total number of iterations is at most $\cO(|\allbatches|/\goodfrac^2)$.

We have already shown Equation~\eqref{eq:nicea} holds for Type-1 use of \textsc{Multifilter}.

Type-1 use of \textsc{Multifilter} works for all values of $w$ and $\kappa$. But for  Type-2 use of \textsc{Multifilter} the condition $\text{Var}_{\weight}\mleft(v^\batch\mright)\le \ucmf\log^2(2/\goodfrac) \thresholdB$ will be crucial for correctly estimating $\kappa$ and $\thresholdC$. 
The role of Type-1 use is to make progress when this is not the case.

To conclude the proof of Theorem~\ref{thm:mainresult} we show Equation~\eqref{eq:niceb}, and any triplet in $M$ containing a nice weight vector must satisfy the remaining conditions for nice triplets.

The bound $\|\mathrm{Cov}_{\goodbatches}(\clipbatchgrad) \|\le \uccb\frac{\sigma^2+ C \E{}_{\dist}[((w-w^*)\cdot x_i^\batch)^2]}{\bsize}$ in~\eqref{eq:covbound} requires that $\kappa$ obeys the upper bound in condition~\ref{cond:kappa}. Hence, to show Equation~\eqref{eq:niceb} we must show that for Type-2 use of \textsc{Multifilter}, $\kappa$ obeys the upper bound in condition~\ref{cond:kappa} and $\thresholdC\ge  \uccb\frac{\sigma^2+ C \E{}_{\dist}[((w-w^*)\cdot x_i^\batch)^2]}{\bsize}$.
Also, when any triplet with a nice weight vector is added to $M$, we want to ensure
$\kappa$ obeys condition~\ref{cond:kappa} and $\thresholdC\le  \frac{\uccc C^2(\sigma^2 + \E{}_\dist[|(w-w^*)\cdot x_i^\batch |]^2)}{2\ucmf \bsize}$. Observe that a triplet $(\weight,w,\kappa)$ is added to $M$ only if $\text{Var}_{B,\weight}(\tilde v^\batch)\le  {\ucmf\log^2(2/\goodfrac)}\thresholdC$. From the definition of $\tilde v^\batch$ this would imply $ \|\Cov_{\weight}(\clipbatchgrad)\| \le {2\ucmf\log^2(2/\goodfrac)}\thresholdC$, hence the mentioned lower bound on $\thresholdC$ would ensure the the last condition~\ref{cond:cov} of nice triplet as well.

From the preceding discussion it follows that to prove~Theorem~\ref{thm:mainresult} it suffices to show that when $\text{Var}_{\weight}\mleft(v^\batch\mright)\le \ucmf\log^2(2/\goodfrac) \thresholdB$ and if weight vector $\weight$ is nice then $\kappa$ returned by subroutine \textsc{FindClippingPparameter} obeys condition~\ref{cond:kappa} and 
\[
\uccb\frac{\sigma^2+ C \E{}_{\dist}[((w-w^*)\cdot x_i^\batch)^2]}{\bsize}\le \thresholdC\le  \frac{\uccc}{2\ucmf}\frac{ C^2(\sigma^2 + \E{}_\dist[|(w-w^*)\cdot x_i^\batch |]^2)}{ \bsize}.
\]

We prove such a guarantee in Theorem~\ref{thm:bounds}. This concludes the proof of Theorem~\ref{thm:mainresult}

\end{proof}

\bibliographystyle{alpha-all}
\bibliography{biblio}
\clearpage
\appendix

\section{Regularity conditions}\label{sec:regcon}
In this section, we state regularity conditions for genuine data used in proving the guarantees of our algorithm.

\paragraph{Regularity Conditions.} 
\begin{enumerate}
    \item For all $\kappa\le c_7\mleft( C^2\sqrt{\E{}_\dist[|x_i^\batch\cdot (w-w^*)|^2]}+ (\nhypcon C)^{1/\npower}\sigma+ (\frac{\nhypcon\sqrt{\bsize\goodfrac} }{\log(2/\goodfrac)})^{1/{(\npower-1)}}\sigma \mright)$, all unit vectors $u$ and all vectors $w$
\begin{align*}
&\E{}_{\goodbatches}\mleft[\mleft(\clipbatchgrad \cdot u - \E{}_\dist[\clipbatchgrad\cdot u] \mright)^2\mright]\le \uccb\frac{\sigma^2+ C \E{}_{\dist}[((w-w^*)\cdot x_i^\batch)^2]}{\bsize},
\end{align*}
\item 
For all vectors $w$, 
\begin{align*}
 &\E{}_{\goodbatches}\mleft[\mleft(\frac{1}{\bsize}\sum_{i\in[\bsize]}|w\cdot x_i^\batch-y_i^\batch|-\E{}_{\dist}[|w\cdot x_i^\batch-y_i^\batch|]\mright)^2\mright]\le \ucb\mleft(\frac{\sigma^2+ C\E{}_{\dist}[|w\cdot x_i^\batch-y_i^\batch|]^2}{\bsize}\mright).
\end{align*}
\end{enumerate}

The first regularity condition on the set of good batches $\goodbatches$, bounds the mean squared deviation of projections of clipped batch gradients from its true population mean.
The regularity condition requires clipping parameter $\kappa$ to be upper bounded, with the upper bound depending on $\|w-w^*\|$ and $\sigma$.

As discussed in Section~\ref{sec:techover}, when $\kappa\to \infty$, the clipping has no effect, and establishing such regularity condition for unclipped gradients would require 
$\Omega(d^2)$ samples. 
By using clipping, and ensuring that clipping parameter $\kappa$ is in the desired range we are able to achieve $\tilde O_{\bsize,\goodfrac}(d)$ sample complexity.

Theorem~\ref{th:conofcov} characterizes the number of good batches required for regularity condition 1 as a function of the upper bound on $\kappa$.


\begin{theorem}\label{th:conofcov} 
There exist a universal constant $\uccb$ such that for $\mu_{\max}\in [1,\frac{d^4\bsize^2}{C}]$ and $\sizegood =\Omega(\mu_{\max}^4\bsize^2 d\log(d))$, with probability  $\ge 1-\frac{4}{d^2}$, for all unit vectors $u$, all vectors $w$ and for all $\kappa^2\le \mu_{\max}(\sigma^2+ C \E{}_{\dist}[((w-w^*)\cdot x_i^\batch)^2])$, 
\begin{align}\label{eq:eqform}
&\E{}_{\goodbatches}\mleft[\mleft(\clipbatchgrad \cdot u - \E{}_\dist[\clipbatchgrad\cdot u] \mright)^2\mright]\le \uccb\frac{\sigma^2+ C \E{}_{\dist}[((w-w^*)\cdot x_i^\batch)^2]}{\bsize}.
\end{align}
\end{theorem}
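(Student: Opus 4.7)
My plan is to prove Theorem~\ref{th:conofcov} in three stages: a population-level variance bound, a pointwise empirical concentration via Bernstein, and a uniform lift through an $\epsilon$-net over the parameters $(w,u,\kappa)$. For bookkeeping, let $X^\batch := \clipbatchgrad \cdot u$ and let $T := \sigma^2 + C\,\E_{\dist}[((w-w^*)\cdot x_i^\batch)^2]$ denote the target rate appearing on the right-hand side of~\eqref{eq:eqform} (up to the $1/\bsize$ factor).

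First, for the population bound at a fixed $(w,u,\kappa)$, the $\bsize$ samples of a good batch are i.i.d.\ from $\dist$, so $\Var_{\dist}(X^\batch) = \Var_{\dist}(\clipsampgrad \cdot u)/\bsize$. Bounding $|\clipsampgrad \cdot u| \le |x_i^\batch \cdot w - y_i^\batch|\,|x_i^\batch \cdot u|$, substituting $y_i^\batch = x_i^\batch \cdot w^* + n_i^\batch$, and applying $L4$-$L2$ hypercontractivity of $x_i^\batch$ together with the independence of $n_i^\batch$ from $x_i^\batch$ yields $\Var_{\dist}(X^\batch) \le 2T/\bsize$. For pointwise empirical concentration, the a.s.\ bound $\|x_i^\batch\| \le C_1\sqrt{d}$ gives $|\clipsampgrad \cdot u| \le \kappa C_1 \sqrt{d}$, hence $(X^\batch - \E_{\dist}[X^\batch])^2 \le 4\kappa^2 C_1^2 d \le 4\mu_{\max} C_1^2 d\, T$ using the hypothesis on $\kappa$. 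Since these are non-negative i.i.d.\ random variables with mean $\le 2T/\bsize$ and deterministic envelope $4\mu_{\max} C_1^2 d\, T$, a one-sided Bernstein inequality yields $\E_{\goodbatches}[(X^\batch - \E_{\dist}[X^\batch])^2] \le O(T/\bsize)$ with probability $1-\delta$ provided $\sizegood = \tilde\Omega(\mu_{\max}\bsize d \log(1/\delta))$ at a single parameter choice.

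To pass from a fixed $(w,u,\kappa)$ to a uniform bound I would combine this with an $\epsilon$-net: a standard sphere net of log-cardinality $O(d\log(1/\epsilon))$ for $u$; a geometric net of logarithmic cardinality for $\kappa$; and, for $w$, a net over $w-w^*$ discretized into polynomially many concentric Mahalanobis shells (so $T$ is essentially constant on each shell) with a sphere net within each shell. The key structural fact is that $\clipsampgrad \cdot u$ is Lipschitz in $w$ with constant at most $\|x_i^\batch\|^2 \le C_1^2 d$ -- it coincides with the smooth bilinear form $(x_i^\batch \cdot w - y_i^\batch)(x_i^\batch \cdot u)$ in the unclipped region and is piecewise constant in the clipped region, with matching values across the boundary -- so a granularity $\epsilon_w = \mathrm{poly}(1/\mu_{\max},1/\bsize,1/d)$ suffices to drive the discretization error below the target rate $T/\bsize$. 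A union bound of the pointwise Bernstein step over this net, taking $\delta$ of order $1/d^{O(1)}$ to achieve failure probability $\le 4/d^2$, produces~\eqref{eq:eqform} for all triples simultaneously.

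The main obstacle is matching the sample bound $\sizegood = \Omega(\mu_{\max}^4 \bsize^2 d \log d)$. The extra factors of $\mu_{\max}$ and $\bsize$ beyond the naive pointwise rate $\mu_{\max}\bsize d$ arise because one must also control the \emph{variance}, and not only the mean, of the quadratic deviation $(X^\batch - \E_{\dist}[X^\batch])^2$ along the net; this brings in the fourth moment of $\clipsampgrad \cdot u$, which after another round of hypercontractivity costs further powers of $\mu_{\max}$, while the batch averaging inside the envelope contributes an extra $\bsize$ factor when matching Bernstein's linear term to the target $T/\bsize$. Carefully tracking where these factors can be shared across the net, and reparametrizing to eliminate the $w$-dependence in the clipped-saturation regime (where $\clipsampgrad$ reduces to a bounded function of $x_i^\batch$ and $\mathrm{sign}(x_i^\batch\cdot w - y_i^\batch)$), is where the bulk of the technical work sits.
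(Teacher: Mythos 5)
There is a genuine gap, and it sits exactly where you defer the "bulk of the technical work." Your pointwise step uses the crude envelope $|\clipsampgrad\cdot u|\le \kappa C_1\sqrt d$, so the bounded quantity you feed into Bernstein has magnitude scaling linearly in $d$ (through $\kappa^2 C_1^2 d\le \mu_{\max}C_1^2 d\,T$). Pointwise this indeed only costs $\sizegood=\tilde\Omega(\mu_{\max}\bsize d\log(1/\delta))$, but your uniform step is a union bound over a net whose cardinality over $u$ (and over the $w$-shells) is $e^{\Theta(d\log(\cdot))}$, which forces $\log(1/\delta)=\Theta(d\log(\cdot))$; your statement that $\delta=1/d^{O(1)}$ suffices for the net is not compatible with that union bound. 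Plugging $\log(1/\delta)\sim d$ into either the variance term (bounded by envelope times mean, i.e.\ $\mu_{\max}C_1^2 d\,T\cdot T/\bsize$) or the linear term of Bernstein gives a requirement $\sizegood=\Omega(\mu_{\max}C_1^2 d^2\bsize\log(\cdot))$ --- quadratic in $d$. No amount of "sharing factors across the net" removes this: with a dimension-dependent a.s.\ envelope, every net point pays the $d$ from the envelope and the $d$ from the union bound multiplicatively. This would prove a weaker theorem ($\Omega(d^2)$ good samples), which is precisely the regime the paper's clipping machinery is designed to escape (the paper notes the unclipped/naive route needs $\Omega(d^2)$).

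The paper's proof avoids this by a decomposition you do not have. It introduces a \emph{second} truncation in the $u$-direction, $g^\batch_i(w,\kappa,u,\rho)=\frac{\clipsampgrad\cdot u}{|x_i^\batch\cdot u|\vee\rho}\rho$, whose a.s.\ envelope $\kappa\rho$ is dimension-free; only this part goes through Bernstein plus the covering argument (with the far-$w$ regime handled by a rescaling trick rather than infinitely many shells), and with $\rho^2=\mu_{\max}\sqrt C\bsize$ the union-bound cost $d\log(\cdot)$ lands on a $\bsize^2\mu_{\max}^2$-sized prefactor, yielding $\sizegood=\Omega(\mu_{\max}^4\bsize^2 d\log d)$ linear in $d$. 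The residual $\clipsampgrad\cdot u-g^\batch_i$, supported on the tail event $|x_i^\batch\cdot u|\ge\rho$ and bounded by $\kappa|x_i^\batch\cdot u|\,\mathbbm 1(|x_i^\batch\cdot u|\ge\rho)$, is \emph{not} handled by a per-point union bound at all: it is controlled uniformly over $u$ by the empirical tail-second-moment bound of Lemma~\ref{lem:poplargenormbounf} (which rests on the subset-covariance concentration of Lemma~\ref{lem:chera}) together with the population tail bound of Lemma~\ref{lem:distlargenormbound}. Your closing remark that in the saturated regime the gradient "reduces to a bounded function of $x_i^\batch$ and $\mathrm{sign}(\cdot)$" still leaves the factor $|x_i^\batch\cdot u|$, which is exactly the quantity the $\rho$-truncation and these tail lemmas are needed to tame; without that ingredient your argument does not reach the stated sample complexity.
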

We prove the above theorem in Section~\ref{sec:proofcov}.

The second regularity condition on the set of good batches $G$, bounds the mean squared deviation of average absolute error for a batch from its true population mean.
Theorem~\ref{th:typerrorvar} characterizes the number of good batches required for regularity condition 2.

\begin{theorem}\label{th:typerrorvar}
For $\sizegood =\Omega(\bsize^2 d\log(d))$ and universal constant $\ucb>0$, with probability  $\ge 1-\frac{4}{d^2}$, for all vectors $w$, 
\begin{align*}
 &\E{}_{\goodbatches}\mleft[\mleft(\frac{1}{\bsize}\sum_{i\in[\bsize]}|w\cdot x_i^\batch-y_i^\batch|-\E{}_{\dist}[|w\cdot x_i^\batch-y_i^\batch|]\mright)^2\mright]\le \ucb\mleft(\frac{\sigma^2+ C\E{}_{\dist}[|w\cdot x_i^\batch-y_i^\batch|^2]}{\bsize}\mright).
\end{align*}
\end{theorem}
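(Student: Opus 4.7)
The plan is to decompose the centered empirical second moment into a within-batch diagonal term and an off-diagonal cross-term, bound each uniformly in $w$, and combine via a net argument. Write $R_i^\batch(w) := |w\cdot x_i^\batch - y_i^\batch| - \mu(w)$ with $\mu(w) := \E{}_{\dist}[|w\cdot x_i^\batch - y_i^\batch|]$. Then
\[
\E{}_{\goodbatches}\!\left[\!\left(\frac{1}{\bsize}\sum_{i\in[\bsize]}R_i^\batch(w)\right)^{\!2}\right]
= \frac{1}{\bsize^2}\E{}_{\goodbatches}\!\left[\sum_{i\in[\bsize]}(R_i^\batch(w))^2\right]
+ \frac{1}{\bsize^2}\E{}_{\goodbatches}\!\left[\sum_{i\ne j}R_i^\batch(w)R_j^\batch(w)\right],
\]
and I would handle the diagonal and off-diagonal pieces separately for a fixed $w$ before lifting to uniformity.

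For the diagonal piece, I would use $(R_i^\batch)^2 \le 2(w\cdot x_i^\batch - y_i^\batch)^2 + 2\mu(w)^2$ and Jensen's bound $\mu(w)^2 \le \E{}_{\dist}[|w\cdot x_i^\batch-y_i^\batch|^2]$ to reduce control to the empirical quadratic form $\E{}_{\goodbatches}[\tfrac{1}{\bsize}\sum_i (w\cdot x_i^\batch - y_i^\batch)^2]$, which is a quadratic $w^{\!\intercal} M w - 2 w^{\!\intercal} m + s$ in $w$ whose coefficients $M,m,s$ are empirical averages over $\sizegood\bsize$ i.i.d. samples. The $L_4$–$L_2$ hypercontractivity of $x$ together with the a.s. bound $\|x\|\le C_1\sqrt d$ lets me apply standard matrix/vector concentration to obtain, with high probability, $\E{}_{\goodbatches}[\tfrac{1}{\bsize}\sum_i (w\cdot x_i^\batch - y_i^\batch)^2] = O(\sigma^2+\E{}_{\dist}[|w\cdot x-y|^2])$ uniformly in $w$ once $\sizegood\bsize = \tilde\Omega(d)$. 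Dividing by $\bsize$ contributes exactly the form of the target bound.

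For the off-diagonal piece I would exploit within-batch independence of the mean-zero $R_i^\batch$. At fixed $w$, the per-batch term $S^\batch(w) := \tfrac{1}{\bsize^2}\sum_{i\ne j} R_i^\batch R_j^\batch$ satisfies $\E{}_{\dist}[S^\batch(w)]=0$ and, by a direct fourth-moment expansion, $\E{}_{\dist}[(S^\batch(w))^2] = O(\E{}_{\dist}[|w\cdot x-y|^2]^2/\bsize^2)$. Averaging over the $\sizegood$ good batches and applying a Bernstein-type inequality (with the higher moments needed for Bernstein obtained by combining the $L_4$–$L_2$ hypercontractivity of $x$ with the noise moment assumption, plus a truncation step to tame heavy noise when $\npower$ is close to $2$) yields $|\tfrac{1}{\sizegood}\sum_\batch S^\batch(w)| = O(\E{}_{\dist}[|w\cdot x-y|^2]/\bsize)$ with failure probability $\exp(-c\,\sizegood/\bsize^2)$. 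This is precisely where the $\bsize^2$ factor in the sample complexity arises.

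Finally, I would lift pointwise bounds to uniformity. Since $\|x_i^\batch\|\le C_1\sqrt d$ a.s., the map $w\mapsto \tfrac{1}{\bsize}\sum_i |w\cdot x_i^\batch - y_i^\batch|$ and $\mu(w)$ are both $O(\sqrt d)$-Lipschitz, so the full centered expression is $O(\sqrt d)$-Lipschitz. I would cover the ball $\{w:\|w-w^*\|\le R\}$ for a polynomial radius $R$ with an $\epsilon$-net whose cardinality is $(R/\epsilon)^d = \exp(O(d\log d))$ at resolution $\epsilon$ fine enough that the discretization error is dominated by the target bound; points with $\|w-w^*\|>R$ are handled by monotonicity, since the right-hand side grows like $\|w-w^*\|^2/\connum$. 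Union-bounding the two failure events across the net and choosing $\sizegood = \Omega(\bsize^2 d\log d)$ so that the Bernstein rate $\sizegood/\bsize^2$ dominates $d\log d$ gives failure probability $\le 4/d^2$. I expect the main obstacle to be the off-diagonal concentration: the higher-moment control of $S^\batch(w)$ must be handled carefully under the minimal noise hypothesis ($p=2$ suffices for the statement), which is where a truncation argument becomes essential to recover Bernstein-type tails from only a variance bound on $n$.
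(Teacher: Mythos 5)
Your diagonal/off-diagonal decomposition is a genuinely different route from the one the paper gestures at (the paper omits the proof, saying it is analogous to Theorem~\ref{th:conofcov}, whose argument truncates the covariate direction at a level $\rho$, applies Bernstein to the resulting bounded variables, controls the truncation tail via uniform empirical bounds on $\E{}_{\dist}[\mathbbm 1(|x_i^\batch\cdot u|^2\ge \rho^2)|x_i^\batch\cdot u|^2]$, and handles large $\|w-w^*\|$ by a rescaling trick). But there is a genuine gap in your argument, and it sits exactly where the unbounded noise enters: the diagonal step. After bounding $(R_i^\batch)^2\le 2(w\cdot x_i^\batch-y_i^\batch)^2+2\mu(w)^2$ you need, uniformly in $w$ and with probability $1-\cO(1/d^2)$, that $\E{}_{S}[(w\cdot x_i^\batch-y_i^\batch)^2]=\cO(\sigma^2+\E{}_{\dist}[(w\cdot x_i^\batch-y_i^\batch)^2])$ over the pooled good samples. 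At $w=w^*$ this is the statement $\E{}_{S}[(n_i^\batch)^2]=\cO(\sigma^2)$, which does not follow from the paper's assumptions: only $\E{}_{\dist}[(n_i^\batch)^2]\le\sigma^2$ is guaranteed (the $L\npower$-$L2$ condition with $\npower=2$ adds nothing), so $(n_i^\batch)^2$ need not have a finite variance and Markov only gives $\E{}_{S}[(n_i^\batch)^2]\le d^2\sigma^2$ at the desired confidence. Concretely, take $n=\pm\sqrt{M}$ with probability $\sigma^2/(2M)$ each and $0$ otherwise, with $M\asymp K\sigma^2\sizegood\bsize$: with probability about $1/K$ some good sample carries the huge noise, and then the batch containing it alone makes the left-hand side at least of order $M/(\sizegood\bsize^2)=K\sigma^2/\bsize$, defeating any fixed constant $\ucb$. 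Your proposed fix, truncating inside the Bernstein step, does not help, because the untruncated $|w\cdot x_i^\batch-y_i^\batch|$ sits on the left-hand side and the tail contribution is precisely the quantity that refuses to concentrate. So your claim that $\npower=2$ suffices for your argument cannot be right; the argument as sketched does go through if the noise has a bounded fourth moment (then $(n_i^\batch)^2$ has a variance and both your diagonal and off-diagonal concentration steps are available), i.e., it needs a strictly stronger noise assumption than you state.

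Two further cautions. First, the machinery of Theorem~\ref{th:conofcov} avoids this problem only because the clipped gradient is bounded by $\kappa|x_i^\batch\cdot u|$, so the noise never appears unboundedly; the present statement has no clipping, so the "similar" adaptation the paper alludes to is not routine either, and you cannot rescue the diagonal step simply by importing that proof. Second, dismissing the regime $\|w-w^*\|>R$ "by monotonicity" is not an argument: the left-hand side is not monotone in $\|w-w^*\|$, and the paper's analogue of this step is a homogeneity/rescaling argument (replace $w$ by a point at radius $R$, and absorb the noise error, bounded crudely via Markov, into the then-dominant term $C\E{}_{\dist}[((w-w^*)\cdot x_i^\batch)^2]$); some version of that is needed here as well.
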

\begin{proof}
Proof of the above theorem is similar to the proof of Theorem~\ref{th:conofcov}, and for brevity, we skip it.
\end{proof}

Combining the two theorems shows that the two regularity conditions hold with high probability with $\tilde O_{\bsize,\goodfrac}(d)$ batches.

\begin{corollary}
For $\sizegood\ge \Omega_C\mleft(d\bsize^2 \log(d)  \mleft(\frac{\nhypcon\sqrt{\bsize\goodfrac} }{\log(2/\goodfrac)}\mright)^{\frac8{(\npower-1)}}  \mright)$, both regularity conditions hold with probability $\ge 1-\frac{8}{d^2}$. 
\end{corollary}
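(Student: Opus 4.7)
The plan is simply to combine Theorem~\ref{th:conofcov} and Theorem~\ref{th:typerrorvar} by a union bound, after matching the upper bound on $\kappa$ appearing in regularity condition~1 to the parameter $\mu_{\max}$ that appears in Theorem~\ref{th:conofcov}. Theorem~\ref{th:typerrorvar} only needs $\sizegood = \Omega(\bsize^2 d\log d)$, which is dominated by the bound required for the first condition, so effectively the whole corollary is driven by the calibration of $\mu_{\max}$.

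First I would square the prescribed upper bound on $\kappa$. Using $(a+b+c)^2 \le 3(a^2+b^2+c^2)$,
\[
\kappa^2 \le 3c_7^2\Bigl(C^4\,\E{}_\dist[((w-w^*)\cdot x_i^\batch)^2] + (\nhypcon C)^{2/\npower}\sigma^2 + \bigl(\tfrac{\nhypcon\sqrt{\bsize\goodfrac}}{\log(2/\goodfrac)}\bigr)^{2/(\npower-1)}\sigma^2\Bigr).
\]
Comparing this to the form $\mu_{\max}\bigl(\sigma^2 + C\,\E{}_\dist[((w-w^*)\cdot x_i^\batch)^2]\bigr)$ required by Theorem~\ref{th:conofcov}, it suffices to take
\[
\mu_{\max} \;=\; O\!\left(\max\Bigl\{C^3,\;(\nhypcon C)^{2/\npower},\;\bigl(\tfrac{\nhypcon\sqrt{\bsize\goodfrac}}{\log(2/\goodfrac)}\bigr)^{2/(\npower-1)}\Bigr\}\right).
\]
In the parameter regime of interest, the third term dominates (it grows with $\bsize$ and $\sizegood$), so $\mu_{\max} = O_C\!\bigl((\tfrac{\nhypcon\sqrt{\bsize\goodfrac}}{\log(2/\goodfrac)})^{2/(\npower-1)}\bigr)$, and a quick check confirms $\mu_{\max} \in [1, d^4\bsize^2/C]$ in the range of parameters we care about.

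Plugging this $\mu_{\max}$ into the sample bound $\sizegood = \Omega(\mu_{\max}^4 \bsize^2 d \log d)$ from Theorem~\ref{th:conofcov} yields exactly
\[
\sizegood \;=\; \Omega_C\!\left(d\bsize^2 \log d \cdot \Bigl(\tfrac{\nhypcon\sqrt{\bsize\goodfrac}}{\log(2/\goodfrac)}\Bigr)^{8/(\npower-1)}\right),
\]
matching the hypothesis of the corollary. Under this sample size, Theorem~\ref{th:conofcov} gives regularity condition~1 with probability $\ge 1-4/d^2$, and the (strictly weaker) hypothesis of Theorem~\ref{th:typerrorvar} is also satisfied, giving regularity condition~2 with probability $\ge 1-4/d^2$. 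A union bound yields both conditions simultaneously with probability $\ge 1-8/d^2$.

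The main (and only) thing to be careful about is the alignment of the three noise/covariate terms inside $\kappa$ with the form $\sigma^2 + C\,\E{}_\dist[((w-w^*)\cdot x_i^\batch)^2]$ demanded by $\mu_{\max}$, and then verifying which term in the $\max$ actually controls the final sample complexity; everything else is bookkeeping via a union bound. There is no substantive obstacle here, since the heavy lifting is already encapsulated in Theorems~\ref{th:conofcov} and~\ref{th:typerrorvar}.
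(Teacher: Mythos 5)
Your proposal is correct and matches the paper's (largely implicit) argument: the corollary is obtained exactly by calibrating $\mu_{\max}$ in Theorem~\ref{th:conofcov} to the squared upper bound on $\kappa$ in regularity condition~1, which makes $\mu_{\max} = O_C\bigl((\nhypcon\sqrt{\bsize\goodfrac}/\log(2/\goodfrac))^{2/(\npower-1)}\bigr)$ and hence $\mu_{\max}^4\bsize^2 d\log d$ the stated sample bound, combined with Theorem~\ref{th:typerrorvar} via a union bound to get $1-8/d^2$. Nothing further is needed.
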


We conclude the sections with the following Lemma which lists some simple consequences of regularity conditions, that we use in later sections.
\begin{lemma}\label{lem:conseq}
If regularity conditions hold then
\begin{enumerate}
    \item  For all vectors $w$ and for all $\kappa\le c_7\mleft( C^2\sqrt{\E{}_\dist[|x_i^\batch\cdot (w-w^*)|^2]}+ (\nhypcon C)^{1/\npower}\sigma+ (\frac{\nhypcon\sqrt{\bsize\goodfrac} }{\log(2/\goodfrac)})^{1/{(\npower-1)}}\sigma \mright)$, 
    \[\|\mathrm{Cov}_{\goodbatches}(\clipbatchgrad) \|\le \uccb\frac{\sigma^2+ C \E{}_{\dist}[((w-w^*)\cdot x_i^\batch)^2]}{\bsize},\]
\item For all vectors $w$
\begin{align*}
 &\Var_{\goodbatches}\mleft(\frac{1}{\bsize}\sum_{i\in[\bsize]}|w\cdot x_i^\batch-y_i^\batch|\mright)\le \ucb\mleft(\frac{\sigma^2+ C\E{}_{\dist}[|w\cdot x_i^\batch-y_i^\batch|]^2}{\bsize}\mright).
\end{align*}
\item
For all $\Gsc\subseteq \goodbatches$ of size $\ge \sizegood/2$, 
\[
\|\E{}_{\Gsc}[\clipbatchgrad] - \E{}_\dist[\clipbatchgrad]\|\le \sqrt{2\uccb}\frac{\sigma+\sqrt{C \E{}_{\dist}[((w-w^*)\cdot x_i^\batch)^2]}}{\sqrt{\bsize}}.\]
\end{enumerate}
\end{lemma}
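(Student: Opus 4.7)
The plan is to observe that all three parts follow essentially by rewriting operator/scalar variances as empirical second-moments around a well-chosen center and then invoking the stated regularity conditions, together with a simple subset-inflation argument for part 3.

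For part 1, I would first use the variational characterization
\[
\|\mathrm{Cov}_{\goodbatches}(\clipbatchgrad)\| \;=\; \sup_{\|u\|=1} \E{}_{\goodbatches}\mleft[\mleft(\clipbatchgrad\cdot u - \E{}_{\goodbatches}[\clipbatchgrad\cdot u]\mright)^2\mright].
\]
Since the empirical mean minimizes the mean-squared deviation, the right-hand side is upper bounded by $\E_{\goodbatches}[(\clipbatchgrad\cdot u - c)^2]$ for \emph{any} choice of $c$. Plugging in $c = \E_{\dist}[\clipbatchgrad\cdot u]$ and applying regularity condition 1 gives the claim. Part 2 is the scalar analogue: the empirical variance is at most the mean-squared deviation from the population mean, so the bound is immediate from regularity condition 2.

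For part 3, fix any unit vector $u$. Then
\[
\bigl(\E{}_{\Gsc}[\clipbatchgrad] - \E{}_{\dist}[\clipbatchgrad]\bigr)\cdot u \;=\; \E{}_{\Gsc}\bigl[\clipbatchgrad\cdot u - \E{}_{\dist}[\clipbatchgrad\cdot u]\bigr],
\]
and by Cauchy--Schwarz its absolute value is at most $\sqrt{\E_{\Gsc}[(\clipbatchgrad\cdot u - \E_{\dist}[\clipbatchgrad\cdot u])^2]}$. Since $\Gsc\subseteq\goodbatches$ with $|\Gsc|\ge \sizegood/2$, replacing the average over $\Gsc$ by the average over $\goodbatches$ can only inflate this nonnegative second moment by the factor $\sizegood/|\Gsc|\le 2$, so the quantity under the square root is at most $2\uccb(\sigma^2 + C\E_{\dist}[((w-w^*)\cdot x_i^\batch)^2])/\bsize$ by regularity condition 1. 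Using $\sqrt{a+b}\le\sqrt a+\sqrt b$ and taking the supremum over unit $u$ then yields the stated norm bound.

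There is no real obstacle in this lemma: the only mildly subtle step is recognizing that regularity condition 1 is formulated with the \emph{population} centering $\E_{\dist}[\clipbatchgrad\cdot u]$ rather than the empirical one, which is exactly what makes parts 1 and 3 clean (no extra triangle-inequality loss is needed), and that passing from $\goodbatches$ to a half-sized subset $\Gsc$ only costs a constant factor in the second moment.
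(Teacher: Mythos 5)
Your proposal is correct and follows essentially the same route as the paper: parts 1 and 2 via the "squared deviation about the empirical mean is minimized by the empirical mean" observation plus the regularity conditions, and part 3 via Jensen/Cauchy--Schwarz around the population mean, the factor-$2$ inflation from $\Gsc$ to $\goodbatches$ using nonnegativity and $|\Gsc|\ge\sizegood/2$, and $\sqrt{a+b}\le\sqrt a+\sqrt b$. No gaps.
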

\begin{proof}
The first item in the lemma follows as
\begin{align*}
\|\Cov_{\goodbatches}(\clipbatchgrad) \| &= \max_{u:\|u\|\le 1} \E{}_{\goodbatches}\mleft[\mleft(\clipbatchgrad \cdot u - \E{}_{\goodbatches}[\clipbatchgrad\cdot u] \mright)^2\mright]\\
&\le \max_{u:\|u\|\le 1}\E{}_{\goodbatches}\mleft[\mleft(\clipbatchgrad \cdot u - \E{}_\dist[\clipbatchgrad\cdot u] \mright)^2\mright]\\
&\le \uccb\frac{\sigma^2+ C \E{}_{\dist}[((w-w^*)\cdot x_i^\batch)^2]}{\bsize},
\end{align*}
where the first inequality follows as the expected squared deviation along the mean is the smallest and the second inequality follows from the first regularity condition.

Similarly, the second item follows from the second regularity condition.

Finally, we prove the last item using the first regularity condition.
Let $u$ be any unit vector and $Z^\batch(u) :=\mleft(\clipbatchgrad \cdot u - \E{}_\dist[\clipbatchgrad\cdot u] \mright)^2$. Then
\[
\|\E{}_{\goodbatches}[Z^\batch](u)\| = \mleft\|\frac{1}{\sizegood}\sum_{\batch\in\goodbatches} Z^\batch(u) \mright\|\ge \mleft\|\frac{1}{\sizegood}\sum_{\batch\in\Bsc} Z^\batch (u)\mright\| = \frac{\size{\Gsc}}{\sizegood} \|\E{}_{\Gsc}[Z^\batch(u)]\|\ge \frac{1}{2} \|\E{}_{\Gsc}[Z^\batch(u)]\|,
\]
where the first inequality used the fact that $Z^\batch(u)$ is a positive and the second inequality used ${\size{\Gsc}}\ge {\sizegood}/2$. Then using the bound on $\|\E{}_{\goodbatches}[Z^\batch(u)]\|$ in in the first regularity condition, we get
\[
\|\E{}_{\Gsc}[Z^\batch(u)]\|\le 2\uccb\frac{\sigma^2+ C \E{}_{\dist}[((w-w^*)\cdot x_i^\batch)^2]}{\bsize}.
\]
Using the Cauchy–Schwarz inequality and the above bound,
\begin{align*}
\E{}_{\Gsc}[\mleft|\clipbatchgrad \cdot u - \E{}_\dist[\clipbatchgrad\cdot u]\mright|]= \E{}_{\Gsc}[\sqrt{Z^\batch(u)}]&\le \sqrt{\E{}_{\Gsc}[Z^\batch(u)}]\le  \sqrt{2\uccb\frac{\sigma^2+ C \E{}_{\dist}[((w-w^*)\cdot x_i^\batch)^2]}{\bsize}}.
\end{align*}
Since the above bound holds for each unit vector $u$, we have
\begin{align*}
\E{}_{\Gsc}[\mleft|\clipbatchgrad - \E{}_\dist[\clipbatchgrad]\mright|]\le  \sqrt{2\uccb\frac{\sigma^2+ C \E{}_{\dist}[((w-w^*)\cdot x_i^\batch)^2]}{\bsize}}\le \sqrt{2\uccb}\frac{\sigma+\sqrt{C \E{}_{\dist}[((w-w^*)\cdot x_i^\batch)^2]}}{\sqrt{\bsize}}.
\end{align*}
\end{proof}

\section{Guarantees for nice triplet}\label{sec:finalgua}

For completeness, we first restate the conditions a nice triplet $(\weight,\kappa,w)$ satisfy.

A triplet $(\weight,\kappa, w)$ is \emph{nice} if 
\begin{enumerate}[(a)]
    \item $\weight$ is a nice weight vector, i.e. $\weight^{\goodbatches}\ge 3\sizegood/4$.
    \item $ \max\{8 \sqrt{C\E{}_\dist[|x_i^\batch\cdot (w-w^*)|^2]}, 2(8\nhypcon C)^{1/\npower}\sigma, 2(\frac{8\nhypcon\sqrt{\bsize\goodfrac} }{\log(2/\goodfrac)})^{1/{(\npower-1)}}\sigma \} \le \kappa$ and \\
    $\kappa\le c_7\mleft( C^2\sqrt{\E{}_\dist[|x_i^\batch\cdot (w-w^*)|^2]}+ (\nhypcon C)^{1/\npower}\sigma+ (\frac{\nhypcon\sqrt{\bsize\goodfrac} }{\log(2/\goodfrac)})^{1/{(\npower-1)}}\sigma \mright)$
    \item $w$ is any approximate stationary point w.r.t. $\weight$ for clipped loss with clipping parameter $\kappa$, namely $\|\E_{\weight}[\clipbatchgrad]\|\le \frac{\log(2/\goodfrac)\sigma}{8\sqrt{\bsize\goodfrac}}$. 
    \item $\|\Cov_{\weight}(\clipbatchgrad)\| \le \frac{\uccc C^2\log^2(2/\goodfrac)(\sigma^2 + \E{}_\dist[|(w-w^*)\cdot x_i^\batch |]^2)}{\bsize}$. 
\end{enumerate}

In this section, we establish the following guarantees for any nice triplets.
In doing so we assume regularity conditions hold for $\goodbatches$.

\begin{theorem}\label{th:tobenamed}
Suppose {($\weight$, $\kappa$, $w$) is a  nice triplet}, $\bsize\ge \max\{128\uccb C \connum,\frac{256}{\goodfrac}\uccc C^2\connum^2\log^2(2/\goodfrac)\}$ and regularity conditions holds,
then $\|w-w^*\|\le\cO(\frac{\connum C\sigma\log(2/\goodfrac)}{\sqrt{\bsize\goodfrac}}) $.
\end{theorem}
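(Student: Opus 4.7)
The strategy is to upper bound $\|\E_\dist[\clipbatchgrad]\|$ using the stationarity of $w$ (condition (c)) together with two controlled shifts---from $\dist$ to $\weight$ restricted to good batches, and from there to the full weighted expectation $\E_\weight$---and then invoke Theorem~\ref{th:clipmean} to turn that estimate into a bound on $\|w-w^*\|$. Let $\weight_G$ and $\weight_A$ denote the restriction of $\weight$ to $\goodbatches$ and $\advbatches$ respectively, and set $p_G = \weight^\goodbatches/\weight^\allbatches$, $p_A = 1-p_G$. Since $\weight^\batch\in[0,1]$ gives $\weight^\allbatches\le \size{\allbatches}$, condition (a) together with $\sizegood\ge\goodfrac\size{\allbatches}$ yields $p_G\ge 3\goodfrac/4$, and
\[
\E_\weight[\clipbatchgrad] \;=\; p_G\,\E_{\weight_G}[\clipbatchgrad] + p_A\,\E_{\weight_A}[\clipbatchgrad].
\]

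The \emph{weighted vs.\ good} shift is handled by the two-group variance identity: projecting onto the unit vector $u_{*}=(\E_\weight[\clipbatchgrad]-\E_{\weight_G}[\clipbatchgrad])/\|\E_\weight[\clipbatchgrad]-\E_{\weight_G}[\clipbatchgrad]\|$ yields $\Var_\weight(\clipbatchgrad\cdot u_{*})\ge p_Gp_A(\E_{\weight_G}[\clipbatchgrad\cdot u_{*}]-\E_{\weight_A}[\clipbatchgrad\cdot u_{*}])^2$, so
\[
\|\E_\weight[\clipbatchgrad]-\E_{\weight_G}[\clipbatchgrad]\| \;=\; p_A\,|\E_{\weight_A}[\clipbatchgrad\cdot u_{*}]-\E_{\weight_G}[\clipbatchgrad\cdot u_{*}]| \;\le\; \sqrt{\|\Cov_\weight(\clipbatchgrad)\|/p_G}.
\]
Plugging in condition (d) and using $\E_\dist[|(w-w^*)\cdot x_i^\batch|]\le \|w-w^*\|$ (from $\|\Sigma\|=1$), this is $O(C\log(2/\goodfrac)(\sigma+\|w-w^*\|)/\sqrt{\goodfrac\bsize})$. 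For the \emph{good soft vs.\ distribution} shift, project onto $u'=(\E_{\weight_G}[\clipbatchgrad]-\E_\dist[\clipbatchgrad])/\|\E_{\weight_G}[\clipbatchgrad]-\E_\dist[\clipbatchgrad]\|$; using $\weight^\batch\le 1$, $\weight^\goodbatches\ge 3\sizegood/4$, and Cauchy--Schwarz,
\[
\|\E_{\weight_G}[\clipbatchgrad]-\E_\dist[\clipbatchgrad]\| \;\le\; \tfrac{4}{3}\sqrt{\E_\goodbatches\bigl[(\clipbatchgrad\cdot u'-\E_\dist[\clipbatchgrad\cdot u'])^2\bigr]}.
\]
The upper bound on $\kappa$ in condition (b) is exactly the range required by the first regularity condition, so this quantity is $O((\sigma+\sqrt{C}\|w-w^*\|)/\sqrt{\bsize})$.

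Combining both shifts with condition (c) via the triangle inequality gives $\|\E_\dist[\clipbatchgrad]\|\le O(C\log(2/\goodfrac)(\sigma+\|w-w^*\|)/\sqrt{\goodfrac\bsize})$. The lower bound on $\kappa$ in condition (b) triggers Theorem~\ref{th:clipmean}, providing the matching lower bound
\[
\|\E_\dist[\clipbatchgrad]\| \;\ge\; \Omega(\|\Sigma(w-w^*)\|) - \tilde\cO(\sigma/\sqrt{\bsize\goodfrac}) \;\ge\; \Omega(\|w-w^*\|/\connum) - \tilde\cO(\sigma/\sqrt{\bsize\goodfrac}),
\]
where the last inequality uses $\lambda_{\min}(\Sigma)\ge 1/\connum$. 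Equating the two bounds and invoking the hypothesis $\bsize\ge \tfrac{256}{\goodfrac}\uccc C^2\connum^2\log^2(2/\goodfrac)$ to absorb the $\|w-w^*\|$ term on the right into at most half of the $\|w-w^*\|/\connum$ on the left produces the target bound $\|w-w^*\|=\cO(\connum C\sigma\log(2/\goodfrac)/\sqrt{\bsize\goodfrac})$. The main obstacle is transferring the hard-subset concentration of Lemma~\ref{lem:conseq} to the soft cluster $\weight_G$; however the weight bounds $\weight^\batch\le 1$ and $\weight^\goodbatches\ge 3\sizegood/4$ accomplish this with only a constant-factor loss, and the $\kappa$-range in condition (b) is precisely tailored so that the lower bound of Theorem~\ref{th:clipmean} and the upper bound from the first regularity condition can be invoked simultaneously.
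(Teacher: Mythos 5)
Your proposal is correct, and it reaches the same contradiction-style conclusion as the paper (condition (d)'s covariance bound against Theorem~\ref{th:clipmean}'s lower bound on the population clipped gradient, with stationarity (c) linking them), but the decomposition you use in the middle is genuinely different. The paper works with the \emph{hard} subset $\Gsc=\{\batch\in\goodbatches:\weight^\batch\ge 1/2\}$, shows $|\Gsc|\ge\sizegood/2$, lower-bounds $\|\Cov_\weight(\clipbatchgrad)\|$ by $\tfrac{\goodfrac}{4}\bigl(\|\E_{\Gsc}[\clipbatchgrad]\|-\|\E_\weight[\clipbatchgrad]\|\bigr)^2$ via Lemma~\ref{lem:yettoname}, and controls $\E_{\Gsc}[\clipbatchgrad]$ through Lemma~\ref{lem:meanhalfgoodbound}, which in turn needs item~3 of Lemma~\ref{lem:conseq} (concentration of the mean over any half-size subset of $\goodbatches$). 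You instead keep the \emph{soft} restriction $\weight_G$ of $\weight$ to $\goodbatches$, use the between-group variance identity $\Var_\weight\ge p_Gp_A\Delta^2$ to bound $\|\E_\weight[\clipbatchgrad]-\E_{\weight_G}[\clipbatchgrad]\|\le\sqrt{\|\Cov_\weight(\clipbatchgrad)\|/p_G}$ with $p_G\ge 3\goodfrac/4$, and then pass from $\E_{\weight_G}$ to $\E_\dist$ by a direct change of measure ($\weight^\batch\le 1$, $\weight^{\goodbatches}\ge 3\sizegood/4$, Cauchy--Schwarz) applied to the first regularity condition. This bypasses Lemma~\ref{lem:yettoname}, Lemma~\ref{lem:meanhalfgoodbound}, and the hard-subset counting argument altogether, at the price of invoking the regularity condition directly rather than through its packaged corollary; quantitatively the two routes give the same $\tilde\cO\bigl(C\log(2/\goodfrac)(\sigma+\|w-w^*\|)/\sqrt{\goodfrac\bsize}\bigr)$ control, and your absorption step using $\bsize\ge\tfrac{256}{\goodfrac}\uccc C^2\connum^2\log^2(2/\goodfrac)$ (and the other hypothesis to kill the $\sqrt{C}\|w-w^*\|/\sqrt{\bsize}$ term) mirrors the paper's final contradiction. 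The only caveat is that your write-up stays at the level of $\cO(\cdot)$ bookkeeping, so the explicit constants matching the theorem statement would still need to be tracked, but no idea is missing.
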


In the remainder of this section, we prove the theorem. First, we provide an overview of the proof and state some auxiliary lemma that we use to prove the theorem.

In this section, we show that for any nice triplet $(\weight,\kappa,w)$ if $\|w-w^*\| = \tilde \Omega(\sigma/\sqrt{n\goodfrac})$ then the following lower bound on clipped gradient co-variance, $\|\Cov_{\weight}(\clipbatchgrad)\|\ge \Omega(\goodfrac\|w-w^*\|^2)$ holds.
For $n = \tilde\Omega(\frac{1}{\goodfrac})$ and $\|w-w^*\| = \tilde \Omega(\sigma/\sqrt{n\goodfrac})$ this lower bound contradicts the upper bound in condition~\ref{cond:cov}. Hence, the theorem concludes that $\|w-w^*\| = \tilde \cO(\sigma/\sqrt{n\goodfrac})$.

To show the lower bound $\|\Cov_{\weight}(\clipbatchgrad)\|\ge \Omega(\goodfrac\|w-w^*\|^2)$, we first show $\|\E{}_\dist[\clipsampgrad]\| =\Omega(\|w-w^*\|)-\tilde\cO(\sigma/\sqrt{\bsize\goodfrac})$ in Theorem~\ref{th:clipmean}. Since $\|\E{}_\dist[\clipsampgrad]\|=\|\E{}_\dist[\clipbatchgrad]\|$, the same bound will hold for the norm of expectation of clipped batch gradients.

When clipping parameter $\kappa\to \infty$ then $\clipsampgrad=\sampgrad$ and for unclipped gradients, a straightforward calculation shows the desired lower bound $\|\E{}_\dist[\clipsampgrad]\| =\Omega(\|w-w^*\|)$.
However, if $\kappa$ is too small then clipping may introduce a large bias in the gradients and such a lower bound may no longer hold.

Yet, the lower bound on $\kappa$ in condition~\ref{cond:kappa} ensures that $\kappa$ is much larger than the typical error which is of the order $\|w-w^*\|+\sigma$. 
And when clipping parameter $\kappa$ is much larger than the typical error, it can be shown that with high probability clipped and unclipped gradients for a random sample from $\dist$ would be the same. 
The next theorem uses this observation and for the case when $\kappa$ satisfies the lower bound in condition~\ref{cond:kappa} it shows the desired lower bound on the norm of expectation of clipped gradient.

\begin{theorem}\label{th:clipmean}
If $\kappa \ge \max\{8 \sqrt{C\E{}_\dist[|x_i^\batch\cdot (w-w^*)|^2]}, 2(8\nhypcon C)^{1/\npower}\sigma, 2(\frac{8\nhypcon\sqrt{\bsize\goodfrac} }{\log(2/\goodfrac)})^{1/{(\npower-1)}}\sigma \}$, then
\[\mleft\|\E{}_\dist[\clipsampgrad]\mright\| \ge \ \frac{11 }{16\connum}\|w-w^*\|
  - \frac{\log(2/\goodfrac)\sigma}{8\sqrt{\bsize\goodfrac}}.
  \]
\end{theorem}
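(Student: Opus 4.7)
The plan is to decompose $\E{}_\dist[\clipsampgrad]=\E{}_\dist[\sampgrad]-\E{}_\dist[\sampgrad-\clipsampgrad]$, lower bound the unclipped mean, and argue that the clipping bias is a strictly smaller correction. For a good sample $y_i^\batch=x_i^\batch\cdot w^*+n_i^\batch$ with $n_i^\batch$ mean zero and independent of $x_i^\batch$, a direct computation yields $\E{}_\dist[\sampgrad]=\Sigma(w-w^*)$; writing $r:=x_i^\batch\cdot(w-w^*)-n_i^\batch$ and $z:=x_i^\batch\cdot(w-w^*)$, we have $\sampgrad-\clipsampgrad=\mathrm{sign}(r)(|r|-\kappa)_{+} x_i^\batch$, supported on the tail event $\{|r|>\kappa\}$. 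I would then project onto $u:=(w-w^*)/\|w-w^*\|$: the main term becomes $\E{}_\dist[\sampgrad]\cdot u=s^2/\|w-w^*\|\ge\|w-w^*\|/\connum$ with $s^2:=\E{}_\dist[z^2]=(w-w^*)^\intercal\Sigma(w-w^*)$, using $\Sigma\succeq I/\connum$, and the projection simultaneously produces a useful damping factor $\sqrt{\E{}_\dist[(x_i^\batch\cdot u)^2]}=s/\|w-w^*\|\le 1$.

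For the bias, $|\E{}_\dist[(\sampgrad-\clipsampgrad)\cdot u]|\le \E{}_\dist[|r|\mathbb{1}[|r|>\kappa]|x_i^\batch\cdot u|]$. Splitting with $|r|\le|z|+|n_i^\batch|$ and $\mathbb{1}[|r|>\kappa]\le \mathbb{1}[|z|>\kappa/2]+\mathbb{1}[|n_i^\batch|>\kappa/2]$ produces four cross-terms. The purely $z$-dependent terms are handled by combining the Markov-type bound $\mathbb{1}[|z|>\kappa/2]\le(2|z|/\kappa)^k$ (usable up to $k=3$, since only $L4$-$L2$ hypercontractivity on $x_i^\batch$ is assumed) with the mixed moment estimate $\E{}_\dist[(x_i^\batch\cdot u)^2 z^2]\le Cs^2(u^\intercal\Sigma u)$ from Cauchy-Schwarz; the bound $\kappa\ge 8\sqrt{C}s$ then converts these to clean multiples of $s\sqrt{u^\intercal\Sigma u}$. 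The terms involving $n_i^\batch$ factor by independence, and the $L\npower$-$L2$ bound on the noise yields $P(|n_i^\batch|>\kappa/2)\le 2^\npower\nhypcon\sigma^\npower/\kappa^\npower$ together with $\E{}_\dist[|n_i^\batch|\mathbb{1}[|n_i^\batch|>\kappa/2]]\le 2^{\npower-1}\nhypcon\sigma^\npower/\kappa^{\npower-1}$. The condition $\kappa\ge 2(8\nhypcon C)^{1/\npower}\sigma$ drives the tail probability below $1/(8C)$, while $\kappa\ge 2(8\nhypcon\sqrt{\bsize\goodfrac}/\log(2/\goodfrac))^{1/(\npower-1)}\sigma$ is calibrated precisely so that the truncated first moment of $|n_i^\batch|$ is at most $\log(2/\goodfrac)\sigma/(8\sqrt{\bsize\goodfrac})$, matching the noise term in the theorem.

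Assembling the four contributions and pulling out the damping factor $s/\|w-w^*\|$, the bias is at most $\tfrac{3}{16}\, s^2/\|w-w^*\|+O(\sigma/\sqrt{C})\cdot(s/\|w-w^*\|)+\log(2/\goodfrac)\sigma/(8\sqrt{\bsize\goodfrac})$. Subtracting from $s^2/\|w-w^*\|\ge\|w-w^*\|/\connum$ leaves $(13/16)\|w-w^*\|/\connum$, and the $2/16$ margin over the target coefficient $11/16$ absorbs the residual $\sigma/\sqrt{C}$ slack (in the regime where that slack exceeds the margin, the stated right-hand side is already non-positive and the inequality holds trivially since $\|\E{}_\dist[\clipsampgrad]\|\ge 0$). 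The main obstacle is this final constant-chasing: the cap $k\le 3$ on the Markov exponent (forced by having only $L4$-$L2$ hypercontractivity) essentially locks in the $3/16$ bias coefficient, leaving a tight $2/16$ cushion that must just suffice to swallow every stray $\sigma$ contribution under the three stated lower bounds on $\kappa$.
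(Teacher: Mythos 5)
Your route is the paper's route almost step for step: the same projection onto $u=(w-w^*)/\|w-w^*\|$, the same reduction of the clipping bias to the half-threshold tail events $\{|x_i^\batch\cdot(w-w^*)|>\kappa/2\}$ and $\{|n_i^\batch|>\kappa/2\}$, hypercontractivity plus Markov for the covariate tail, the $L\npower$-$L2$ bound for the noise tail, and the same calibration of the three lower bounds on $\kappa$. The one place you genuinely deviate is the cross term $\E{}_\dist\bigl[|n_i^\batch|\,\mathbbm 1(|x_i^\batch\cdot(w-w^*)|>\kappa/2)\,|x_i^\batch\cdot u|\bigr]$, and that is where there is a real gap. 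You leave it as an additive slack of order $(\sigma/\sqrt{C})\cdot\sqrt{\E{}_\dist[(x_i^\batch\cdot u)^2]}$ and claim the $2/16$ coefficient margin absorbs it. The margin is $\tfrac{2}{16}\,\E{}_\dist[(x_i^\batch\cdot(w-w^*))^2]/\|w-w^*\|$, so absorption only works when $\sqrt{\E{}_\dist[(x_i^\batch\cdot(w-w^*))^2]}\gtrsim \sigma/\sqrt{C}$. In the complementary regime your fallback --- that the theorem's right-hand side is non-positive --- is false: the subtracted term is $\tfrac{\log(2/\goodfrac)\sigma}{8\sqrt{\bsize\goodfrac}}$, which is arbitrarily small for large $\bsize\goodfrac$, while non-positivity would need $\|w-w^*\|\lesssim \connum\log(2/\goodfrac)\sigma/\sqrt{\bsize\goodfrac}$. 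Concretely, with $\Sigma=I$ (so $\connum=1$), $C=\nhypcon=1$, $\|w-w^*\|=\sigma/100$ and $\bsize\goodfrac$ large, your chain yields at best $\tfrac{13}{16}\cdot\tfrac{\sigma}{100}-c\,\sigma-\tfrac{\log(2/\goodfrac)\sigma}{8\sqrt{\bsize\goodfrac}}$ for an absolute constant $c$ (about $1/64$ with your Markov exponents), which is negative, while the claimed bound $\tfrac{11}{16}\cdot\tfrac{\sigma}{100}-\tfrac{\log(2/\goodfrac)\sigma}{8\sqrt{\bsize\goodfrac}}$ is strictly positive; the theorem imposes no lower bound on $\|w-w^*\|$, so this window must be covered.

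The fix is exactly the paper's move and costs nothing extra: the second condition gives $\kappa\ge 2(8\nhypcon C)^{1/\npower}\sigma\ge 2\sigma$ since $\nhypcon, C\ge 1$, so inside this cross term replace the factor $\sigma$ (coming from $\E{}_\dist[|n_i^\batch|]$) by $\kappa/2$. One power of $\kappa$ in the denominator is consumed and the term becomes at most $\tfrac{4C\,\E{}_\dist[(x_i^\batch\cdot(w-w^*))^2]^2}{\kappa^2\,\|w-w^*\|}\le \tfrac{1}{16}\,\|w-w^*\|\,\E{}_\dist[(x_i^\batch\cdot u)^2]$ using $\kappa\ge 8\sqrt{C\,\E{}_\dist[(x_i^\batch\cdot(w-w^*))^2]}$, i.e.\ another $1/16$-fraction of the main term rather than an additive multiple of $\sigma$. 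Then the entire bias, apart from the noise-truncation piece $\tfrac{\log(2/\goodfrac)\sigma}{8\sqrt{\bsize\goodfrac}}$, is a constant fraction of $\|w-w^*\|\,\E{}_\dist[(x_i^\batch\cdot u)^2]$, and the stated bound follows for every $w$ with no case split. (A cosmetic point: your ``usable up to $k=3$'' Markov exponent is only valid for the pieces carrying a single factor of $|x_i^\batch\cdot u|$; the piece with $(x_i^\batch\cdot u)^2$ caps at $k=2$ under $L4$-$L2$ hypercontractivity, which is what your mixed-moment bound effectively uses.)
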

We prove the above theorem in subsection~\ref{sec:lempdis}

Since $\E{}_\dist[\clipsampgrad]= \E{}_\dist[\clipbatchgrad]$, the same bound holds for the clipped batch gradients.

Next, in Lemma~\ref{lem:meanhalfgoodbound} we show that for any sufficiently large collection $\Gsc\subseteq\goodbatches$ of the good batches $\|\E{}_{\Gsc}[\clipbatchgrad] \| \approx \|\E{}_\dist[\clipbatchgrad]\|$.

\begin{lemma}\label{lem:meanhalfgoodbound}
Suppose {$\kappa$ and $w$ are part of a nice triplet}, $\bsize\ge 128\uccb C \connum$ and regularity conditions holds, then for all $\Gsc\subseteq \goodbatches$ of size $\ge \sizegood/2$, 
\[
\mleft\|\E{}_{\Gsc}[\clipbatchgrad]\mright\| \ge  
  \frac{1}{2\connum}\|w-w^*\| - \frac{\log(2/\goodfrac)\sigma}{8\sqrt{\bsize\goodfrac}}-\frac{\sqrt{2\uccb}\sigma}{\sqrt{\bsize}}.
 \]
\end{lemma}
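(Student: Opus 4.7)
The plan is to anchor the estimate at the distributional mean $\E_\dist[\clipbatchgrad]$ and apply the reverse triangle inequality
\begin{equation*}
\|\E_{\Gsc}[\clipbatchgrad]\|\ \ge\ \|\E_\dist[\clipbatchgrad]\|\ -\ \|\E_{\Gsc}[\clipbatchgrad]-\E_\dist[\clipbatchgrad]\|,
\end{equation*}
reducing the task to a lower bound on the first term (which is exactly the content of Theorem~\ref{th:clipmean}) and an upper bound on the second term (which is exactly item 3 of Lemma~\ref{lem:conseq}). Note that for any $\batch\in\goodbatches$ the batch gradient $\clipbatchgrad$ is an $\bsize$-sample average of i.i.d.\ draws from $\dist$, so $\E_\dist[\clipbatchgrad]=\E_\dist[\clipsampgrad]$ and Theorem~\ref{th:clipmean} may be invoked directly on the batch-gradient mean.

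For the first term, the lower bound on $\kappa$ in condition~\ref{cond:kappa} of a nice triplet is precisely the hypothesis of Theorem~\ref{th:clipmean}, so that theorem yields
\begin{equation*}
\|\E_\dist[\clipbatchgrad]\|\ \ge\ \tfrac{11}{16\connum}\|w-w^*\|\ -\ \tfrac{\log(2/\goodfrac)\sigma}{8\sqrt{\bsize\goodfrac}}.
\end{equation*}
For the second term, the upper bound on $\kappa$ in condition~\ref{cond:kappa} is precisely the clipping range required by the first regularity condition, so item 3 of Lemma~\ref{lem:conseq} applies to $\Gsc$ (using $|\Gsc|\ge \sizegood/2$). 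Combining that bound with $\|\Sigma\|=1$, which gives $\E_\dist[((w-w^*)\cdot x_i^\batch)^2]=(w-w^*)^\intercal\Sigma(w-w^*)\le \|w-w^*\|^2$, produces
\begin{equation*}
\|\E_{\Gsc}[\clipbatchgrad]-\E_\dist[\clipbatchgrad]\|\ \le\ \tfrac{\sqrt{2\uccb}\,\sigma}{\sqrt{\bsize}}\ +\ \tfrac{\sqrt{2\uccb C}\,\|w-w^*\|}{\sqrt{\bsize}}.
\end{equation*}

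Plugging both displays into the reverse triangle inequality and invoking the hypothesis $\bsize\ge 128\uccb C\connum$ to absorb the parasitic $\sqrt{\uccb C/\bsize}\,\|w-w^*\|$ contribution into the leading $\tfrac{11}{16\connum}\|w-w^*\|$ term leaves at least $\tfrac{1}{2\connum}\|w-w^*\|$, which delivers the stated bound. I do not expect a real technical obstacle here: the lemma is a two-step triangle-inequality composition of two already-proved ingredients, and the only care needed is verifying that the nice-triplet range of $\kappa$ simultaneously satisfies the hypotheses of Theorem~\ref{th:clipmean} and of the regularity condition feeding Lemma~\ref{lem:conseq}, together with routine constant-chasing in the final absorption step.
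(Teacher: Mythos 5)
Your proposal is correct and follows essentially the same route as the paper's proof: the reverse triangle inequality anchored at $\E{}_\dist[\clipbatchgrad]$, with Theorem~\ref{th:clipmean} supplying the lower bound on the population mean (via the lower bound on $\kappa$ in condition~\ref{cond:kappa}) and item 3 of Lemma~\ref{lem:conseq} bounding $\|\E{}_{\Gsc}[\clipbatchgrad]-\E{}_\dist[\clipbatchgrad]\|$, followed by absorption of the $\frac{\sqrt{2\uccb C}}{\sqrt{\bsize}}\|w-w^*\|$ term. The one caveat in the "routine constant-chasing" — shared by the paper itself, whose proof actually invokes $\bsize\ge 128\uccb C\connum^2$ despite the weaker hypothesis stated in the lemma — is that reducing $\frac{11}{16\connum}\|w-w^*\|$ to $\frac{1}{2\connum}\|w-w^*\|$ needs $\frac{\sqrt{2\uccb C}}{\sqrt{\bsize}}\le\frac{1}{8\connum}$, i.e.\ $\bsize\ge 128\uccb C\connum^2$, and the stated $\bsize\ge 128\uccb C\connum$ only yields $\frac{1}{8\sqrt{\connum}}$, which suffices solely for bounded $\connum$.
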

\begin{proof}
From item 3 in Lemma~\ref{lem:conseq}, 
\begin{align*}
\|\E{}_{\Gsc}[\clipbatchgrad] - \E{}_\dist[\clipbatchgrad]\|&\le \sqrt{2\uccb}\cdot \frac{\sigma+\sqrt{C \E{}_{\dist}[((w-w^*)\cdot x_i^\batch)^2]}}{\sqrt{\bsize}}\\
&\le \frac{\sqrt{2\uccb}\sigma}{\sqrt{\bsize}}+ \frac{\sqrt{2\uccb C \|w-w^*\|^2\|\Sigma\|}}{\sqrt{\bsize}}\\
&\le \frac{\sqrt{2\uccb}\sigma}{\sqrt{\bsize}}+ \|w-w^*\|\cdot\frac{\sqrt{2\uccb C }}{\sqrt{\bsize}}.    
\end{align*}
Using $\bsize\ge 128\uccb C\connum^2$,
\[
\|\E{}_{\Gsc}[\clipbatchgrad] - \E{}_\dist[\clipbatchgrad]\|\le
\frac{1}{8\connum}\|w-w^*\|+ \frac{\sqrt{2\uccb}\sigma}{\sqrt{\bsize}}. \]

From Theorem~\ref{th:clipmean}, and the observation $\E{}_\dist[\clipsampgrad]= \E{}_\dist[\clipbatchgrad]$, we get
\[\mleft\|\E{}_\dist[\clipbatchgrad]\mright\| \ge  \frac{11 }{16\connum}\|w-w^*\|
  - \frac{\log(2/\goodfrac)\sigma}{8\sqrt{\bsize\goodfrac}}.
\]
The lemma follows by combining the above equation using 
triangle inequality.
\end{proof}

Next, the general bound on the co-variance will be useful in proving Theorem~\ref{th:tobenamed}.
\begin{lemma}\label{lem:yettoname}
For any weight vector $\weight$, any set of vectors $z^\batch$ associated with batches, and any sub-collection of vectors $\Bsc\subseteq \{\batch\in\allbatches:\weight^\batch\ge 1/2\}$,
\[
\Cov_{\weight}(z^\batch)
\ge \frac{\size{\Bsc}}{2\size{\allbatches}}\|\E{}_{\weight}[z^\batch]-\E{}_{\Bsc}[z^\batch]\|^2.
\]
\end{lemma}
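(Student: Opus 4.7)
The plan is to interpret the matrix inequality in terms of the spectral norm and reduce to a one-dimensional statement along a well-chosen direction. Let $\mu_\weight := \E_\weight[z^\batch]$ and $\mu_\Bsc := \E_\Bsc[z^\batch]$, and let $u$ be the unit vector in the direction of $\mu_\weight - \mu_\Bsc$. It suffices to lower bound $u^\intercal \Cov_\weight(z^\batch) u$, since this is at most $\|\Cov_\weight(z^\batch)\|$.

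First I would expand the weighted covariance along $u$ as
\[
u^\intercal \Cov_\weight(z^\batch) u \;=\; \sum_{\batch\in\allbatches} \frac{\weight^\batch}{\weight^\allbatches}\bigl(z^\batch\cdot u - \mu_\weight\cdot u\bigr)^2,
\]
restrict the sum to $\batch\in\Bsc$, and use $\weight^\batch\ge 1/2$ on $\Bsc$ together with $\weight^\allbatches \le \size{\allbatches}$ (which holds since each $\weight^\batch\le 1$) to get
\[
u^\intercal \Cov_\weight(z^\batch) u \;\ge\; \frac{1}{2\size{\allbatches}} \sum_{\batch\in\Bsc}\bigl(z^\batch\cdot u - \mu_\weight\cdot u\bigr)^2.
\]
Next I would apply Cauchy--Schwarz (equivalently, the fact that the empirical average minimizes the sum of squared deviations, so shifting the centering from $\mu_\Bsc$ to $\mu_\weight$ only increases the sum):
\[
\sum_{\batch\in\Bsc}\bigl(z^\batch\cdot u - \mu_\weight\cdot u\bigr)^2 \;\ge\; \frac{1}{\size{\Bsc}}\Bigl(\sum_{\batch\in\Bsc}\bigl(z^\batch\cdot u - \mu_\weight\cdot u\bigr)\Bigr)^2 \;=\; \size{\Bsc}\bigl((\mu_\Bsc-\mu_\weight)\cdot u\bigr)^2,
\]
and then substitute the choice of $u$ so that $(\mu_\Bsc-\mu_\weight)\cdot u = -\|\mu_\weight-\mu_\Bsc\|$. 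Chaining the two displays yields the desired bound $\|\Cov_\weight(z^\batch)\| \ge \frac{\size{\Bsc}}{2\size{\allbatches}}\|\mu_\weight - \mu_\Bsc\|^2$.

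There is no real obstacle here; the only subtlety is bookkeeping in (i) converting the matrix inequality to a scalar one by choosing $u$ carefully, and (ii) correctly using $\weight^\batch\ge 1/2$ on $\Bsc$ together with the trivial bound $\weight^\allbatches\le \size{\allbatches}$ to replace the weighted normalization by the combinatorial factor $\size{\Bsc}/(2\size{\allbatches})$. The rest is a one-line Cauchy--Schwarz step exploiting that the empirical mean is the minimizer of the sum of squared deviations.
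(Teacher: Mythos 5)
Your proof is correct and follows essentially the same route as the paper: restrict the covariance sum to $\Bsc$, use $\weight^\batch\ge 1/2$ together with $\weight^\allbatches\le\size{\allbatches}$, and then extract the squared mean difference. The only cosmetic difference is that you project onto the direction of $\E_{\weight}[z^\batch]-\E_{\Bsc}[z^\batch]$ at the outset and apply scalar Cauchy--Schwarz, whereas the paper stays at the matrix level and uses the fact that $\bigl\|\sum_{\batch\in\Bsc}(z^\batch-Z)(z^\batch-Z)^\intercal\bigr\|\ge\size{\Bsc}\,\|\E_{\Bsc}[z^\batch]-Z\|^2$; the two steps are equivalent.
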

The proof of the lemma appears in Section~\ref{sec:yettoname}.



In Theorem~\ref{th:tobenamed} we show that since $\weight^{\goodbatches} \ge 3/4\sizegood$, we can find a sub-collection $\Gsc$ of size $\sizegood/2$ such that for each $\batch\in \Gsc$, its weight  $\weight^\batch\ge 1/2$. 
The we use the previous results for $\Bsc = \Gsc$ and $z = \clipbatchgrad$ to get,
$\Cov_{\weight}(\clipbatchgrad)\ge \frac{\size{\Gsc}}{4\size{\allbatches}}\|\E_{\weight}[\clipbatchgrad]-\E_{\Gsc}[\clipbatchgrad]\|\ge \frac{\sizegood}{8\size{\allbatches}}\|\E_{\weight}[\clipbatchgrad]-\E_{\Gsc}[\clipbatchgrad]\|\ge \frac{\goodfrac}{8}\|\E_{\weight}[\clipbatchgrad]-\E_{\Gsc}[\clipbatchgrad]\|^2$.

From condition~\ref{cond:stat} of nice triplets we have  $\E_{\weight}[\clipbatchgrad]\approx 0$ and from Lemma~\ref{lem:yettoname} we have $\E_{\Gsc}[\clipbatchgrad]\gtrsim \|w-w^*\|$. Then we get an upper bound $\Cov_{\weight}(\clipbatchgrad)\gtrsim \goodfrac\cdot\|w-w^*\|^2$.

As discussed before, combining this lower bound with the  
upper bound in condition~\ref{cond:cov}, the theorem concludes $\|w-w^*\| = \tilde \cO(\sigma/\sqrt{n\goodfrac})$.
Next, we formally prove Theorem~\ref{th:tobenamed} using the above auxiliary lemmas and theorems.


\begin{proof}[Proof of Theorem~\ref{th:tobenamed}]

Let $\Gsc := \{\batch\in\goodbatches:\weight^\batch\ge 1/2\}$.
Next, we show that $|\Gsc|\ge \sizegood/2$. To prove it by contradiction assume the contrary that $|\Gsc|< \sizegood/2$. 
Then
\[
\weight^\goodbatches = \sum_{\batch\in \goodbatches}\weight^\batch = \sum_{\batch\in \goodbatches\setminus\Gsc}\weight^\batch +\sum_{\batch\in \Gsc}\weight^\batch \ineqlabel{a}\le \sum_{\batch\in \goodbatches\setminus\Gsc}\frac{1}{2}+ \sum_{\batch\in \Gsc} 1 \le \frac{|\goodbatches\setminus\Gsc|}{2}+|\Gsc|= \frac{|\goodbatches|-|\Gsc|}{2}+|\Gsc|< 3\sizegood/4,
\]
here (a) follows as the definition of $\Gsc$ implies that for any $\batch\notin\Gsc$, $\weight^\batch<1/2$ and for all batches $\weight^\batch\le 1$.
Above is a contradiction, as we assumed in the Theorem that $\weight^\goodbatches \ge 3\sizegood/4$.

Applying Lemma~\ref{lem:yettoname} for $\Bsc = \Gsc$ and $z^\batch = \clipbatchgrad$ we have
\begin{align}\label{eq:covlb}
\|\Cov_{\weight} ( \clipbatchgrad )\|
    &\ge\frac{\Gsc}{2\size{\allbatches}} \mleft( \|\E{}_{\Gsc}[\clipbatchgrad]\|-\|\E{}_{\weight} [ \clipbatchgrad]\|\mright)^2\nonumber\\
    &\ge \frac{\sizegood}{4\size{\allbatches}} \mleft( \|\E{}_{\Gsc}[\clipbatchgrad]\|-\|\E{}_{\weight} [ \clipbatchgrad]\|\mright)^2\nonumber\\
    &\ge \frac{\goodfrac}{4} \mleft( \|\E{}_{\Gsc}[\clipbatchgrad]\|-\|\E{}_{\weight} [ \clipbatchgrad]\|\mright)^2.
\end{align}
In the above equation, using the bound in Lemma~\ref{lem:meanhalfgoodbound} and bound on $\|\E{}_{\weight} [ \clipbatchgrad]\|$ in condition~\ref{cond:stat} for nice triplet we get,
\begin{align*}
\|\Cov_{\weight} ( \clipbatchgrad )\|
    &\ge \frac{\goodfrac}{4} \mleft(\max\mleft\{0, \frac{1}{2\connum}\|w-w^*\|- \frac{\log(2/\goodfrac)\sigma}{8\sqrt{\bsize\goodfrac}}-\frac{\sqrt{2\uccb}\sigma}{\sqrt{\bsize}}-\frac{\log(2/\goodfrac)\sigma}{8\sqrt{\bsize\goodfrac}}\mright\}\mright)^2.
\end{align*}
We show that when $\|w-w^*\|\le\cO(\frac{\connum C\sigma\log(2/\goodfrac)}{\sqrt{\bsize\goodfrac}}) $, the above upper bound contradicts the following lower bound in condition~\ref{cond:cov},
\[
 \|\Cov_{\weight}(\clipbatchgrad)\| \le \frac{\uccc C^2\log^2(2/\goodfrac)(\sigma^2 + \E{}_\dist[|(w-w^*)\cdot x_i^\batch |]^2)}{\bsize}\le \frac{\uccc C^2\log^2(2/\goodfrac)(\sigma^2 + \|w-w^*\|^2)}{\bsize}.
\]

To prove the contradiction assume
\begin{align*}
\frac{\|w-w^*\|}{8\connum} >  \max\mleft\{
\frac{\log(2/\goodfrac)\sigma}{4\sqrt{\bsize\goodfrac}},\frac{\sqrt{2\uccb}\sigma}{\sqrt{\bsize}}, \frac{2\sqrt{\uccc}\connum C\sigma\log(2/\goodfrac)}{\sqrt{\bsize\goodfrac}}\mright\}.    
\end{align*}
Using this lower bound on $\|w-w^*\|$, we lower bound the co-variance.
Combining the above lower bound on $\|w-w^*\|$ and equation~\eqref{eq:covlb}, we get,
\begin{align*}
  \|\Cov_{\weight} ( \clipbatchgrad )\|&\ge \frac{\goodfrac}{4} \mleft(  \frac{1}{4\connum}\|w-w^*\|\mright)^2\\
  &\ge \frac{\goodfrac}{4} \mleft(  \frac{2\sqrt{\uccc}\connum C\sigma\log(2/\goodfrac)}{\sqrt{\bsize\goodfrac}}+\frac{1}{8\connum}\|w-w^*\|\mright)^2\\
  &\ge \frac{\goodfrac}{4} \mleft(  \frac{2\sqrt{\uccc}\connum C\sigma\log(2/\goodfrac)}{\sqrt{\bsize\goodfrac}}\mright)^2+ \frac{\goodfrac}{4} \mleft(\frac{1}{8\connum}\|w-w^*\|\mright)^2\\
      &\ge \frac{\uccc C^2\log^2(2/\goodfrac)\sigma^2}{\bsize}+\frac{\goodfrac}{256}
  \frac{\|w-w^*\|^2}{\connum}\\
   &\ge \frac{\uccc C^2\log^2(2/\goodfrac)\sigma^2}{\bsize}+\frac{\uccc C^2\log^2(2/\goodfrac)\|w-w^*\|^2}{\bsize},
\end{align*}
here the last step used $\bsize\ge \frac{256}{\goodfrac}\uccc C^2\connum^2\log^2(2/\goodfrac)$.

This completes the proof of the contradiction. Hence,
\begin{align*}
\frac{\|w-w^*\|}{8\connum} \le  \max\mleft\{
\frac{\log(2/\goodfrac)\sigma}{4\sqrt{\bsize\goodfrac}},\frac{\sqrt{2\uccb}\sigma}{\sqrt{\bsize}}, \frac{2\sqrt{\uccc}\connum C\sigma\log(2/\goodfrac)}{\sqrt{\bsize\goodfrac}}\mright\}.    
\end{align*}
The above equation implies $\|w-w^*\|\le\cO(\frac{\connum C\sigma\log(2/\goodfrac)}{\sqrt{\bsize\goodfrac}}) $.
\end{proof}




\subsection{Proof of Theorem~\ref{th:clipmean}}\label{sec:lempdis}
\begin{proof}
When $w\neq w^*$ the bound holds trivially. Hence, in the remainder of the proof, we assume $w\neq w^*$.
Let $u \ed \frac{w-w^*}{\|w-w^*\|}$ and $Z_i^b\ed\mathbbm 1\mleft( (|x^\batch_i\cdot (w-w^*)|\ge \kappa/2)\cup (|n^\batch_i|\ge \kappa/2)\mright)$.
Then,
\[
\E{}_\dist[\clipsampgrad] \ge |\E{}_\dist[\clipsampgrad\cdot u]| \ge |\E{}_\dist[\sampgrad\cdot u]|- \E{}_\dist[|\sampgrad\cdot u-\clipsampgrad\cdot u|].
\]

Next,
\begin{align*}
|\sampgrad\cdot u-\clipsampgrad\cdot u|
&\ineqlabel{a}= Z_i^\batch|\sampgrad\cdot u-\clipsampgrad\cdot u|\\
&\ineqlabel{b}\le Z_i^\batch|\sampgrad\cdot u| \\
&= Z_i^\batch|(x^\batch_i\cdot (w-w^*)-n^\batch_i)x_i^\batch\cdot u|\\
&\ineqlabel{c}\le  Z_i^\batch \|w-w^*\| (x_i^\batch\cdot u)^2 + Z_i^\batch \cdot |n^\batch_i|\cdot|x_i^\batch\cdot u|,
\end{align*}
here (a) follows as $Z_i^\batch = 0$ implies $|x^\batch_i\cdot (w-w^*)|\le \kappa/2$ and $|n^\batch_i|\le \kappa/2$ and hence $ \sampgrad=\clipsampgrad$, (b) follows since  $\clipsampgrad\cdot u$ and $\sampgrad \cdot u$ has the same sign and $|\clipsampgrad\cdot u|\le |\sampgrad \cdot u|$ and (c) follows as  $u = \frac{w-w^*}{\|w-w^*\|}$ and the triangle inequality. 

From the above two equations,
\begin{align}
|\E{}_\dist[\clipsampgrad]|& \ge |\E{}_\dist[\sampgrad\cdot u]|- \E{}_\dist[|\sampgrad\cdot u-\clipsampgrad\cdot u|]\nonumber\\
 &\ge |\E{}_\dist[(x^\batch_i\cdot (w-w^*)-n^\batch_i)x_i^\batch\cdot u]|  -\|w-w^*\|\cdot\E{}_\dist[Z_i^\batch(x_i^\batch\cdot u)^2] - \E{}_\dist[Z_i^\batch\cdot|n^\batch_i|\cdot|x_i^\batch\cdot u|] \nonumber \\
 &\ineqlabel{a}=|\E{}_\dist[(x^\batch_i\cdot (w-w^*))x_i^\batch\cdot u]|  -\|w-w^*\|\cdot\E{}_\dist[Z_i^\batch(x_i^\batch\cdot u)^2] - \E{}_\dist[Z_i^\batch\cdot|n^\batch_i|\cdot|x_i^\batch\cdot u|]  \nonumber \\
 &\ge \|w-w^*\|\cdot\E{}_\dist[(x_i^\batch\cdot u)^2]  -\|w-w^*\|\cdot\E{}_\dist[Z_i^\batch(x_i^\batch\cdot u)^2] - \E{}_\dist[Z_i^\batch\cdot|n^\batch_i|\cdot|x_i^\batch\cdot u|] ,\label{eq:gradexpbound}
\end{align}
here (a) follows as $n_i^\batch$ is zero mean and independent of $x_i^\batch$.

From the definition of $Z_i^\batch$ it follows that 
$Z_i^\batch  \le  \mathbbm 1 (|x^\batch_i\cdot (w-w^*)|\ge \kappa/2)+ \mathbbm 1(|n^\batch_i|\ge \kappa/2)$ and $\E_\dist[Z_i^\batch]\le \Pr[|x^\batch_i\cdot (w-w^*)|\ge \kappa/2] + \Pr[|n^\batch_i|\ge \kappa/2]$.

Now we bound the second term on the right in equation~\eqref{eq:gradexpbound},
\begin{align}
\E{}_\dist[Z_i^\batch(x_i^\batch\cdot u)^2] 
&\ineqlabel{a}\le   \sqrt{\E[(Z_i^\batch)^2]
\cdot \E{}_\dist[(x_i^\batch\cdot u)^4]}\nonumber\\
&\ineqlabel{b}=\sqrt{\E[Z_i^\batch]
\cdot \E{}_\dist[(x_i^\batch\cdot u)^4]}\nonumber\\
&\ineqlabel{c}\le\sqrt{\E[Z_i^\batch]}\sqrt {C
 (\E{}_\dist[(x_i^\batch\cdot u)^2]^2)^2}\nonumber\\
&\ineqlabel{d}\le\sqrt{\Pr[|x^\batch_i\cdot (w-w^*)|\ge \kappa/2] + \Pr[|n^\batch_i|\ge \kappa/2]}\sqrt {C
 (\E{}_\dist[(x_i^\batch\cdot u)^2]^2)^2}\nonumber\\
 &\le\mleft(\sqrt{\Pr[|x^\batch_i\cdot (w-w^*)|\ge \kappa/2]} + \sqrt{\Pr[|n^\batch_i|\ge \kappa/2]}\mright)\sqrt {C}
 \E{}_\dist[(x_i^\batch\cdot u)^2],\label{eq:othereq}
\end{align}
where (a) used the Cauchy-Schwarz inequality, (b) used the fact that $Z_i^\batch$ is an indicator random variable, hence, $(Z_i^\batch)^2=Z_i^\batch$, and (c) uses $L4-L2$ hypercontractivity and (d) uses the upper bound on $\E[Z_i^\batch]$.

Next, we bound the last term on the right in equation~\eqref{eq:gradexpbound},
\begin{align}
&\E{}_\dist[Z_i^\batch\cdot|n^\batch_i|\cdot|x_i^\batch\cdot u|] \le \E{}_\dist\mleft[\mleft( \mathbbm 1(|n^\batch_i|\ge \kappa/2)+\mathbbm 1 (|x^\batch_i\cdot (w-w^*)|\ge \kappa/2)\mright)\cdot|n^\batch_i|\cdot|x_i^\batch\cdot u|\mright]\nonumber  \\ 
 &\le \E{}_\dist[|\mathbbm 1(|n^\batch_i|\ge \kappa/2)\cdot|n^\batch_i|\cdot|x_i^\batch\cdot u|] +\E{}_\dist[\mathbbm 1 (|x^\batch_i\cdot (w-w^*)|\ge \kappa/2)\cdot|n^\batch_i|\cdot|x_i^\batch\cdot u|]\nonumber\\
 &= \E{}_\dist[|x_i^\batch\cdot u|]\cdot\E{}_\dist[\mathbbm 1(|n^\batch_i|\ge \kappa/2)\cdot|n^\batch_i|]+\E{}_\dist[|n^\batch_i|]\cdot\E{}_\dist[\mathbbm 1 (|x^\batch_i\cdot (w-w^*)|\ge \kappa/2)\cdot|x_i^\batch\cdot u|] \nonumber\\
  &\le \sqrt{\E{}_\dist[|x_i^\batch\cdot u|^2]}\cdot\E{}_\dist[\mathbbm 1(|n^\batch_i|\ge \kappa/2)\cdot|n^\batch_i|]+\sqrt{\E{}_\dist[|n^\batch_i|^2}]\cdot \E{}_\dist[\mathbbm 1 (|x^\batch_i\cdot (w-w^*)|\ge \kappa/2)\cdot|x_i^\batch\cdot u|]\nonumber\\
   &\le \E{}_\dist[\mathbbm 1(|n^\batch_i|\ge \kappa/2)\cdot|n^\batch_i|]+\frac{\sigma}{\|w-w^*\|} \E{}_\dist[\mathbbm 1 (|x^\batch_i\cdot (w-w^*)|\ge \kappa/2)\cdot|x_i^\batch\cdot (w-w^*)|].\label{eq:tempmany}
\end{align}
Next, we bound the two expectations involving the indicator functions. For $p\ge 2$,
\begin{align*}
\E{}_\dist[\mathbbm 1(|n^\batch_i|\ge \kappa/2)\cdot|n^\batch_i|]& \le \E{}_\dist[\mathbbm 1(|n^\batch_i|\ge \kappa/2)^{\frac{p-1}{p}}]^\frac{p}{p-1} \E{}_\dist[|n^\batch_i|^{p}]^{1/p} \\
&\le \Pr[|n^\batch_i|\ge \kappa/2]^\frac{p-1}p \E{}_\dist[|n^\batch_i|^{p}]^{1/p}. 
\end{align*}.

Applying Markov inequality for $|n^\batch_i|^p$
\begin{align}
\Pr[ |n^\batch_i|\ge \kappa/2] \le \frac{\E{}_\dist[|n_i^\batch|^p]}{(\kappa/2)^p}.\label{eq:othereqa}    
\end{align}
Combining the two bounds we get,
\begin{align}
\E{}_\dist[\mathbbm 1(|n^\batch_i|\ge \kappa/2)\cdot|n^\batch_i|]& \le  \frac{\E{}_\dist[|n_i^\batch|^p]}{(\kappa/2)^{p-1}}. \label{eq:tempmanya}
\end{align}

Similarly, one can show,
\begin{align}
\Pr[ |x^\batch_i\cdot (w-w^*)|\ge \kappa/2] \le \frac{\E{}_\dist[|x_i^\batch\cdot (w-w^*)|^p]}{(\kappa/2)^p}\le \frac{\E{}_\dist[|x_i^\batch\cdot (w-w^*)|^4]}{(\kappa/2)^4}, \label{eq:othereqb}  
\end{align}
and
\begin{align}
\E{}_\dist[\mathbbm 1 (|x^\batch_i\cdot (w-w^*)|\ge \kappa/2)\cdot|x_i^\batch\cdot (w-w^*)|]
\ge\frac{\E{}_\dist[|x_i^\batch\cdot (w-w^*)|^p]}{(\kappa/2)^{p-1}}
\ge \frac{\E{}_\dist[|x_i^\batch\cdot (w-w^*)|^4]}{(\kappa/2)^3}. \label{eq:tempmanyb}   
\end{align}

Combining Equations~\eqref{eq:tempmany},~\eqref{eq:tempmanya} and~\eqref{eq:tempmanyb},
\begin{align}
\E{}_\dist[Z_i^\batch\cdot|n^\batch_i|\cdot|x_i^\batch\cdot u|] &\le    \frac{\E{}_\dist[|n_i^\batch|^p]}{(\kappa/2)^{p-1}}+\frac{\sigma}{\|w-w^*\|} \frac{\E{}_\dist[|x_i^\batch\cdot (w-w^*)|^4]}{(\kappa/2)^3}\nonumber\\
&\ineqlabel{a}\le\frac{\nhypcon\sigma^{\npower}}{(\kappa/2)^{p-1}}+\frac{\sigma}{\|w-w^*\|} \frac{C\E{}_\dist[|x_i^\batch\cdot (w-w^*)|^2]^2}{(\kappa/2)^3}\nonumber\\
&\ineqlabel{b}\le\frac{\nhypcon\sigma^{\npower}}{(\kappa/2)^{p-1}}+\frac{1}{\|w-w^*\|} \frac{C\E{}_\dist[|x_i^\batch\cdot (w-w^*)|^2]^2}{(\kappa/2)^2}\nonumber\\
&\ineqlabel{b}=\frac{\nhypcon\sigma^{\npower}}{(\kappa/2)^{p-1}}+{\|w-w^*\|} \frac{C(\E{}_\dist[|x_i^\batch\cdot u|^2])(\E{}_\dist[|x_i^\batch\cdot (w-w^*)|^2])}{(\kappa/2)^2}\nonumber\\
&\ineqlabel{c}\le\frac{\log(2/\goodfrac)\sigma}{8\sqrt{\bsize\goodfrac}}+{\|w-w^*\|} \frac{\E{}_\dist[|x_i^\batch\cdot u|^2]}{16}\nonumber,
\end{align}
here (a) use hypercontractivity of $x_i^\batch$ and $n_i^\batch$, (b) uses $\kappa\ge 2(8\nhypcon C)^{1/\npower}\sigma\ge  2\sigma$ as hypercontractivity constants $\nhypcon,C\ge 1$, and (c) uses $\kappa\ge  2(\frac{8\nhypcon\sqrt{\bsize\goodfrac} }{\log(2/\goodfrac)})^{1/{(\npower-1)}}\sigma $ and $\kappa\ge 8 \sqrt{C\E{}_\dist[|x_i^\batch\cdot (w-w^*)|^2]}$.

Combining Equations~\eqref{eq:othereq},~\eqref{eq:othereqa} and~\eqref{eq:othereqb},
\begin{align}
\E{}_\dist[Z_i^\batch(x_i^\batch\cdot u)^2]  &\le\mleft(\sqrt{\frac{\E{}_\dist[|x_i^\batch\cdot (w-w^*)|^4]}{(\kappa/2)^4}} + \sqrt{\frac{\E{}_\dist[|n_i^\batch|^p]}{(\kappa/2)^p}}\mright)\sqrt {C}
 \E{}_\dist[(x_i^\batch\cdot u)^2]  \nonumber\\  
& \ineqlabel{a}\le\mleft(\sqrt{\frac{C\E{}_\dist[|x_i^\batch\cdot (w-w^*)|^2]^2}{(\kappa/2)^4}} + \sqrt{\frac{\nhypcon\sigma^{\npower}}{(\kappa/2)^p}}\mright)\sqrt {C}
 \E{}_\dist[(x_i^\batch\cdot u)^2]  \nonumber\\
 & \ineqlabel{b}\le
 \frac{\E{}_\dist[(x_i^\batch\cdot u)^2] }4, \nonumber  
\end{align}
here (a) use hypercontractivity of $x_i^\batch$ and $n_i^\batch$ and $\E[\|x_i^\batch\cdot u\|^2]\le \|\Sigma\|= 1$ and (b) uses $\kappa \ge \sqrt{C\E{}_\dist[|x_i^\batch\cdot (w-w^*)|^2]}$ and $\kappa \ge 2(8\nhypcon C)^{1/\npower}\sigma$.

Combining the above two bounds and Equation~\eqref{eq:gradexpbound}, 
\begin{align*}
  |\E{}_\dist[\clipsampgrad]| &\ge \frac{11}{16}
  \|w-w^*\|\cdot\E{}_\dist[(x_i^\batch\cdot u)^2]  - \frac{\log(2/\goodfrac)\sigma}{8\sqrt{\bsize\goodfrac}}\\
  &\ge \frac{11}{16}
  \|w-w^*\|\cdot\frac{\|\Sigma\|}{\connum}  - \frac{\log(2/\goodfrac)\sigma}{8\sqrt{\bsize\goodfrac}}\\
  &\ge \frac{11 \|w-w^*\|}{16\connum}
  - \frac{\log(2/\goodfrac)\sigma}{8\sqrt{\bsize\goodfrac}},
\end{align*}
where in the last step use the bound on condition number of $\Sigma$ and the assumption that {$\|\Sigma\| = 1$}.
\end{proof}

\subsection{Proof of Lemma~\ref{lem:yettoname}}\label{sec:yettoname}
\begin{proof}
Note that
\begin{align*}
\|\Cov_{\weight} ( z^\batch )\| &=\mleft\|\sum_{\batch\in \allbatches}\frac{\weight^\batch}{\weight^{\allbatches}} ( z^\batch-\E{}_{\weight} [ z^\batch])( z^\batch-\E{}_{\weight} [ z^\batch])^\intercal\mright\|\\
 &\ge\mleft\|\sum_{\batch\in \Bsc}\frac{\weight^\batch}{\weight^{\allbatches}} ( z^\batch-\E{}_{\weight} [ z^\batch])( z^\batch-\E{}_{\weight} [ z^\batch])^\intercal\mright\|\\
 &\ineqlabel{a}\ge\mleft\|\sum_{\batch\in \Bsc}\frac{1}{2\size{\allbatches}} ( z^\batch-\E{}_{\weight} [ z^\batch])( z^\batch-\E{}_{\weight} [ z^\batch])^\intercal\mright\|\\
  &\ineqlabel{b}\ge\frac{1}{2\size{\allbatches}}\mleft\|\size{\Bsc} ( \E{}_{\Bsc}[z^\batch]-\E{}_{\weight} [ z^\batch])( \E{}_{\Bsc}[]z^\batch]-\E{}_{\weight} [ z^\batch])^\intercal\mright\|\\
& =\frac{\size{\Bsc}}{2\size{\allbatches}}\|\E{}_{\weight}[z^\batch]-\E{}_{\Bsc}[z^\batch]\|^2
  ,
\end{align*}
where (a) used $\weight^\batch\ge 1/2$ for $\batch\in \Bsc$ and the trivial bound $\weight^\allbatches\le\size{\allbatches}$ and (b) follows from the fact that
any $Z$,
\begin{align*}
\mleft\|\sum_{\batch\in \Bsc} (z^\batch- Z)(z^\batch- Z)^\intercal\mright\| \ge |\Bsc|\cdot\mleft\|\mleft(\E{}_{\Bsc}[z^\batch]-Z\mright)\mleft(\E{}_{\Bsc}[z^\batch]-Z\mright)^\intercal\mright\|.    
\end{align*}
We complete the proof of the lemma by proving the above fact.
\begin{align*}
&\mleft\|\sum_{\batch\in \Bsc} (z^\batch- Z)(z^\batch- Z)^\intercal\mright\| 
= \mleft\|\sum_{\batch\in \Bsc} (z^\batch-\E{}_{\Bsc}[z^\batch]+\E{}_{\Bsc}[z^\batch]- Z)(z^\batch-\E{}_{\Bsc}[z^\batch]+\E{}_{\Bsc}[z^\batch]- Z)^\intercal\mright\|\\
&\ineqlabel{a}= \mleft\|\sum_{\batch\in \Bsc}\mleft( (z^\batch-\E{}_{\Bsc}[z^\batch])(z^\batch-\E{}_{\Bsc}[z^\batch])^\intercal +(\E{}_{\Bsc}[z^\batch]- Z)(\E{}_{\Bsc}[z^\batch]- Z)^\intercal\mright) \mright\|\\
&\ineqlabel{b}\ge \size{\Bsc}\cdot\mleft\|(\E{}_{\Bsc}[z^\batch]- Z)(\E{}_{\Bsc}[z^\batch]- Z)^\intercal \mright\|,
\end{align*}
here (a) follows as $ \sum_{\batch\in \Bsc} z^\batch = |\Bsc|\E{}_{\Bsc}[z^\batch]$ and hence, $ \sum_{\batch\in \Bsc} (z^\batch-\E{}_{\Bsc}[z^\batch])(\E{}_{\Bsc}[z^\batch]- Z)^\intercal = \sum_{\batch\in \Bsc} (\E{}_{\Bsc}[z^\batch]- Z)(z^\batch-\E{}_{\Bsc}[z^\batch])^\intercal = 0$, and (b) follows as $ (z^\batch-\E{}_{\Bsc}[z^\batch])(z^\batch-\E{}_{\Bsc}[z^\batch])^\intercal$ are positive semi-definite matrices.

\end{proof}

\section{Subroutine \textsc{FindClippingParameter} and its analysis}\label{sec:clippar}

\begin{algorithm}[!th]
   \caption{\textsc{FindClippingPparameter}}
   \label{alg:clip}
\begin{algorithmic}[1]
   \STATE {\bfseries Input:} Set $\allbatches$, $\weight$, $\sigma$, $a_1\ge 1$, $a_2$ data $\{\{(x_i^\batch,y_i^\batch\}_{i\in[\bsize]}\}_{\batch\in\allbatches}.$
        \STATE{$\kappa\gets \infty$}
        \WHILE{True}
        \STATE{$w_\kappa\gets$ any approximate stationary point of clipped losses $\{ f^\batch(\,\cdot\,,\kappa)\}$ w.r.t. weight vector $\weight$ such that $\|\E_{\weight}[f^\batch(w_\kappa,\kappa)]\|\le \frac{\log(2/\goodfrac)\sigma}{8\sqrt{\bsize\goodfrac}}$}
        \STATE{\begin{align}\label{eq:setkappa}
   \textstyle\kappa_{new}\gets \max\mleft\{a_1 \sqrt{\E{}_{\weight}[f^\batch(w_\kappa,\kappa)]} , a_2\sigma \mright\}.
   \end{align}}
        \IF{$\kappa_{new}\ge \kappa/2$ }
        \STATE{Break}
        \ENDIF
        \STATE{$\kappa\gets \kappa_{new}$}
        \ENDWHILE
    \STATE{Return($\kappa,w_\kappa$)}
\end{algorithmic}
\end{algorithm}

\begin{theorem}\label{th:clipparbound}
For any weight vector $\weight$, $a_1\ge 1$, and $a_2>0$,  Algorithm~\textsc{FindClippingParameter} runs at most 
$\log \mleft(\cO\mleft(\frac{\max_{i,\batch} |y_i^\batch|}\sigma\mright)\mright)$ iterations of the while loop and returns $\kappa$ and $w_\kappa$ such that
\begin{enumerate}
    \item $w_\kappa$ is a (approximate) stationary point for $\{f^\batch(\cdot,\kappa)\}$ w.r.t. weight vector $\weight$ such that \[
   \textstyle \|\E{}_{\weight}[f^\batch(w_\kappa,\kappa)]\|\le \frac{\log(2/\goodfrac)\sigma}{8\sqrt{\bsize\goodfrac}}.\]
    \item  $ \max\mleft\{a_1 \sqrt{\E_{\weight}[f^\batch(w_\kappa,\kappa)]}, a_2\sigma\mright\} \le \kappa \le  2\max\mleft\{a_1 \sqrt{\E_{\weight}[f^\batch(w_\kappa,\kappa)]}, a_2\sigma\mright\}.$
    \item  $ \max\mleft\{\frac{a_1}2 \E_{\weight}\mleft[\frac{1}{\bsize}\sum_{i\in[\bsize]}|w_\kappa\cdot x_i^\batch-y_i^\batch|\mright], a_2\sigma\mright\} \le \kappa \le  \max\mleft\{4 a_1^2 \E_{\weight}\mleft[\frac{1}{\bsize}\sum_{i\in[\bsize]}|w_\kappa\cdot x_i^\batch-y_i^\batch|\mright], a_2\sigma\mright\}.$ 
\end{enumerate}
\end{theorem}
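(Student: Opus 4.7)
The plan rests on three properties of the clipped (Huber) loss: (i) $f^\batch(w,\cdot)$ is non-decreasing in $\kappa$ pointwise, (ii) $f^\batch(\cdot,\kappa)$ is convex in $w$, so an approximate stationary point $w_\kappa$ is an approximate minimizer of $\E_{\weight}[f^\batch(\cdot,\kappa)]$, and (iii) the pointwise sandwich $\min\{r^2/2,\kappa r/2\}\le f(r,\kappa)\le\kappa r$ for $r\ge 0$. Throughout I write $F:=\E_{\weight}[f^\batch(w_\kappa,\kappa)]$ and $U:=\E_{\weight}[\frac{1}{\bsize}\sum_i|w_\kappa\cdot x_i^\batch-y_i^\batch|]$ at the final iteration.

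Termination holds because after the first round $\kappa$ drops from $\infty$ to a finite value of order $O(\max_{i,\batch}|y_i^\batch|)+a_2\sigma$, each non-terminating iteration at least halves $\kappa$, and $\kappa\ge a_2\sigma$ is maintained, giving the stated logarithmic iteration bound. Item~1 is immediate from the algorithm's explicit construction of $w_\kappa$ inside the loop.

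For item~2, the upper bound $\kappa\le 2\max\{a_1\sqrt F,a_2\sigma\}$ follows directly from the break condition $\kappa_{new}\ge\kappa/2$ and the definition $\kappa_{new}=\max\{a_1\sqrt F,a_2\sigma\}$. For the matching lower bound, the current $\kappa$ equals the previous iteration's $\kappa_{new,prev}=\max\{a_1\sqrt{F_{prev}},a_2\sigma\}$, yielding $\kappa\ge a_2\sigma$ immediately. Because the loop only decreases $\kappa$ monotonically ($\kappa\le\kappa_{prev}$), properties (i) and (ii) give the chain $F=\E_{\weight}[f^\batch(w_\kappa,\kappa)]\le\E_{\weight}[f^\batch(w_{\kappa_{prev}},\kappa)]\le\E_{\weight}[f^\batch(w_{\kappa_{prev}},\kappa_{prev})]=F_{prev}$, so $\kappa\ge a_1\sqrt{F_{prev}}\ge a_1\sqrt F$.

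For item~3, the upper bound in (iii) averaged over samples and batches gives $F\le\kappa U$. Plugging this into item~2's upper bound and case-splitting on which term of $\max\{a_1\sqrt F,a_2\sigma\}$ dominates at termination yields $\kappa\le\max\{4a_1^2 U,a_2\sigma\}$, with a constant factor on the $a_2\sigma$ branch absorbed using $\kappa\ge a_2\sigma$. For the lower bound, Jensen's inequality applied twice—within each batch across its $n$ samples, then across the weighted empirical distribution of batches—to the convex function $f(\cdot,\kappa)$ gives $F\ge f(U,\kappa)$, and the lower sandwich in (iii) then yields $F\ge\min\{U^2/2,\kappa U/2\}$. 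Combining with $\kappa\ge a_1\sqrt F$ from item~2 and $a_1\ge 1$, a two-case split on $U\le\kappa$ vs.\ $U>\kappa$ gives $\kappa\ge a_1 U/2$, and together with $\kappa\ge a_2\sigma$ this is the claimed lower bound.

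The main obstacle is item~2's lower bound: at the breaking iteration $\kappa$ is determined by data from the \emph{previous} iteration, and translating this into a constraint involving the \emph{current} $F$ requires chaining the monotonicity of the clipped loss in $\kappa$ with the optimality of $w_\kappa$. Once item~2 is in place, item~3 is a routine sandwich argument using (iii) and Jensen's inequality.
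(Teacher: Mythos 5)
Your proposal is correct, and its skeleton matches the paper's proof: termination via the first-iteration bound and halving of $\kappa$ above $a_2\sigma$; item 1 by construction; item 2's upper bound from the break condition; and item 2's lower bound by exactly the paper's key chain (monotonicity of the clipped loss in $\kappa$ plus optimality of the stationary point, giving $\E{}_{\weight}[f^\batch(w_\kappa,\kappa)]\le \E{}_{\weight}[f^\batch(w_{\kappa_{prev}},\kappa_{prev})]$), which you phrase as a one-step comparison with the previous iteration while the paper phrases it as an invariant maintained by induction from $\kappa=\infty$ --- the same idea. The one genuine divergence is the lower bound of item 3: the paper assumes $\kappa<\frac{a_1}{2}\E_{\weight}[\frac1\bsize\sum_i|w_\kappa\cdot x_i^\batch-y_i^\batch|]$ and derives a contradiction with $\kappa\ge a_1\sqrt{\E_{\weight}[f^\batch(w_\kappa,\kappa)]}$ through a Cauchy--Schwarz chain on the indicator-split expectation, whereas you apply Jensen's inequality twice to the convex, nondecreasing Huber function of the absolute residual to get $F\ge f(U,\kappa)\ge\min\{U^2/2,\kappa U/2\}$ and then case-split on $U\lessgtr\kappa$; both yield $\kappa\ge a_1U/2$, and your route is somewhat cleaner and more direct. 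One shared imprecision: on the $a_2\sigma$ branch of item 3's upper bound, item 2 only gives $\kappa\le 2a_2\sigma$, so "absorbing the constant" does not literally recover $\kappa\le\max\{4a_1^2U,a_2\sigma\}$; the paper's own proof has the identical factor-2 slip at that step, so this is a defect of the stated constant rather than of your argument.
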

\begin{proof}

First, we bound the number of iterations of the while loop.
Since $w_{\kappa}$ is a stationary point for $f^\batch(.,\kappa)$, hence its will achieve a smaller loss than $w=0$, hence $\E_{\weight}[f^\batch(w_{\kappa},\kappa)]\le \E_{\weight}[f^\batch(0,\kappa)]$.
And, since the clipped loss is smaller than unclipped loss,  $\E_{\weight}[f^\batch(0,\kappa)] \le \E_{\weight}[f^\batch(0)] = \E_{\weight}[\frac{1}{\bsize}\sum_{i\in [\bsize]}(y_i^\batch)^2]\le \max_{i,\batch} (y_i^\batch)^2$.
Therefore after the first iteration $\kappa\le  \max\mleft\{a_1 \max_{i,\batch} |y_i^\batch| , a_2\sigma \mright\}$. 
Also in each iteration apart from the last one $\kappa$ decreases by a factor $2$ and $\kappa$ can't be smaller than $a_2\sigma$.
Hence, the number of iterations between the first one and the last one are at most $\log(\frac{a_1 \max_{i,\batch} |y_i^\batch|}{a_2\sigma}))$. Therefore the total number of iterations are at most $\log(\frac{a_1 \max_{i,\batch} |y_i^\batch|}{a_2\sigma}))+2$.

The first item follows from the definition of $w_\kappa$ in the subroutine \textsc{FindClippingPparameter}.

Next to prove the lower bound in item 2 we prove the claim that if in an iteration $\kappa\ge \max\mleft\{a_1 \sqrt{\E{}_{\weight}[f^\batch(w_\kappa,\kappa)]}, a_2\sigma \mright\}$ then the same condition will hold in the next iteration.

The condition $\kappa\ge  \max\mleft\{a_1 \sqrt{\E{}_{\weight}[f^\batch(w_\kappa,\kappa)]} , a_2\sigma \mright\}$ in the claim implies that $\kappa\ge \kappa_{new}$.
Then from the definition of clipped loss, for each $w$ and each $\batch$ we have $f^\batch(w,\kappa)\ge f^\batch(w,\kappa_{new})$.
It follows that $\E_{\weight}[f^\batch(w_{\kappa},\kappa)]\ge \E_{\weight}[f^\batch(w_{\kappa},\kappa_{new})]$.
And further $w_{\kappa_{new}}$ is stationary point for $f^\batch(.,\kappa_{new})$, hence it will achieve a smaller loss,  $\E_{\weight}[f^\batch(w_{\kappa_{new}},\kappa_{new})]\le \E_{\weight}[f^\batch(w_{\kappa},\kappa_{new})]$.
Therefore, $\E_{\weight}[f^\batch(w_{\kappa_{new}},\kappa_{new})]\le \E_{\weight}[f^\batch(w_{\kappa},\kappa)]$.
Hence, $\kappa_{new}= \max\mleft\{a_1 \sqrt{\E{}_{\weight}[f^\batch(w_\kappa,\kappa)]} , a_2\sigma \mright\}\ge \max\mleft\{a_1 \sqrt{\E_{\weight}[f^\batch(w_{\kappa_{new}},\kappa_{new})]} , a_2\sigma \mright\}$. This completes the proof of the claim.

Since the initial value of $\kappa$ is infinite the claim must hold in the first iteration, and therefore in each iteration thereafter. 
Therefore it must hold in the iteration when the algorithm terminates. This completes the proof of the lower bound in item 2.

The upper bound in the second item follows by observing that when the algorithm ends $\kappa\le 2\kappa_{new}$ and $\kappa_{new} =  a_1 \sqrt{\E_{\weight}[f^\batch(w_\kappa,\kappa)]} + a_2\sigma$.

Finally, we prove item 3 using item 2. 
We start by proving the lower bound in item 3.
From the lower bound in item 2, we have, $\kappa\ge a_2\sigma$. Then to complete the proof of the lower bound in item 3, it suffices to prove $\kappa  > \frac{a_1}2 \E_{\weight}\mleft[\frac{1}{\bsize}\sum_{i\in[\bsize]}|w_\kappa\cdot x_i^\batch-y_i^\batch|\mright]$.
To prove this by contradiction suppose $\kappa  < \frac{a_1}2 \E_{\weight}\mleft[\frac{1}{\bsize}\sum_{i\in[\bsize]}|w_\kappa\cdot x_i^\batch-y_i^\batch|\mright]$.
Then
\begin{align}
&\E{}_{\weight}[f^\batch(w_\kappa,\kappa)]\nonumber\\   &=\E{}_{\weight}\mleft[\frac{1}{\bsize}\sum_{i\in[\bsize]}f^\batch_i(w_\kappa,\kappa)\mright]\nonumber\\  
&=\frac{1}{\bsize}\sum_{i\in[\bsize]}\E{}_{\weight}\mleft[\mathbbm 1(|w_\kappa\cdot x_i^\batch-y_i^\batch|\le \kappa)\cdot\frac{(w_\kappa\cdot x_i^\batch-y_i^\batch)^2}2+\mathbbm 1(|w_\kappa\cdot x_i^\batch-y_i^\batch|> \kappa)\cdot\mleft(\kappa|w_\kappa\cdot x_i^\batch-y_i^\batch|-\frac{\kappa^2}2\mright)\mright]\nonumber\\   
&\ge\frac{1}{\bsize}\sum_{i\in[\bsize]}\mleft(\E{}_{\weight}\mleft[\mleft(\mathbbm 1(|w_\kappa\cdot x_i^\batch-y_i^\batch|\le \kappa)\cdot\frac{(w_\kappa\cdot x_i^\batch-y_i^\batch)^2}2\mright]+\E{}_{\weight}\mleft[\mathbbm 1(|w_\kappa\cdot x_i^\batch-y_i^\batch|> \kappa)\cdot\mleft(\frac{\kappa|w_\kappa\cdot x_i^\batch-y_i^\batch|}2\mright)\mright)\mright]\mright)\nonumber\\   
&\ineqlabel{a}\ge\frac{1}{2\bsize}\sum_{i\in[\bsize]}\E{}_{\weight}\mleft[\mathbbm 1(|w_\kappa\cdot x_i^\batch-y_i^\batch|\le \kappa)\cdot{|w_\kappa\cdot x_i^\batch-y_i^\batch|}\mright]^2+\frac{\kappa}{2\bsize}\sum_{i\in[\bsize]}\E{}_{\weight}\mleft[\mathbbm 1(|w_\kappa\cdot x_i^\batch-y_i^\batch|> \kappa)\cdot{|w_\kappa\cdot x_i^\batch-y_i^\batch|}\mright]\nonumber\\
&\ineqlabel{b}\ge\frac12\mleft(\frac{1}{\bsize}\sum_{i\in[\bsize]}\E{}_{\weight}\mleft[\mathbbm 1(|w_\kappa\cdot x_i^\batch-y_i^\batch|\le \kappa)\cdot{|w_\kappa\cdot x_i^\batch-y_i^\batch|}\mright]\mright)^2+\frac{\kappa}{2\bsize}\sum_{i\in[\bsize]}\E{}_{\weight}\mleft[\mathbbm 1(|w_\kappa\cdot x_i^\batch-y_i^\batch|> \kappa)\cdot{|w_\kappa\cdot x_i^\batch-y_i^\batch|}\mright]\nonumber\\ 
&\ineqlabel{c}\ge\frac{\kappa}{a_1}\mleft(\frac{1}{\bsize}\sum_{i\in[\bsize]}\E{}_{\weight}\mleft[\mathbbm 1(|w_\kappa\cdot x_i^\batch-y_i^\batch|\le \kappa)\cdot{|w_\kappa\cdot x_i^\batch-y_i^\batch|}\mright]\mright)+ \frac{\kappa}{2\bsize}\sum_{i\in[\bsize]}\E{}_{\weight}\mleft[\mathbbm 1(|w_\kappa\cdot x_i^\batch-y_i^\batch|> \kappa)\cdot{|w_\kappa\cdot x_i^\batch-y_i^\batch|}\mright]\nonumber\\ 
&\ineqlabel{d}\ge\frac{\kappa}{2a_1\bsize}\sum_{i\in[\bsize]}\mleft(\E{}_{\weight}\mleft[\mathbbm 1(|w_\kappa\cdot x_i^\batch-y_i^\batch|\le \kappa)\cdot{|w_\kappa\cdot x_i^\batch-y_i^\batch|}\mright]+ \E{}_{\weight}\mleft[\mathbbm 1(|w_\kappa\cdot x_i^\batch-y_i^\batch|> \kappa)\cdot{|w_\kappa\cdot x_i^\batch-y_i^\batch|}\mright]\mright)\nonumber\\ 
&=\frac{\kappa}{2a_1}\E{}_{\weight}\mleft[\frac{1}{\bsize}\sum_{i\in[\bsize]}|w_\kappa\cdot x_i^\batch-y_i^\batch|\mright]\nonumber\\ 
&\ineqlabel{e}\ge \frac{\kappa^2}{a_1^2}, 
\end{align}
here (a) and (b) follows the Cauchy-Schwarz inequality, (c) and (e) follows from our assumption $\kappa  < \frac{a_1}2 \E_{\weight}\mleft[\frac{1}{\bsize}\sum_{i\in[\bsize]}|w_\kappa\cdot x_i^\batch-y_i^\batch|\mright]$ and (d) follows since $a_1\ge 1$.

This contradicts the lower bound $\kappa\ge a_1\sqrt{\E{}_{\weight}[f^\batch(w_\kappa,\kappa)]}$ in item 2.
Hence we conclude, $\kappa  \ge  \frac{a_1}2 \E_{\weight}\mleft[\frac{1}{\bsize}\sum_{i\in[\bsize]}|w_\kappa\cdot x_i^\batch-y_i^\batch|\mright]$. This completes the proof of the lower bound in item 3.

Next, we prove the upper bound in item 3. We consider two cases. 
For the case when $a_1 \sqrt{\E{}_{\weight}[f^\batch(w_\kappa,\kappa)]}\le  a_2\sigma$ then upper bound in item 3 follows from the upper bound in item 2.
Next we prove for the other case, when $a_1 \sqrt{\E{}_{\weight}[f^\batch(w_\kappa,\kappa)]}>  a_2\sigma$. 
For this case item 2 implies $ \E{}_{\weight}[f^\batch(w_\kappa,\kappa)]\ge \frac{\kappa^2}{4a_1^2}$.

Next, from the definition of $f^\batch(w,\kappa)$,
\begin{align}
 \E{}_{\weight}[f^\batch(w_{\kappa},\kappa)]=\E{}_{\weight}\mleft[\frac{1}{\bsize}\sum_{i\in[\bsize]}f^\batch_i(w_\kappa,\kappa)\mright]\le \E{}_{\weight}\mleft[\frac{1}{\bsize}\sum_{i\in[\bsize]}\kappa|w_{\kappa}\cdot x_i^\batch-y_i^\batch|\mright]\le \kappa\E{}_{\weight}\mleft[\frac{1}{\bsize}\sum_{i\in[\bsize]}|w_{\kappa}\cdot x_i^\batch-y_i^\batch|\mright].   
\end{align}
Combining the above equation and $ \E{}_{\weight}[f^\batch(w_\kappa,\kappa)]\ge \frac{\kappa^2}{4a_1^2}$, we get,
\[
\frac{\kappa^2}{4a_1^2}\le \kappa\E{}_{\weight}\mleft[\frac{1}{\bsize}\sum_{i\in[\bsize]}|w_{\kappa}\cdot x_i^\batch-y_i^\batch|\mright].
\]
The upper bound in item 3 then follows from the above equation.
\end{proof}

\section{Correctness of estimated parameters for nice weight vectors}\label{sec:type1}


For batch $\batch\in \allbatches$, let $v^\batch(w) := \frac{1}{\bsize}\sum_{i\in[\bsize]}|w \cdot x_i^\batch-y_i^\batch|$. Since $w$ will be fixed in the proofs, we will often denote $v^\batch(w)$ as $v^\batch$.

In this section, we state and prove Theorems~\ref{thm:contyp1},~\ref{thm:x}  and~\ref{thm:bounds}.
For any triplet with a nice weight vector, Theorem~\ref{thm:contyp1} ensures the correctness of parameters calculated for Type-1 use of \textsc{Multifilter}.
For any triplet with a nice weight vector, Theorem~\ref{thm:bounds} ensures the correctness of parameters calculated for the case when it gets added to $M$ or goes through Type-2 use of \textsc{Multifilter}.
Theorem~\ref{thm:x} serves as an intermediate step in proving Theorem~\ref{thm:bounds}.

\begin{theorem}\label{thm:contyp1}
In Algorithm~\ref{alg:main} if the weight vector $\weight$ is such that $\weight^\goodbatches \ge 3\sizegood/4$, $\bsize\ge (16)^2\ucb C$, and Theorem~\ref{th:typerrorvar}'s conclusion holds, then for any $w$, the parameter $\thresholdB$ computed in the subroutine satisfies
\[
\thresholdB\ge 
\ucb\mleft(\frac{\sigma^2+ C\E{}_{\dist}[|w \cdot x_i^\batch-y_i^\batch|]^2}{\bsize}\mright),
\]
where $\ucb$ is the same universal positive constant as item 2 in Lemma~\ref{lem:conseq}.
\end{theorem}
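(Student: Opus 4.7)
The plan has two steps: first, lower bound $\thresholdA$ in terms of $\mu:=\E{}_{\dist}[|w\cdot x_i^\batch-y_i^\batch|]$ by combining the concentration of $v^\batch$ over the good batches with the fact that $\weight$ places most of its mass on good batches; second, plug this bound into the definition of $\thresholdB$.

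For the first step I would use the second regularity condition (equivalently Lemma~\ref{lem:conseq}(2), and in the form of Equation~\eqref{eq:covbound2} used in the main proof of Theorem~\ref{thm:mainresult}), namely $\E{}_{\goodbatches}[(v^\batch-\mu)^2]\le \ucb(\sigma^2+C\mu^2)/\bsize$, where $\E{}_{\dist}[v^\batch]=\mu$ since $v^\batch$ is an i.i.d.\ average. Markov on $(v^\batch-\mu)^2$ then shows that at least $\sizegood/2$ good batches satisfy $v^\batch\ge \mu-\delta$, where $\delta:=\sqrt{2\ucb(\sigma^2+C\mu^2)/\bsize}$. Call this set $S$. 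Because $|\goodbatches\setminus S|\le \sizegood/2$ and every $\weight^\batch\in[0,1]$, we get $\weight^{\goodbatches\setminus S}\le \sizegood/2$, and the hypothesis $\weight^\goodbatches\ge 3\sizegood/4$ yields $\weight^S\ge \sizegood/4\ge \goodfrac\size{\allbatches}/4$. Since $S\subseteq\{\batch:v^\batch\ge \mu-\delta\}$, the definition of $\thresholdA$ as the infimum of levels at which the weight above drops to $\goodfrac\size{\allbatches}/4$ forces $\thresholdA\ge \mu-\delta$. I would also record the trivial bound $\thresholdA\ge 0$ (from $v^\batch\ge 0$ and $\weight^{\allbatches}\ge \weight^{\goodbatches}>\goodfrac\size{\allbatches}/4$).

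For the second step I would reduce the target $\thresholdB\ge \ucb(\sigma^2+C\mu^2)/\bsize$ to $\mleft(\tfrac{8\sqrt{C}\thresholdA}{7}+\tfrac{\sigma}{7}\mright)^2\ge C\mu^2$. Because the left-hand side is non-negative, this is equivalent to $8\sqrt{C}\thresholdA+\sigma\ge 7\sqrt{C}\mu$. Using $\sqrt{a+b}\le \sqrt{a}+\sqrt{b}$ together with $\bsize\ge (16)^2\ucb C$ gives $\delta\le \frac{\sigma}{8\sqrt{2C}}+\frac{\mu}{8\sqrt{2}}$, and a short case split finishes the argument: if $\mu-\delta\ge 0$, then $\thresholdA\ge \mu-\delta$ reduces the desired inequality to $(1-1/\sqrt{2})(\sqrt{C}\mu+\sigma)\ge 0$, which is obvious; if $\mu-\delta<0$, then $\mu<\delta$ together with the bound on $\delta$ forces $\sqrt{C}\mu<\sigma/7$, and the claim follows from $\thresholdA\ge 0$.

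The main obstacle I anticipate is the tight constant-tracking in the second step. The coefficients $8/7$ and $1/7$ in the definition of $\thresholdB$ leave essentially no slack, and the factor $(16)^2$ in the hypothesis on $\bsize$ is exactly what the algebra requires. A secondary subtlety in the first step is that the hypothesis $\weight^\goodbatches\ge 3\sizegood/4$ (rather than a weaker $\weight^\goodbatches\ge \sizegood/2$) is used non-trivially: the $\sizegood/4$ surplus is exactly what is consumed in discounting the up-to-$\sizegood/2$ good batches that may fall outside the concentration set $S$.
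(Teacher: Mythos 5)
Your proof is correct and takes essentially the same route as the paper: the paper lower-bounds $\thresholdA$ by the median of $\{v^\batch\}_{\batch\in\goodbatches}$, which Theorem~\ref{th:typerrorvar} together with Markov's inequality places within $\frac{1}{8}\E_{\dist}[|w\cdot x_i^\batch-y_i^\batch|]+\frac{\sigma}{8\sqrt C}$ of the mean, and then uses exactly your $\sizegood/4$ weight-surplus count and the definition of $\thresholdB$ (including the same implicit case split when the lower bound on $\thresholdA$ is negative). The one small point to tighten is your step ``weight above $\mu-\delta$ is at least $\goodfrac\size{\allbatches}/4$, hence $\thresholdA\ge\mu-\delta$'': the infimum defining $\thresholdA$ requires a \emph{strict} inequality below the level $\mu-\delta$, which the paper gets for free because strictly fewer than $\sizegood/2$ good batches lie strictly below the median, and which you can recover either the same way or by noting that the degenerate equality configuration (at least $\sizegood/2$ unit-weight good batches strictly more than $\delta$ below $\mu$) would already contradict the assumed variance bound.
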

\begin{proof}

To prove the theorem we first show that $\thresholdA$ calculated in the algorithm is $\ge \frac{7\E{}_{\dist}[|w \cdot x_i^\batch-y_i^\batch|]}8-\frac{\sigma}{8\sqrt C}$.

Let $\text{MED}$ denote median of the set $\{v^\batch:\batch\in \goodbatches\}$. From Theorem~\ref{th:typerrorvar} and Markov's inequality, it follows that  
\begin{align}
\mleft|\text{MED}-\E{}_{\dist}[|w \cdot x_i^\batch-y_i^\batch|]\mright| &\le  2\sqrt{\ucb\mleft(\frac{\sigma^2+ C\E{}_{\dist}[|w \cdot x_i^\batch-y_i^\batch|]^2}{\bsize}\mright)}\nonumber\\
&\le \frac{\E{}_{\dist}[|w \cdot x_i^\batch-y_i^\batch|]}8+\frac{\sigma}{8\sqrt C}. \label{eq:corthm}
\end{align}
where the last inequality uses $\bsize\ge (16)^2\ucb C$.
It follows that 
\[
\text{MED}\ge \frac{7\E{}_{\dist}[|w \cdot x_i^\batch-y_i^\batch|]}8-\frac{\sigma}{8\sqrt C}.
\]

Then to complete the proof we show that $\text{MED}\le \thresholdA$.
Note that
\[
\sum_{\batch\in \goodbatches: v^\batch< \text{MED}} \weight^\batch\le |\{\batch\in \goodbatches: v^\batch< \text{MED} \}|< \frac{\sizegood}{2}.
\]
Then,
\begin{align}\label{eq:med}
    \sum_{\batch\in \allbatches: v^\batch\ge  \text{MED}} \weight^\batch\ge \sum_{\batch\in \goodbatches: v^\batch\ge  \text{MED}} \weight^\batch = \sum_{\batch\in \goodbatches} \weight^\batch  -\sum_{\batch\in \goodbatches: v^\batch<  \text{MED}} \weight^\batch > \weight^\goodbatches- \frac{\sizegood}{2}\ge \frac{3\sizegood}{4}-\frac{\sizegood}{2}\ge \frac{\sizegood}{4}.
\end{align}
And since from the definition of $\thresholdA$, we have $\sum_{\batch: v^\batch>  \thresholdA} \weight^\batch\le \goodfrac\size{\allbatches}/4\le \frac{\sizegood}{4}$, it follows that $\text{MED}\le \thresholdA$. 

Therefore, $\thresholdA\ge  \frac{7\E{}_{\dist}[|w \cdot x_i^\batch-y_i^\batch|]}8-\frac{\sigma}{8\sqrt C}$.
The lower bound in the theorem on $\thresholdB$ then follows from the relation between $\thresholdA$ and $\thresholdB$.
\end{proof}


\begin{theorem}\label{thm:x}
Suppose regularity conditions holds, and $\weight$, $w$ and $\bsize$ satisfy $\bsize\ge \max\{\frac{(32)^2\ucmf \ucb C\log^2(2/\goodfrac) }{\goodfrac},(16)^2\ucb C\}$, $\weight^\goodbatches \ge 3\sizegood/4$, and  
\[
\text{Var}{}_{\weight}\mleft(v^\batch(w)\mright)\le \ucmf \log^2(2/\goodfrac) \thresholdB,
\]
then
\begin{align*}
    \frac{3\E{}_{\dist}[|(w-w^*)\cdot x_i^\batch|]}4-{\sigma}\le     \E{}_{\weight}\mleft[v^\batch(w)\mright]\le \frac{4\E{}_{\dist}[|(w-w^*)\cdot x_i^\batch|]}3+2\sigma.
\end{align*}
\end{theorem}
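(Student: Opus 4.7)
The plan is to first compare $\E_\weight[v^\batch(w)]$ against the population mean $\mu := \E_\dist[|w \cdot x_i^\batch - y_i^\batch|]$, then pass to $\eta := \E_\dist[|(w-w^*)\cdot x_i^\batch|]$ via $|\mu - \eta| \le \sigma$, which follows from the triangle inequality and $\E_\dist[|n_i^\batch|] \le \sigma$. It therefore suffices to sandwich $\E_\weight[v^\batch]$ between constant multiples of $\mu$ with an additive $O(\sigma)$ slack, and the theorem follows by adjusting constants.

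\textbf{Two concentration estimates.} By Chebyshev applied to the measure proportional to $\weight$, the hypothesis $\text{Var}_\weight(v^\batch) \le \ucmf \log^2(2/\goodfrac)\thresholdB$ implies that $T_1 := \{\batch : |v^\batch - \E_\weight[v^\batch]| \ge t_1\}$ has $\weight$-mass at most $\goodfrac\weight^\allbatches/8$ for some $t_1 = \Theta(\log(2/\goodfrac)\sqrt{\thresholdB/\goodfrac})$. By Markov applied to $(v^\batch - \mu)^2$ using Theorem~\ref{th:typerrorvar} together with $L4$-$L2$ hypercontractivity of $x_i^\batch$ (Paley--Zygmund converts $\E_\dist[|wx-y|^2]$ to $O(C^2(\mu^2+\sigma^2))$), the set $T_2 := \{\batch \in \goodbatches : |v^\batch - \mu| \ge t_2\}$ has $|T_2| \le \sizegood/8$ for some $t_2 = O(\sqrt{C(\sigma^2+\mu^2)/\bsize})$.

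\textbf{Overlap.} Since $\weight^\goodbatches \ge 3\sizegood/4$, $|T_2| \le \sizegood/8$, and each individual weight is at most $1$, the $\weight$-mass of good batches outside $T_2$ is at least $5\sizegood/8 \ge \tfrac{5\goodfrac}{8}\weight^\allbatches$, which strictly exceeds the mass of $T_1$. Hence some batch lies in $(\goodbatches \setminus T_2) \cap T_1^c$ and witnesses $|\E_\weight[v^\batch] - \mu| \le t_1 + t_2$. The same Chebyshev argument on the upper tail under $\weight$ yields $\thresholdA \le \E_\weight[v^\batch] + O(\log(2/\goodfrac)\sqrt{\thresholdB/\goodfrac}) \le \mu + O\bigl(\log(2/\goodfrac)\sqrt{\thresholdB/\goodfrac} + \sqrt{C(\sigma^2+\mu^2)/\bsize}\bigr)$.

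\textbf{Closing the fixed point.} Substituting the bound on $\thresholdA$ into the defining equation~\eqref{eq:inalgthres} produces
\[
\thresholdB \le \frac{O(1)}{\bsize}\Bigl(\sigma^2 + C\mu^2 + \tfrac{C\log^2(2/\goodfrac)}{\goodfrac}\thresholdB\Bigr),
\]
and the hypothesis $\bsize \ge (32)^2\ucmf\ucb C\log^2(2/\goodfrac)/\goodfrac$ makes the $\thresholdB$-coefficient on the right strictly less than $1/2$. Solving yields $\sqrt{\thresholdB} = O((\sigma + \sqrt C\,\mu)/\sqrt{\bsize})$. Feeding this back into the overlap bound and using the same $\bsize$ assumption makes the coefficient of $\mu$ in $|\E_\weight[v^\batch] - \mu|$ at most $\mu/4$ and the coefficient of $\sigma$ a small constant, so $\tfrac{3}{4}\mu - O(\sigma) \le \E_\weight[v^\batch] \le \tfrac{4}{3}\mu + O(\sigma)$; combining with $|\mu - \eta| \le \sigma$ and carefully tracking constants yields the stated two-sided bound. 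The principal obstacle is exactly this fixed-point closure: $\thresholdA$, $\thresholdB$, and $\E_\weight[v^\batch]$ are mutually coupled, and one must leverage the sample-size lower bound on $\bsize$ to collapse the circularity with constants tight enough to produce the factors $3/4$ and $4/3$ rather than weaker ones.
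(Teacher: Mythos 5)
Your proposal is correct in substance and follows the same skeleton as the paper's proof: bound the deviation of $v^\batch$ around $\mu:=\E{}_{\dist}[|w\cdot x_i^\batch-y_i^\batch|]$ for good batches (second regularity condition plus Markov), bound the deviation under $\weight$ around $\E{}_{\weight}[v^\batch]$ (the variance hypothesis plus Chebyshev), control $\thresholdA$ through the weighted upper quantile, break the $\thresholdA$--$\thresholdB$--$\E{}_{\weight}[v^\batch]$ circularity using the lower bound on $\bsize$, and finish with $|\mu-\E{}_{\dist}[|(w-w^*)\cdot x_i^\batch|]|\le\sigma$. The differences are in the mechanism, not the route: the paper channels the comparison through the median of $\{v^\batch:\batch\in\goodbatches\}$ and its mean--quantile--variance Lemma~\ref{lem:expvar}, whereas you use a pigeonhole ``witness batch'' in the intersection of the $\weight$-typical set and the set of $\mu$-typical good batches (possible because $\weight^\goodbatches\ge 3\sizegood/4$), which is an equally valid and slightly more elementary device; and you close the fixed point at the level of $\thresholdB$ (solving $\thresholdB\lesssim \frac{\sigma^2+C\mu^2}{\bsize}+\frac{C\log^2(2/\goodfrac)}{\goodfrac\bsize}\thresholdB$), while the paper solves the analogous linear self-reference directly in $\E{}_{\weight}[v^\batch]$ via $\thresholdA\le \E{}_{\weight}[v^\batch]+2\sqrt{\Var_{\weight}(v^\batch)/\goodfrac}$ — equivalent bookkeeping. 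Two caveats. First, you do not need Theorem~\ref{th:typerrorvar} plus a Paley--Zygmund conversion of $\E{}_{\dist}[|w\cdot x_i^\batch-y_i^\batch|^2]$: the regularity condition as used in Equation~\eqref{eq:covbound2} already bounds the good-batch mean-square deviation by $\ucb(\sigma^2+C\mu^2)/\bsize$, with $\mu^2$ rather than the second moment. Second, the theorem's specific constants ($3/4$, $4/3$, $-\sigma$, $+2\sigma$) are only asserted via ``carefully tracking constants''; the paper's explicit computation lands at $\frac{49}{64}\mu-\frac{15\sigma}{64\sqrt C}\le \E{}_{\weight}[v^\batch]\le \frac{21}{16}\mu+\frac{5\sigma}{16\sqrt C}$, so there is slack, but with your factor-$8$ tail thresholds you still need to check that the resulting coefficient of $\mu$ stays below $1/4$ and the $\sigma$-coefficient within the allowed slack under exactly the stated bounds on $\bsize$; this is routine but it is the only place where real verification remains.
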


In proving Theorem~\ref{thm:x} the following auxiliary lemma will be useful. 
We prove this lemma in Subsection~\ref{sec:someaux}.
\begin{lemma}\label{lem:expvar}
Let $Z$ be any random variable over the reals. For any $z\in \reals$, such that $\Pr[Z> z]\le 1/2$, we have
\[
z -\sqrt{\frac{\Var(Z)}{\Pr[Z\ge z]}}\le \E[Z] \le z +\sqrt{2\Var(Z)}.
\]
and for all $z\in Z$,
\[
 |\E[Z]-z| \le {\sqrt{\frac{\Var(Z)}{\min\{\Pr[Z\le z], \Pr[Z\ge z],0.5\}}}}.
\]
\end{lemma}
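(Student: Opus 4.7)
The lemma is a pure probability statement and reduces to a one-sided Chebyshev argument applied in each of the two directions of deviation between $\E[Z]$ and the quantile $z$. Let $\mu = \E[Z]$ throughout. The core observation I will use is that if $z$ lies strictly on one side of $\mu$, every realization of $Z$ on the opposite side of $z$ automatically deviates from $\mu$ by at least $|\mu - z|$, so its probability is controlled by Chebyshev's inequality $\Pr[|Z - \mu| \ge t] \le \Var(Z)/t^2$.

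For the upper bound $\mu \le z + \sqrt{2\Var(Z)}$ in the first inequality, I will assume $\mu > z$ (otherwise the bound is trivial). Then $\{Z \le z\} \subseteq \{|Z - \mu| \ge \mu - z\}$, so Chebyshev gives $\Pr[Z \le z] \le \Var(Z)/(\mu - z)^2$. The hypothesis $\Pr[Z > z] \le 1/2$ forces $\Pr[Z \le z] \ge 1/2$, so rearranging yields $(\mu - z)^2 \le 2\Var(Z)$. For the lower bound $\mu \ge z - \sqrt{\Var(Z)/\Pr[Z \ge z]}$, I will assume $\mu < z$ (else trivial); then $\{Z \ge z\} \subseteq \{|Z - \mu| \ge z - \mu\}$, Chebyshev gives $\Pr[Z \ge z] \le \Var(Z)/(z - \mu)^2$, and rearranging gives $z - \mu \le \sqrt{\Var(Z)/\Pr[Z \ge z]}$. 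Notice that on this side the $\sqrt{2}$ does not appear because no a priori lower bound on $\Pr[Z \ge z]$ is assumed.

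For the second inequality, I apply the same two Chebyshev estimates without invoking the $1/2$-tail hypothesis. If $\mu \ge z$, then $(\mu - z)^2 \le \Var(Z)/\Pr[Z \le z]$; if $\mu \le z$, then $(z - \mu)^2 \le \Var(Z)/\Pr[Z \ge z]$. In either case $|\mu - z|^2 \le \Var(Z)/\min\{\Pr[Z \le z], \Pr[Z \ge z]\}$. Replacing the denominator by the further minimum with $0.5$ only shrinks it and hence only weakens the bound, so the stated inequality with $\min\{\Pr[Z \le z], \Pr[Z \ge z], 0.5\}$ also holds (and has the benefit of a uniform denominator upper bound, which is what the calling theorem needs).

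The whole argument is routine and there is no real obstacle; the only point to be careful about is handling strict versus non-strict inequalities at $Z = z$, which I absorb by working with $\{Z \le z\}$ and $\{Z \ge z\}$ directly (these may overlap on $\{Z=z\}$, but each is individually a superset of the corresponding tail of $|Z - \mu|$, which is all Chebyshev needs).
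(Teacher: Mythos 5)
Your proof is correct and follows essentially the same route as the paper: the key step in both is restricting the variance to one tail, $\Var(Z)\ge \Pr[Z\ge z]\,(z-\E[Z])^2$ (equivalently, Chebyshev applied to the event $\{|Z-\E[Z]|\ge |z-\E[Z]|\}$), with the $1/2$-tail hypothesis supplying the factor $\sqrt{2}$. The only cosmetic difference is that for the second bound you argue directly and obtain the stronger denominator $\min\{\Pr[Z\le z],\Pr[Z\ge z]\}$, while the paper reaches the stated bound via symmetry and a case split on which tail has mass at most $1/2$.
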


Now we prove Theorem~\ref{thm:x} using the above Lemma.

\begin{proof}[Proof of Theorem~\ref{thm:x}]
Let $\text{MED}$ denote median of the set $\{v^\batch:\batch\in \goodbatches\}$. In Equation~\eqref{eq:med} we showed,
\[
\sum_{\batch\in \allbatches: v^\batch\ge  \text{MED}} \weight^\batch\ge \frac{\sizegood}{4}.
\]
Hence,
\[
\frac{\sum_{\batch\in \allbatches: v^\batch\ge  \text{MED}} \weight^\batch}{\weight^\allbatches}\ge \frac{\sizegood}{4\size{\allbatches}}\ge \frac{\goodfrac}{4}.
\]
Similarly, by symmetry, one can show
\[
\frac{\sum_{\batch\in \allbatches: v^\batch\le  \text{MED}} \weight^\batch}{\weight^\allbatches}\ge  \frac{\goodfrac}{4}.
\]
Then from the second bound in Lemma~\ref{lem:expvar},
\begin{align}
  |\E{}_{\weight}[v^\batch]-  \text{MED} |\le \sqrt{\frac{4\Var_{\weight}[v^b] }{\goodfrac}}.
\end{align}

From Equation~\eqref{eq:corthm}, the above equation, and the triangle inequality,
\begin{align}\label{eq:expbound}
  |\E{}_{\weight}[v^\batch]-  \E{}_{\dist}[|w \cdot x_i^\batch-y_i^\batch|] |\le \sqrt{\frac{4\Var_{\weight}[v^b] }{\goodfrac}}+\frac{\E{}_{\dist}[|w \cdot x_i^\batch-y_i^\batch|]}8+\frac{\sigma}{8\sqrt C}.
\end{align}

Next, from the definition of $\thresholdA$, we have $\sum_{\batch:v^\batch\ge\thresholdA}\weight^\batch\ge \goodfrac\size{\allbatches}/4$ and $\sum_{\batch:v^\batch>\thresholdA}\weight^\batch< \goodfrac\size{\allbatches}/4$.  
Then
\[
\frac{\sum_{\batch:v^\batch\ge\thresholdA}\weight^\batch}{\weight^\allbatches} \ge \frac{\goodfrac\size{\allbatches}}{4\weight^\allbatches} \ge \frac{\goodfrac\size{\allbatches}}{4\size{\allbatches}} \ge  \frac{\goodfrac}{4},
\]
and 
\[
\frac{\sum_{\batch:v^\batch>\thresholdA}\weight^\batch}{\weight^\allbatches} < \frac{\goodfrac\size{\allbatches}}{4\weight^\allbatches} \le\frac{\goodfrac\size{\allbatches}}{4\weight^\goodbatches} \le  \frac{\goodfrac\size{\allbatches}}{4(3\sizegood/4)} \le  \frac{1}{3}.
\]

Then from the first bound in Lemma~\ref{lem:expvar},
\begin{align}\label{eq:nsnsd}
\thresholdA -\sqrt{\frac{4\Var_{\weight}[v^b] }{\goodfrac}}\le \E{}_{\weight}[v^\batch].
\end{align}

In this lemma, we had assumed the following bound on the variance of $v^\batch$,
\[
\Var_{\weight}[v^\batch] 
\le \ucmf \log^2(2/\goodfrac) \thresholdB.
\]
Next, 
\begin{align*}
\frac{\Var_{\weight}[v^\batch] }{\goodfrac}\le {\frac{\ucmf \log^2(2/\goodfrac) \thresholdB}{\goodfrac}} =  {\frac{\ucmf \log^2(2/\goodfrac) \ucb(\sigma^2+ (2\sqrt C\thresholdA+\sigma)^2)}{\bsize\goodfrac}}\le\frac{(\sigma^2+ 4 C\thresholdA^2+2\sigma^2)}{32^2 C}\le\frac{\sigma^2}{256 C}+\frac{\thresholdA^2}{256},    
\end{align*}
here the equality follows from the relation between $\thresholdA$ and $\thresholdB$ and the first inequality follows as $\bsize\ge \frac{(32)^2 C\ucmf \ucb\log^2(2/\goodfrac) }{\goodfrac}$.

Then
\begin{align*}
\sqrt{\frac{\Var_{\weight}[v^\batch] }{\goodfrac}}\le \sqrt{\frac{\sigma^2}{256 C}+\frac{\thresholdA^2}{256}}\le \frac{\sigma}{16\sqrt{C}}+\frac{\thresholdA}{16}\le \frac{\sigma}{16\sqrt{C}}+\frac{1}{16}\mleft(\E{}_{\weight}[v^\batch]+ 2\sqrt{\frac{\Var_{\weight}[v^\batch]}\goodfrac}\mright),
\end{align*}
here the second inequality used $\sqrt{a^2+b^2}\le |a|+|b|$ and the last inequality used~\eqref{eq:nsnsd}.
From the above equation, it follows that
\begin{align*}
\sqrt{\frac{\Var_{\weight}[v^\batch] }{\goodfrac}}\le\frac{\sigma}{14\sqrt{C}}+\frac{1}{14}\E{}_{\weight}[v^\batch].
\end{align*}

Combining the above bound and Equation~\eqref{eq:expbound}
\begin{align*}
 |\E{}_{\weight}[v^\batch]-  \E{}_{\dist}[|w \cdot x_i^\batch-y_i^\batch|] |\le \frac{\sigma}{7\sqrt{C}}+\frac{1}{7}\E{}_{\weight}[v^\batch]+\frac{\E{}_{\dist}[|w \cdot x_i^\batch-y_i^\batch|]}8+\frac{\sigma}{8\sqrt C}.    
\end{align*}

From the above equation it follows that
\begin{align}\label{eq:prefinal}
\frac{49\E{}_{\dist}[|w \cdot x_i^\batch-y_i^\batch|]}{64}-\frac{15\sigma}{64\sqrt C}\le \E{}_{\weight}[v^\batch]\le \frac{21\E{}_{\dist}[|w \cdot x_i^\batch-y_i^\batch|]}{16}+\frac{5\sigma}{16\sqrt C}.    
\end{align}

Finally, we upper bound and lower bound $\E{}_{\dist}[|w \cdot x_i^\batch-y_i^\batch|]$ to complete the proof.
To prove the upper bound, note that,
\begin{align*}
\E{}_{\dist}[|w\cdot x_i^\batch-y_i^\batch|] = \E{}_{\dist}[|(w-w^*)\cdot x_i^\batch-n_i^\batch|]\le \E{}_{\dist}[|(w-w^*)\cdot x_i^\batch|]+\E{}_{\dist}[|n_i^\batch|] 
\le \E{}_{\dist}[|(w-w^*)\cdot x_i^\batch|]+\sigma,
\end{align*}
here the last inequality used $\E{}_{\dist}[|n_i^\batch|]\le \sqrt{\E{}_{\dist}[|n_i^\batch|^2] }$. Combining the above upper bound with the upper bound in~\eqref{eq:prefinal} and using $C\ge 1$ proves the upper bound in the lemma.
Similarly, we can show 
\begin{align*}
\E{}_{\dist}[|w\cdot x_i^\batch-y_i^\batch|] 
\ge \E{}_{\dist}[|(w-w^*)\cdot x_i^\batch|]-\sigma,
\end{align*}
Combining the above lower bound with the lower bounds in~\eqref{eq:prefinal} and using $C\ge 1$ proves the lower bound in the lemma.
\end{proof}

\begin{theorem}\label{thm:bounds}
Suppose regularity conditions holds, and $\weight$, $w$ and $\bsize$ satisfy $\bsize\ge \max\{\frac{(32)^2\ucmf \ucb C\log^2(2/\goodfrac) }{\goodfrac},(16)^2\ucb C\}$, $\weight^\goodbatches \ge 3\sizegood/4$, and  
\[
\text{Var}{}_{\weight}\mleft(\frac{1}{\bsize}\sum_{i\in[\bsize]}|w \cdot x_i^\batch-y_i^\batch|\mright)\le \ucmf \log^2(2/\goodfrac) \thresholdB,
\]
then for $\kappa$, $w$ returned by subroutine \textsc{FindClippingParameter} and $\thresholdC$ calculated by \textsc{MainAlgorithm}, we have
\begin{enumerate}
    \item $\uccb\frac{\sigma^2+ C \E{}_{\dist}[((w-w^*)\cdot x_i^\batch)^2]}{\bsize}  \le \thresholdC\le \frac{c_6 C^2(\sigma^2 + \E{}_\dist[|(w-w^*)\cdot x_i^\batch |]^2)}{\bsize}$, where $\uccb$ is the same positive constant as in item 1 of Lemma~\ref{lem:conseq} and $c_6$ is some other positive universal constant. 
    \item $  \max\{8 \sqrt{C\E{}_\dist[|x_i^\batch\cdot (w-w^*)|^2]}, 2(8\nhypcon C)^{1/\npower}\sigma, 2(\frac{8\nhypcon\sqrt{\bsize\goodfrac} }{\log(2/\goodfrac)})^{1/{(\npower-1)}}\sigma \}\le \kappa$ and \\$\kappa\le c_7\mleft( C^2\sqrt{\E{}_\dist[|x_i^\batch\cdot (w-w^*)|^2]}+ (\nhypcon C)^{1/\npower}\sigma+ (\frac{\nhypcon\sqrt{\bsize\goodfrac} }{\log(2/\goodfrac)})^{1/{(\npower-1)}}\sigma \mright)$, where $c_7$ is some other positive universal constant
\end{enumerate}
\end{theorem}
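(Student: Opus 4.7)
The hypotheses let us invoke both Theorem~\ref{thm:x}, which under the variance assumption yields the two-sided estimate $\tfrac{3}{4}\E{}_{\dist}[|(w-w^*)\cdot x_i^\batch|] - \sigma \le \E{}_{\weight}[v^\batch(w)] \le \tfrac{4}{3}\E{}_{\dist}[|(w-w^*)\cdot x_i^\batch|] + 2\sigma$, and Theorem~\ref{th:clipparbound}(3), which gives $\max\{\tfrac{a_1}{2}\E{}_{\weight}[v^\batch], a_2\sigma\}\le \kappa \le \max\{4a_1^2\E{}_{\weight}[v^\batch], a_2\sigma\}$ for the $\kappa, w$ returned by \textsc{FindClippingParameter}. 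The plan is to combine these two ingredients: first bound $\thresholdC$ directly via its definition $\thresholdC = \frac{\uccb}{\bsize}(\sigma^2 + 16 C^2(\E{}_{\weight}[v^\batch]+\sigma)^2)$, and then bound $\kappa$.

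For the $\thresholdC$ bounds I plug the two sides of Theorem~\ref{thm:x} into the definition. The upper bound is routine: $\E{}_{\weight}[v^\batch]+\sigma \le \tfrac{4}{3}\E{}_\dist[|(w-w^*)\cdot x_i^\batch|] + 3\sigma$ implies $(\E{}_{\weight}[v^\batch]+\sigma)^2 \le \cO(\E{}_\dist[|(w-w^*)\cdot x_i^\batch|]^2 + \sigma^2)$, and multiplying by $16 C^2/\bsize$ and adding $\sigma^2/\bsize$ gives the desired upper bound with an appropriate $c_6$. For the lower bound I use the consequence of the $L4$-$L2$ hypercontractivity of $x_i^\batch$ that $\E{}_\dist[X^2] \le C \E{}_\dist[|X|]^2$ (derived by interpolation $\|X\|_2 \le \|X\|_1^{1/3}\|X\|_4^{2/3}$ combined with $\|X\|_4^4 \le C\|X\|_2^4$). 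Applied to $X = (w-w^*)\cdot x_i^\batch$, together with $\E{}_{\weight}[v^\batch]+\sigma \ge \tfrac{3}{4}\E{}_\dist[|(w-w^*)\cdot x_i^\batch|]$, this yields $16 C^2 (\E{}_{\weight}[v^\batch]+\sigma)^2 \ge 9 C\E{}_\dist[((w-w^*)\cdot x_i^\batch)^2]$, which suffices.

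For the $\kappa$ bounds the two $\sigma$-type lower bounds follow immediately from $\kappa \ge a_2\sigma$, since $a_2 = \tfrac{a_1}{4} + 2\max\{2(8\nhypcon C)^{1/\npower},2(\tfrac{8\nhypcon\sqrt{\bsize\goodfrac}}{\log(2/\goodfrac)})^{1/(\npower-1)}\}$ contains both $(8\nhypcon C)^{1/\npower}$ and $(\tfrac{8\nhypcon\sqrt{\bsize\goodfrac}}{\log(2/\goodfrac)})^{1/(\npower-1)}$ with coefficient at least $4$. The moment-type lower bound $\kappa \ge 8\sqrt{C\E{}_\dist[(x_i^\batch\cdot(w-w^*))^2]}$ is the delicate point: a naive estimate would use Theorem~\ref{thm:x}'s lower bound and pick up a $-\sigma$ that could annihilate the argument when $\sigma$ is the dominant scale. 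My fix is a ``three copies'' trick: one copy of $\kappa \ge \tfrac{a_1}{2}\E{}_{\weight}[v^\batch]$ and two copies of $\kappa \ge \tfrac{a_1}{4}\sigma$ (from $a_2 \ge a_1/4$) sum to $3\kappa \ge \tfrac{a_1}{2}(\E{}_{\weight}[v^\batch]+\sigma) \ge \tfrac{3a_1}{8}\E{}_\dist[|(w-w^*)\cdot x_i^\batch|]$, killing the $\sigma$ error. Applying hypercontractivity in the form $\E{}_\dist[|X|] \ge \sqrt{\E{}_\dist[X^2]/C}$ and plugging in $a_1 = 256 C\sqrt{2}/3$ gives $\kappa \ge \tfrac{32\sqrt{2C}}{3}\sqrt{\E{}_\dist[(x_i^\batch\cdot(w-w^*))^2]} \ge 8\sqrt{C\E{}_\dist[(x_i^\batch\cdot(w-w^*))^2]}$ with numerical margin to spare.

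For the upper bound on $\kappa$ I use $\kappa \le 4a_1^2\E{}_{\weight}[v^\batch] + a_2\sigma$, substitute Theorem~\ref{thm:x}'s upper bound, and use $\E{}_\dist[|X|] \le \sqrt{\E{}_\dist[X^2]}$ by Jensen; the leading term becomes $\cO(C^2)\sqrt{\E{}_\dist[((w-w^*)\cdot x_i^\batch)^2]}$ and the $\sigma$ contribution has the form $\cO(C^2 + (\nhypcon C)^{1/\npower} + (\tfrac{\nhypcon\sqrt{\bsize\goodfrac}}{\log(2/\goodfrac)})^{1/(\npower-1)})\sigma$, which is absorbed by choosing $c_7$ large enough. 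The main technical obstacle throughout is constant bookkeeping, and specifically arranging the combination of lower bounds so that the $-\sigma$ term from Theorem~\ref{thm:x} is cancelled cleanly without losing a factor of $C$; this is precisely what the three-copies combination accomplishes, and the explicit value $a_1 = 256 C\sqrt{2}/3$ was chosen so that the resulting constant comfortably exceeds $8\sqrt{C}$.
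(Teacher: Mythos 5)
Your proposal is correct and follows essentially the same route as the paper's proof: invoke Theorem~\ref{thm:x} together with item 3 of Theorem~\ref{th:clipparbound}, compare $\E{}_{\dist}[|(w-w^*)\cdot x_i^\batch|]$ with $\sqrt{\E{}_{\dist}[((w-w^*)\cdot x_i^\batch)^2]}$ via hypercontractivity, and then substitute the explicit values of $a_1$ and $a_2$. The only minor deviations are that you obtain $\E{}_{\dist}[X^2]\le C\,\E{}_{\dist}[|X|]^2$ by $L^1$--$L^2$--$L^4$ interpolation instead of citing Lemma~\ref{lem:l2l1} (which gives the weaker constant $8C$), and your ``three copies'' bookkeeping for the lower bound $8\sqrt{C\,\E{}_{\dist}[((w-w^*)\cdot x_i^\batch)^2]}\le\kappa$ replaces the paper's direct substitution of Theorem~\ref{thm:x}'s $-\sigma$ term, which it in fact handles a bit more cleanly.
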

Note that the range of $\kappa$ in item 2 of the above Theorem is the same as that in~\ref{cond:kappa}.

In proving the theorem the following lemma will be useful.
\begin{lemma}\label{lem:l2l1} For any vectors $u$, we have 
\[
\sqrt{\frac{\E{}_{\dist}[|u\cdot x_i^\batch|^2]}{8C}}
\le \E{}_{\dist}[|u\cdot x_i^\batch|] 
\le \sqrt{{\E{}_{\dist}[|u\cdot x_i^\batch|^2]}}
\]
\end{lemma}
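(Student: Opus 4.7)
The plan is to prove the two bounds separately, with the upper bound being immediate and the lower bound following from a Cauchy--Schwarz chain combined with $L_4$--$L_2$ hypercontractivity. Write $Z := |u \cdot x_i^\batch|$ for brevity; both bounds are statements about the nonnegative real random variable $Z$.

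For the upper bound, simply apply Jensen's inequality (equivalently, Cauchy--Schwarz with $Z = Z \cdot 1$): $\E_\dist[Z] \le \sqrt{\E_\dist[Z^2]}$. This requires no distributional assumption beyond $\E_\dist[Z^2]$ being finite.

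For the lower bound, I would use Cauchy--Schwarz twice to interpolate $\E[Z^2]$ in terms of $\E[Z]$ and $\E[Z^4]$, and then invoke hypercontractivity. First, write $Z^2 = Z^{1/2} \cdot Z^{3/2}$ and apply Cauchy--Schwarz to obtain $\E[Z^2]^2 \le \E[Z]\,\E[Z^3]$. Next, write $Z^3 = Z \cdot Z^2$ and apply Cauchy--Schwarz to get $\E[Z^3] \le \sqrt{\E[Z^2]\,\E[Z^4]}$. Combining these gives
\[
\E{}_\dist[Z^2]^{3/2} \;\le\; \E{}_\dist[Z]\,\sqrt{\E{}_\dist[Z^4]}.
\]
Now invoke the $L_4$--$L_2$ hypercontractivity assumption on $x_i^\batch$ in the form $\E_\dist[(u \cdot x_i^\batch)^4] \le C\, \E_\dist[(u \cdot x_i^\batch)^2]^2$, i.e.\ $\E_\dist[Z^4] \le C\,\E_\dist[Z^2]^2$. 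Substituting yields
\[
\E{}_\dist[Z^2]^{3/2} \;\le\; \sqrt{C}\,\E{}_\dist[Z]\,\E{}_\dist[Z^2],
\]
so $\E_\dist[Z] \ge \sqrt{\E_\dist[Z^2]/C} \ge \sqrt{\E_\dist[Z^2]/(8C)}$, as claimed. There is no real obstacle here; the only thing to be careful about is setting up the Cauchy--Schwarz exponents so that the hypercontractivity hypothesis enters at the $L_4$ level (one could equivalently use H\"older with exponents $(3/2,3)$ to directly get $\E[Z^2] \le \E[Z]^{2/3}\E[Z^4]^{1/3}$, which combined with hypercontractivity gives the same conclusion in one step). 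The constant $8C$ in the statement is loose, but the proof above is already stronger, so the bound holds a fortiori.
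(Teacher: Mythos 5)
Your proof is correct, but it takes a genuinely different route from the paper's. For the lower bound, the paper uses a truncation argument: by Markov's inequality applied to the fourth moment together with $L4$--$L2$ hypercontractivity, the event $\{|u\cdot x_i^\batch|^2 \ge 2C\,\E_{\dist}[|u\cdot x_i^\batch|^2]\}$ has probability at most $1/(4C)$, and a Cauchy--Schwarz estimate shows this tail contributes at most half of $\E_{\dist}[|u\cdot x_i^\batch|^2]$; on the complementary event one bounds $|u\cdot x_i^\batch|^2 \le |u\cdot x_i^\batch|\sqrt{2C\,\E_{\dist}[|u\cdot x_i^\batch|^2]}$, which yields $\E_{\dist}[|u\cdot x_i^\batch|] \ge \sqrt{\E_{\dist}[|u\cdot x_i^\batch|^2]/(8C)}$. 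You instead interpolate moments directly: the Cauchy--Schwarz chain (equivalently H\"older with exponents $(3/2,3)$) gives $\E[Z^2] \le \E[Z]^{2/3}\E[Z^4]^{1/3}$ for $Z=|u\cdot x_i^\batch|$, and substituting $\E[Z^4]\le C\,\E[Z^2]^2$ yields $\E[Z]\ge \sqrt{\E[Z^2]/C}$, which is sharper than the stated constant and implies the lemma a fortiori (the degenerate case $\E[Z^2]=0$ is trivial, so the implicit division is harmless). Your argument is shorter and gives a better constant; the paper's truncation approach is of the same flavor as its other tail-splitting lemmas (e.g.\ the bounds on $\E_{\dist}[\mathbbm 1(\cdot)\,|u\cdot x_i^\batch|^2]$), which is presumably why it is phrased that way, but nothing in the rest of the paper depends on the particular constant $8C$, so either proof serves. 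The upper bound is the same Cauchy--Schwarz step in both.
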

We prove the above auxiliary lemma in Section~\ref{sec:cors} using the Cauchy-Schwarz inequality for the upper bound and $L4-L2$ hypercontractivity for the lower bound.

Next, we prove Theorem~\ref{thm:bounds} using the above lemma and Theorem~\ref{thm:x}.  
\begin{proof}[Proof of Theorem~\ref{thm:bounds}]
We start by proving the first item.
For convenience, we recall the definition of $\thresholdC$ in ~\eqref{eq:inalgthresb},
\begin{align*}
    \textstyle\thresholdC= \frac{\uccb}{\bsize}\mleft(\sigma^2 + 16C^2 \mleft(\E{}_{\weight}[v^\batch] +\sigma\mright)^2\mright).
\end{align*}
The upper bound in the item follows from this definition of $\thresholdC$ and the upper bound on $\E{}_{\weight}[v^\batch]$ in Lemma~\ref{thm:x}.

Using the lower bound bound on $\E{}_{\weight}[v^\batch]$ in Lemma~\ref{thm:x} and definition of $\thresholdC$,
\begin{align*}
    \textstyle\thresholdC\ge  \frac{\uccb}{\bsize}\mleft(\sigma^2 +  9 C^2 \E{}_\dist[|(w-w^*)\cdot x_i^\batch |]^2\mright)\ge \frac{\uccb}{\bsize}\mleft(\sigma^2 +  \frac{9}{8} C^2 \E{}_\dist[|(w-w^*)\cdot x_i^\batch |^2]\mright),
\end{align*}
where the last step used Lemma~\ref{lem:l2l1}. This completes the proof of lower bound item 1.

Next, we prove item 2.
From Theorem~\ref{th:clipparbound},
\[
\textstyle\max\mleft\{\frac{a_1}2 \E_{\weight}\mleft[\frac{1}{\bsize}\sum_{i\in[\bsize]}|w\cdot x_i^\batch-y_i^\batch|\mright], a_2\sigma\mright\} \le \kappa \le  \max\mleft\{4 a_1^2 \E_{\weight}\mleft[\frac{1}{\bsize}\sum_{i\in[\bsize]}|w\cdot x_i^\batch-y_i^\batch|\mright], a_2\sigma\mright\}.
\]
Since for any $a,b>0$,  $(a+b)/2\le \max(a,b)\le a+b$. Then from the above bound,
\[
\textstyle\frac{a_1}4 \E_{\weight}\mleft[\frac{1}{\bsize}\sum_{i\in[\bsize]}|w_\kappa\cdot x_i^\batch-y_i^\batch|\mright]+\frac{ a_2\sigma}2\le \kappa \le  4 a_1^2 \E_{\weight}\mleft[\frac{1}{\bsize}\sum_{i\in[\bsize]}|w_\kappa\cdot x_i^\batch-y_i^\batch|\mright]+a_2\sigma.
\]
Using the bound on $\E{}_{\weight}[v^\batch]$ in Lemma~\ref{thm:x} in the above equation
\[
\textstyle\frac{a_1}4 \mleft(\frac{3\E{}_{\dist}[|(w-w^*)\cdot x_i^\batch|]}4-\frac\sigma2\mright)+\frac{ a_2\sigma}2\le \kappa \le  4 a_1^2\mleft(\frac{4\E{}_{\dist}[|(w-w^*)\cdot x_i^\batch|]}3+2\sigma\mright)+a_2\sigma.
\]
Using Lemma~\ref{lem:l2l1}, and the above equation,
\[
\textstyle\frac{3a_1}{32\sqrt{2C}} \sqrt{\E{}_{\dist}[|(w-w^*)\cdot x_i^\batch|^2]} +\frac{ (4a_2-a_1)\sigma}8\le \kappa \le 
\frac{16a_1^2}{3} \sqrt{\E{}_{\dist}[|(w-w^*)\cdot x_i^\batch|^2]}
+(8a_1^2+a_2)\sigma.
\]
The upper bound and lower bound in item 2 then follow by using the values $a_1 = \frac{256C\sqrt{2}}{3}$ and $a_2 = \frac{a_1}{4} + 2\max\{2(8\nhypcon C)^{1/\npower}, 2(\frac{8\nhypcon\sqrt{\bsize\goodfrac} }{\log(2/\goodfrac)})^{1/{(\npower-1)}}\}$.
\end{proof}

\subsection{Proof of Lemma~\ref{lem:expvar}}\label{sec:someaux}

\begin{proof}[Proof of Lemma~\ref{lem:expvar}]
We only prove the first statement as the second statement and then follow from the symmetry.

We start by proving the upper bound in the first statement. We consider two cases, $\E[Z]\le z$ and $\E[Z]> z$.
For the first case, the upper bound automatically follows. Next, we prove the second case. In this case,
\[
\Var(Z) = \E[(Z-\E[Z])^2 ]\ge \E[ \mathbbm 1(Z\le z) (Z-\E[Z])^2 ] \ge \E[ \mathbbm 1(Z\le z) (z-\E[Z])^2 ] = \Pr[Z\le z] (z-\E[Z])^2.
\]
Then using $\Pr[Z\le z]= 1-\Pr[Z> z]\ge 1/2$, we get
\[
\Var(Z)\ge  \frac{(z-\E[Z])^2}2.
\]
The upper bound from the above equation.

Next, we prove the lower bound.
Again, we consider two cases, $\E[Z]\ge z$ and $\E[Z]< z$.
For the first case, the lower bound automatically follows. Next, we prove the second case. In this case,
\[
\Var(Z) = \E[(Z-\E[Z])^2 ]\ge \E[ \mathbbm 1(Z\ge  z) (Z-\E[Z])^2 ] \ge \E[ \mathbbm 1(Z\ge z) (z-\E[Z])^2 ] = \Pr[Z\ge z] (z-\E[Z])^2,
\]
from which the lower bound follows.

By symmetry, for any $z\in \reals$, such that $\Pr[Z<  z]\le 1/2$, one can show that
\[
z -\sqrt{2\Var(Z)}\le \E[Z] \le z +\sqrt{\frac{\Var(Z)}{\Pr[Z\le z]}}.
\]
Since for any $z$, either $\Pr[Z>  z]\le  1/2$ or $\Pr[Z<  z]\le 1/2$, Hence, either the first bound in the Lemma or the above bound holds for each $z$, therefore for any $z\in \reals$,
\[
z -\max\mleft\{\sqrt{\frac{\Var(Z)}{\Pr[Z\ge z]}},\sqrt{2\Var(Z)}\mright\}\le \E[Z] \le z +\max\mleft\{\sqrt{\frac{\Var(Z)}{\Pr[Z\le z]}},\sqrt{2\Var(Z)}\mright\}.
\]
The second bound in the lemma is implied by the above bound.
\end{proof}

\subsection{Proof of Lemma~\ref{lem:l2l1}}\label{sec:cors}

\begin{proof}[Proof of Lemma~\ref{lem:l2l1}]
The upper bound on $\E{}_{\dist}[|u\cdot x_i^\batch|] $ follows from the Cauchy-Schwarz inequality,
\begin{align*}
\E{}_{\dist}[|u\cdot x_i^\batch|]\le \sqrt{\E{}_{\dist}[|u\cdot x_i^\batch|^2] }\le  \|\Sigma\|\le 1.
\end{align*}

Next, we prove the lower bound. From Markov’s inequality
\[
\Pr{}_\dist[\|x_i^\batch\cdot u\|^2\ge 2C\E{}_{\dist}[\|x_i^\batch\cdot u\|^2]] = \Pr{}_\dist[\|x_i^\batch\cdot u\|^4\ge 4C^2(\E{}_{\dist}[\|x_i^\batch\cdot u\|^2])^2] = \frac{\E{}_\dist [\|x_i^\batch\cdot u\|^4] }{4C^2(\E{}_{\dist}[\|x_i^\batch\cdot u\|^2])^2}\le \frac{1}{4C},
\]
where the last step uses $L4-L2$ hypercontractivity. 

Then, from the Cauchy-Schwarz inequality,
\begin{align*}
\E{}_{\dist}[\|x_i^\batch\cdot u\|^2\cdot \mathbbm 1(\|x_i^\batch\cdot u\|^2\ge 2C \E{}_{\dist}[\|x_i^\batch\cdot u\|^2])] &\le\sqrt{\E{}_{\dist}\mleft[\mathbbm 1\mleft(\|x_i^\batch\cdot u\|^2\ge 2C \E{}_{\dist}[\|x_i^\batch\cdot u\|^2]\mright)\mright]\cdot \E{}_{\dist}[ \|x_i^\batch\cdot u\|^4]}\\ 
&\le\sqrt{\Pr{}_\dist\mleft[\|x_i^\batch\cdot u\|^2\ge 2C \E{}_{\dist}[\|x_i^\batch\cdot u\|^2]\mright]\cdot C  \E{}_{\dist}[ \|x_i^\batch\cdot u\|^2]^2}  \\
&\le \frac{1}{2} \E{}_{\dist}[ \|x_i^\batch\cdot u\|^2]. 
\end{align*}
Then,
\begin{align*}
\E{}_{\dist}[\|x_i^\batch\cdot u\|^2\cdot \mathbbm 1(\|x_i^\batch\cdot u\|^2< 2C \E{}_{\dist}[\|x_i^\batch\cdot u\|^2])] &=\E{}_{\dist}[\|x_i^\batch\cdot u\|^2]- \E{}_{\dist}[\|x_i^\batch\cdot u\|^2\cdot \mathbbm 1(\|x_i^\batch\cdot u\|^2\ge 2C \E{}_{\dist}[\|x_i^\batch\cdot u\|^2])]\\
&\ge \frac12 \E{}_{\dist}[ \|x_i^\batch\cdot u\|^2]
\end{align*}
Next,
\begin{align*}
\E{}_{\dist}\mleft[\|x_i^\batch\cdot u\|^2\cdot \mathbbm 1\mleft(\|x_i^\batch\cdot u\|^2< 2C \E{}_{\dist}[\|x_i^\batch\cdot u\|^2]\mright)\mright]&\le    \E{}_{\dist}\mleft[\|x_i^\batch\cdot u\|\cdot \sqrt{2C \E{}_{\dist}[\|x_i^\batch\cdot u\|^2]}\cdot\mathbbm 1\mleft(\|x_i^\batch\cdot u\|^2< 2C \E{}_{\dist}[\|x_i^\batch\cdot u\|^2]\mright)\mright] \\
&\le    
\sqrt{2C \E{}_{\dist}[\|x_i^\batch\cdot u\|^2]}\cdot\E{}_{\dist}[\|x_i^\batch\cdot u\|].
\end{align*}
Combining the above two equations we get
\[
\E{}_{\dist}[\|x_i^\batch\cdot u\|]\ge \frac{\E{}_{\dist}[\|x_i^\batch\cdot u\|^2]}{2\sqrt{2C\E{}_{\dist}[\|x_i^\batch\cdot u\|^2]}} = \frac{\sqrt{\E{}_{\dist}[\|x_i^\batch\cdot u\|^2]}}{2\sqrt{2C}} 
.
\]
\end{proof}

\section{Multi-filtering}\label{sec:mult}

In this section, we state the subroutine \textsc{Multifilter}, a simple modification of \textsc{BasicMultifilter} algorithm in~\cite{DiakonikolasKK20}.

The subroutine takes a weight vector $\weight$, a real function $z^\batch$ on batches, and a parameter $\theta$ as input and produces new weight vectors.

This subroutine is used only when:
\begin{align}\label{con}
\text{Var}_{B,\weight}(z^\batch)> {\ucmf\log^2(2/\goodfrac)}\theta,    
\end{align}
where $\ucmf$ is an universal constant (Same as $2* C$, where $C$ is the constant in \textsc{BasicMultifilter} algorithm in~\cite{DiakonikolasKK20}). 

When the variance of $z^\batch$ for good batches is smaller than $\theta$ and the weight vector $\weight$ is nice that is $\weight^{\goodbatches} \ge 3/4\sizegood$, then at least one of the new weight vectors produced by this subroutine has a higher fraction of weights in good vector than the original weight vector $\weight$.

\begin{algorithm}[tbh]
   \caption{\textsc{Multifilter}}
   \label{alg:basicmult}
\begin{algorithmic}
   \STATE {\bfseries Input:} Set $\allbatches$, $\goodfrac$, $\weight$, $\{z^\batch\}_{\batch\in \allbatches}$, $\theta$.
   \COMMENT{Input must satisfy Condition~\eqref{con}}   
   \STATE{Let $a= \inf\{z: \sum_{\batch: z^\batch < z} \weight^\batch \le \goodfrac\weight^{\allbatches}/8\}$ and $b= \sup\{z: \sum_{\batch: z^\batch > z} \weight^\batch \le \goodfrac\weight^{\allbatches}/8\}$}
   \STATE{Let $\Bsc = \{\batch\in \allbatches: z^\batch\in [a,b] \}$}
   \IF{$\text{Var}_{\Bsc,\weight}(z^\batch)\le \frac{\ucmf\log^2(2/\goodfrac)\theta}2$}
   %
   \STATE{Let $f^\batch = \min_{z\in [a,b]} |z^\batch-z|^2$, and the new weight of each batch $\batch\in \allbatches$ be}
   \begin{align}\label{eq:newweighta}
   \textstyle\weight_{\text{new}}^\batch = \mleft(1-\frac{f^\batch}{\max_{\batch\in \allbatches:\weight^\batch > 0} f^\batch}\mright)\weight^\batch
   \end{align}
   \STATE{$\textsc{NewWeights}\gets\{\weight_{\text{new}}\}$}
   \ELSE
   \STATE{Find $z\in \reals$ and $R>0$ such that sets $\Bsc = \{\batch\in \allbatches: z^\batch \ge z- R \}$ and $\allbatches'' = \{\batch\in \allbatches: z^\batch < z+ R \}$ satisfy}
   \begin{align}\label{eq:newweightb}
   (\weight^{\Bsc})^2 + (\weight^{\allbatches''})^2\le (\weight^{\allbatches})^2,  
   \end{align}
   and
   \begin{align}\label{eq:newweightc}
      \textstyle\min\mleft(1-\frac{\weight^{\Bsc}}{\weight^{\allbatches}},1-\frac{\weight^{\allbatches''}}{\weight^{\allbatches}}\mright) \ge \frac{48\log(\frac{2}{\goodfrac})}{R^2}.
   \end{align}
   \COMMENT{Existence of such $z$ and $R$ is guaranteed as shown in Lemma 3.6 of~\cite{DiakonikolasKK20}.} 
   \STATE{For each $\batch\in \allbatches$, let $\weight_1^\batch = \weight^\batch \cdot \mathbbm 1(\batch\in \Bsc)$ and $\weight_2^\batch = \weight^\batch \cdot \mathbbm 1(\batch\in \allbatches'')$. Let $\weight_1=\{\weight_1^\batch\}_{\batch\in \allbatches}$ and $\weight_2=\{\weight_2^\batch\}_{\batch\in \allbatches}$. }
   \STATE{$\textsc{NewWeights}\gets \{\weight_1, \weight_2$\}}
   \ENDIF
   \STATE{Return($\textsc{NewWeights}$)}
\end{algorithmic}
\end{algorithm}

In \textsc{BasicMultifilter} subroutine of~\cite{DiakonikolasKK20} input is not restricted by the condition in Equation~\eqref{con}. However, when input meets this condition \textsc{BasicMultifilter} and its modification \textsc{Multifilter} behaves the same. 

Therefore, the guarantees for weight vectors returned by \textsc{Multifilter} follows from the guarantees of \textsc{BasicMultifilter} in~\cite{DiakonikolasKK20}.
We characterize these guarantees in Theorem~\ref{th:basicmultgua}.

\begin{theorem}~\label{th:basicmultgua}
Let $\{z^\batch\}_{\batch\in \allbatches}$ be collection of real numbers associated with batches, $\weight$ be a weight vector, and threshold $\theta>0$ be such that condition in~\eqref{con} holds.
Then \textsc{Multifilter}($\allbatches, \weight, \{z^\batch\}_{\batch\in\allbatches}, \thresholdB$) returns a list 
$\textsc{NewWeights}$ containing either one  or two new weight vectors such that,
\begin{enumerate}
    \item Sum of square of the total weight of new weight vectors is bounded by the square of the total weight of $\weight$, namely
    \begin{align}\label{eq:weightevolve}
        \sum_{\widetilde \weight\in \textsc{NewWeights}}(\widetilde \weight^\allbatches)^2\le (\weight^\allbatches)^2.
    \end{align} 
    \item In the new weight vectors returned the weight of at least one of the weight vectors has been set to zero, that is for each weight vector ${\widetilde \weight\in \textsc{NewWeights}}$,
\begin{align}
    \{\batch: \tilde\weight^\batch > 0\} \subset \{\batch: \weight^\batch > 0\}, 
\end{align}
\item If weight vector $\weight$ is such that $\weight^\goodbatches \ge 3\sizegood/4$ and for good batches the variance $\text{Var}_{\goodbatches}(z^\batch)\le \theta$ is bounded, then for at least one of the weight vector ${\widetilde \weight\in \textsc{NewWeights}}$,
          \begin{align}\label{eq:weightev2}
        \frac{\weight^\goodbatches-\tilde \weight^\goodbatches}{\weight^\goodbatches}\le  \frac{\weight^\allbatches-\widetilde \weight^\allbatches}{\weight^\allbatches}\cdot\frac{1}{24\log(2/\goodfrac)}.
        \end{align}
\end{enumerate}
\begin{proof}
When the list \textsc{NewWeights} contains one weight vector it is generated using Equation~\eqref{eq:newweighta}, and when the list \textsc{NewWeights} contains one weight vector it is generated using Equations~\eqref{eq:newweightb} and~\eqref{eq:newweightc}.
In both cases, item 1 and item 2 of the Theorem follow immediately from these equations.
The last item follows from Corollary 3.8 in~\cite{DiakonikolasKK20}.
\end{proof} 
 
\end{theorem}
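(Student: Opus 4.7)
The plan is to leverage the fact that \textsc{Multifilter} coincides with \textsc{BasicMultifilter} from~\cite{DiakonikolasKK20} on every input satisfying Condition~\eqref{con}, so the three items can be handled by case analysis on which branch of the algorithm executes (single output vector versus two output vectors), citing the corresponding lemmas in~\cite{DiakonikolasKK20} for the quantitative claim.

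First I would dispatch items 1 and 2 by direct inspection of the pseudocode. If $\textsc{NewWeights} = \{\weight_{\text{new}}\}$ is produced via~\eqref{eq:newweighta}, then $0 \le \weight_{\text{new}}^\batch \le \weight^\batch$ for every $\batch$, and the batch achieving $\arg\max_\batch f^\batch$ has $\weight_{\text{new}}^\batch = 0$. Consequently $\weight_{\text{new}}^\allbatches < \weight^\allbatches$, giving item 1 (squaring preserves the inequality on nonnegative reals) and item 2 (the support shrinks strictly). If instead $\textsc{NewWeights} = \{\weight_1, \weight_2\}$ is produced via the second branch, then item 1 is literally Equation~\eqref{eq:newweightb}, while item 2 follows because each of $\Bsc$ and $\allbatches''$ excludes a nonempty tail (by construction of $z$ and $R$), and Condition~\eqref{con} ensures that these tails carry positive $\weight$-mass.

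For item 3, which is the main obstacle, I would invoke Corollary~3.8 of~\cite{DiakonikolasKK20} directly, since on inputs obeying Condition~\eqref{con} our \textsc{Multifilter} coincides with their \textsc{BasicMultifilter}. Sketching the underlying argument: Chebyshev's inequality applied to $\text{Var}_{\goodbatches}(z^\batch) \le \theta$ bounds the mass of good batches whose $z^\batch$ lies at distance more than $R$ from the center of the filtering window by $\theta / R^2$ times the total good weight. Combined with the variance gap $\text{Var}_\weight(z^\batch) > \ucmf \log^2(2/\goodfrac) \theta$ and the explicit lower bound on $R$ from Equation~\eqref{eq:newweightc}, one shows that for at least one of the returned vectors $\tilde\weight$, the ratio of good-weight removed to total-weight removed is at most $\tfrac{1}{24 \log(2/\goodfrac)}$, which is~\eqref{eq:weightev2}. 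The single-vector branch is handled analogously by integrating the variance bound against the quadratic kernel $f^\batch$ and using $f^\batch = 0$ on $[a,b]$, which contains most of the good-batch mass by Chebyshev.

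The genuinely delicate point is verifying that the universal constant $\ucmf$ fixed in \textsc{MainAlgorithm} is large enough to absorb both the Chebyshev constants and the constants appearing in the choice of $R$ satisfying~\eqref{eq:newweightc}; this is precisely why the paper sets $\ucmf = 2C$ where $C$ is the constant in the corresponding analysis of \textsc{BasicMultifilter} in~\cite{DiakonikolasKK20}. Given that alignment of constants, item 3 is a direct consequence of their result and no new analysis is needed.
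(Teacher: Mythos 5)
Your proposal is correct and follows essentially the same route as the paper: items 1 and 2 are read off directly from the defining equations of the new weight vectors in both branches, and item 3 is delegated to Corollary 3.8 of \cite{DiakonikolasKK20}, using that \textsc{Multifilter} coincides with \textsc{BasicMultifilter} whenever Condition~\eqref{con} holds. Your extra sketch of the Chebyshev/variance-gap argument and the remark about aligning the constant $\ucmf$ are consistent with, but not needed beyond, that citation.
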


\subsection{Guarantees for the use of \textsc{Multifilter} in Algorithm~\ref{alg:main}}

The following Theorem characterizes the use \textsc{Multifilter} by our algorithm.
The proof of the theorem is similar to the proofs for the main algorithm in~\cite{DiakonikolasKK20}. 

\begin{theorem}\label{th:temp}
At the end of Algorithm~\ref{alg:main} the size of $M$ is at most $4/\goodfrac^2$ and the algorithm makes at most $\cO(|\allbatches|/\goodfrac^2)$ calls to \textsc{Multifilter}. 
And, if for every use of subroutine \textsc{Multifilter} by the algorithm we have $\text{Var}_{\goodbatches}(z^\batch)\le \theta$ then there is at least one triplet $(\weight,w,\kappa)$ in $M$ such that $\weight^\goodbatches \ge 3\sizegood/4$.
\end{theorem}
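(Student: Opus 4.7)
The plan is to decompose the theorem into three claims and prove each via a potential or tree-counting argument on the implicit \emph{processing tree} of weight vectors: nodes are the vectors that ever entered $L$, the root is $\weight_{init}$, and the children of an internal node are the \textsc{Multifilter} outputs that passed the $\widetilde\weight^\allbatches \ge \goodfrac\size{\allbatches}/2$ threshold. A leaf is either an \emph{$M$-leaf} (appended to $M$) or a \emph{dead leaf} (its \textsc{Multifilter} call returned only sub-threshold outputs). For $|M| \le 4/\goodfrac^2$, I would use the potential $\Phi = \sum_{\weight \in L \cup M}(\weight^\allbatches)^2$, initially equal to $\size{\allbatches}^2$. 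Item~1 of Theorem~\ref{th:basicmultgua} makes $\Phi$ non-increasing at a \textsc{Multifilter} call (discarding sub-threshold children only helps), while moving a vector from $L$ to $M$ preserves $\Phi$. Since every vector in $L \cup M$ has $\weight^\allbatches \ge \goodfrac\size{\allbatches}/2$, we get $|M|(\goodfrac\size{\allbatches}/2)^2 \le \size{\allbatches}^2$. The same argument with $\Phi' = \sum_{\weight \in L}(\weight^\allbatches)^2$, using that a dead iteration removes a contribution of at least $(\goodfrac\size{\allbatches}/2)^2$ and adds nothing, bounds the number of dead leaves by $D \le 4/\goodfrac^2$.

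For the $\cO(\size{\allbatches}/\goodfrac^2)$ bound on \textsc{Multifilter} calls, I would combine the leaf count with item~2 of Theorem~\ref{th:basicmultgua}, which guarantees every child has strictly smaller support than its parent. Hence every root-to-leaf path has length at most $\size{\allbatches}$. The tree thus has depth at most $\size{\allbatches}$ and at most $|M| + D \le 8/\goodfrac^2$ leaves; contracting each maximal chain of $1$-child internal nodes yields a binary tree with $\cO(1/\goodfrac^2)$ edges, each realized in the original by a chain of length $\le \size{\allbatches}$. The resulting total node count is $\cO(\size{\allbatches}/\goodfrac^2)$, which equals $|M|$ plus the number of \textsc{Multifilter} calls.

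For the existence of a nice triplet in $M$, I would trace a specific root-to-$M$-leaf path that preserves niceness. The root is nice ($\weight_{init}^\goodbatches = \sizegood$); at each internal node I would descend to the ``preserving'' child guaranteed by item~3 of Theorem~\ref{th:basicmultgua} (its preconditions, niceness and $\text{Var}_\goodbatches(z^\batch) \le \theta$, hold by the hypothesis of Theorem~\ref{th:temp} and by induction on the path). Writing $a_i = \widetilde\weight^\allbatches/\weight^\allbatches$ and $c_i = (1 - a_i)/(24\log(2/\goodfrac))$ for the ratios and per-step good-weight losses along the path, the good-weight fraction after $k$ steps is at least $\prod_{i \le k}(1 - c_i) \ge 1 - \sum_{i \le k} c_i$. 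Using $\log(1/x) \ge 1 - x$ for $x \in (0,1]$ together with $\prod_i a_i = \weight_k^\allbatches/\size{\allbatches} \ge \goodfrac/2$ (the final vector must survive the threshold), I get $\sum_i(1 - a_i) \le -\sum_i \log a_i \le \log(2/\goodfrac)$, hence $\sum_i c_i \le 1/24$. Thus $\weight_k^\goodbatches \ge (23/24)\sizegood > (3/4)\sizegood$ at every point on the path, establishing niceness. Moreover $\widetilde\weight^\allbatches \ge \widetilde\weight^\goodbatches \ge (3/4)\sizegood \ge (3/4)\goodfrac\size{\allbatches} > \goodfrac\size{\allbatches}/2$, so the preserving child is always added back to $L$ rather than discarded; the path can therefore only terminate at an $M$-leaf, whose weight vector is nice.

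The main obstacle I anticipate is in the third claim: item~3 of Theorem~\ref{th:basicmultgua} gives only a per-step multiplicative bound on the good-weight fraction while the tree depth can be as large as $\size{\allbatches}$, so a naive composition would allow exponential decay. The key is that $\prod_i a_i$ is globally controlled by the survival threshold, which via $\log(1/x) \ge 1-x$ converts the long product into an additive budget $\sum_i(1-a_i) \le \log(2/\goodfrac)$; the denominator $24\log(2/\goodfrac)$ in item~3 is calibrated so that this total budget costs at most $1/24$ of the good-weight fraction, well within the $1/4$ niceness slack.
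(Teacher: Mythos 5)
Your counting claims follow the paper's own route: the bound $|M|\le 4/\goodfrac^2$ via the fact that the squared total weights of the children of a \textsc{Multifilter} call sum to at most the parent's (item~1 of Theorem~\ref{th:basicmultgua}) together with the $\goodfrac\size{\allbatches}/2$ threshold, and the $\cO(\size{\allbatches}/\goodfrac^2)$ bound on calls via strict support shrinkage (item~2) plus the $\cO(1/\goodfrac^2)$ bound on leaves/branchings; your chain-contraction phrasing is just a rewording of the paper's ``depth at most $\size{\allbatches}$, at most $\cO(1/\goodfrac^2)$ branches'' argument. For the existence of a nice weight vector in $M$ you also follow the paper's skeleton (descend along the preserving child of item~3, then argue the branch cannot be discarded), but where the paper cites Lemma~3.12 of~\cite{DiakonikolasKK20} you substitute an explicit computation, and that computation as written is circular. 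You bound $\sum_i c_i\le 1/24$ by using $\prod_i a_i=\weight_k^{\allbatches}/\size{\allbatches}\ge \goodfrac/2$, justified by ``the final vector must survive the threshold''; but whether the preserving child survives the threshold is exactly what is in question, and you only establish it afterwards from the niceness bound that was derived from $\sum_i c_i\le 1/24$. Nothing in the construction guarantees a priori that the preserving child returned by \textsc{Multifilter} has total weight $\ge\goodfrac\size{\allbatches}/2$; if it did not, your budget bound $\sum_i(1-a_i)\le\log(2/\goodfrac)$ would have no justification (a single step can cut the total weight by almost all of it), and the self-consistency inequality $t\ge\goodfrac\bigl(1-\log(1/t)/(24\log(2/\goodfrac))\bigr)$ does not by itself exclude tiny $t$, since its right-hand side becomes negative there.

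The gap is repairable, and you should repair it rather than assert survival: run the induction so that the logarithmic budget is charged only to nodes already known to be in the tree (every vector appended to $L$ satisfies $\weight^{\allbatches}\ge\goodfrac\size{\allbatches}/2$, so for the prefix ending at the current internal node $\weight_k$ you legitimately get $\sum_{i\le k}(1-a_i)\le\log(\size{\allbatches}/\weight_k^{\allbatches})\le\log(2/\goodfrac)$), and bound the one new step to the preserving child trivially by $1-a_{k+1}\le 1$. This gives $\sum_{i\le k+1}c_i\le \frac{1}{24}+\frac{1}{24\log(2/\goodfrac)}$, hence $\weight_{k+1}^{\goodbatches}\ge\bigl(1-\tfrac1{24}-\tfrac1{24\log 2}\bigr)\sizegood>\tfrac34\sizegood$, which in turn forces $\weight_{k+1}^{\allbatches}\ge\weight_{k+1}^{\goodbatches}>\goodfrac\size{\allbatches}/2$, so the child is appended to $L$ and the induction continues; the branch then terminates at an $M$-leaf exactly as you intend. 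Note that the clean constant $23/24$ in your write-up must be relaxed slightly because of the uncontrolled last step, but this still sits comfortably within the $1/4$ slack required for niceness. With this modification your argument is a valid, self-contained replacement for the paper's appeal to Lemma~3.12 of~\cite{DiakonikolasKK20}.
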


\begin{proof}
First note that the if blocks in Algorithm~\ref{alg:main} ensures that for every use of subroutine \textsc{Multifilter} Equation~\eqref{con} is satisfied, therefore we can use the guarantees in Theorem~\ref{th:basicmultgua}.

First we upper bound the size of $M$.

The progress of Algorithm~\ref{alg:main} may be described using a tree.
The internal nodes of this tree are the weight vectors that have gone through subroutine \textsc{Multifilter} at some point of the algorithm, and children of these internal nodes are new weight vectors returned by  \textsc{Multifilter}.
Observe that any weight vector $\weight$ encountered in Algorithm~\ref{alg:main} is ignored iff $\weight^\allbatches< \goodfrac\size{\allbatches}/2$. If it is not ignored then either it is added to $M$ (in form of a triplet), or else it goes through subroutine \textsc{Multifilter}.

It follows that, if a node $\weight$ is an internal node or a leaf in $M$ then
\begin{align}\label{eq:leafweight}
    \weight^\allbatches\ge \goodfrac\size{\allbatches}/2.
\end{align}

From Equation~\eqref{eq:weightevolve}, it follows that the total weight squared for each node is greater than equal to that of its children.
It follows that the total weight squared of the root, $\weight_{\text{init}}$ is greater than equal to the sum of the square of weights of all the leaves.
And since all weight vectors in $M$ are among the leaves of the tree, and have total weight at least $\goodfrac\size{\allbatches}/2$,
\begin{align*}
    (\weight_{\text{init}}^\allbatches)^2 \ge \sum_{\weight\in M}  (\weight^\allbatches)^2 \ge \sum_{\weight\in M}  (\frac{\goodfrac\size{\allbatches}}{2})^2,
\end{align*}
here the last step follows from Equation~\eqref{eq:leafweight}.
Using $\weight_{\text{init}}^\allbatches= |\allbatches|$, in the above equation we get $|M|\le 4/\goodfrac^2$.

Similarly, it can be shown that the number of branches in the tree is at most $\cO(1/\goodfrac^2)$.
Item 2 in Theorem~\ref{th:basicmultgua} implies that each iteration of \textsc{Multifilter} zeroes out the weight of one of the batches. Hence for any weight $\weight$ at depth $d$, we have $\weight^\allbatches\le |\allbatches|-d$.
Therefore, the depth of the tree can't be more than $|\allbatches|$. Hence, the number of nodes in the tree is upper bounded by $\cO(|\allbatches|/\goodfrac^2)$.
And since each call to \textsc{Multifilter} corresponds to a non-leaf node in the tree, the total calls to \textsc{Multifilter} by Algorithm~\ref{alg:main} are upper bounded by $\cO(|\allbatches|/\goodfrac^2)$.

Next, we show that if for each use of \textsc{Multifilter} we have $\text{Var}_{\goodbatches}(z^\batch)\le \theta$ then one of the weight vector $\weight\in M $ must satisfy  $\weight^\goodbatches \ge 3\sizegood/4$.

Let $\weight_0 = \weight_{\text{init}}$ and suppose for each $i$, weight vectors $\weight_i$ and $\weight_{i+1}$ are related as follows:
 \begin{align}\label{eq:desc}
        \frac{\weight^\goodbatches_{i}- \weight^\goodbatches_{i+1}}{\weight^\goodbatches_i}\le  \frac{\weight^\allbatches_i- \weight^\allbatches_{i+1}}{\weight^\allbatches_i}\cdot\frac{1}{24\log(2/\goodfrac)}.
\end{align}
Then Lemma 3.12 in~\cite{DiakonikolasKK20} showed that under the above relation, for each $i$, we have $\weight^\goodbatches_i \ge 3\sizegood/4$.

We show that there is a branch of the tree such that $\weight_i$ and $\weight_{i+1}$ are related using the above equation, where for each $i$, $\weight_i$ denote the weight vector corresponding to the node at $i^{th}$ level in this branch. From the preceding discussion, this would imply that for each $i$, $\weight^\goodbatches_i \ge 3\sizegood/4$.

We prove it by induction. For $i = 0$, we select $\weight_i = \weight_{\text{init}}$. Note that $\weight_{\text{init}}^\goodbatches=\sizegood$, hence  $\weight^\goodbatches_i \ge 3\sizegood/4$. 

If $\weight_i$ is a leaf then the branch is complete.
Else, since $\weight^\goodbatches_i \ge 3\sizegood/4$, item 3 in Theorem~\ref{th:basicmultgua} implies that we can select one of the child of $\weight_i$ as $\weight_{i+1}$ so that~\eqref{eq:desc} holds.  Then from the preceding discussion, we have $\weight^\goodbatches_{i+1} \ge 3\sizegood/4$. By repeating this argument, we keep finding the next node in the branch, until we reach the leaf. 
Next, we argue that the leaf at the end of this branch must be in $M$.

Let $\weight$ denote the weight vector for the leaf. From the above discussion, it follows that $\weight^\goodbatches \ge 3\sizegood/4$. Hence, $\weight^\allbatches\ge \weight^\goodbatches \ge 3\sizegood/4\ge 3\goodfrac\size{\allbatches}/4> \goodfrac\size{\allbatches}/2$.

As discussed earlier any leaf $\weight$ is not part of $M$ iff $\weight^\allbatches\le \goodfrac\size{\allbatches}/2$.
Hence, the leaf at the end of the above branch must be in $M$. This concludes the proof of the Theorem.
\end{proof}

\section{Proof of Theorem~\ref{th:conofcov}}\label{sec:proofcov}

In Section~\ref{sec:auxl}, we state and prove two auxiliary lemmas that will be used in proving Theorem~\ref{th:conofcov}, and in Section~\ref{sec:mainconproof}, we prove Theorem~\ref{th:conofcov}.

We will use the following notation in describing the auxiliary lemmas and in the proofs.

Let $S:= \{( x_i^\batch,y_i^\batch): b\in\goodbatches, i\in[\bsize] \}$ denote the collection of all good samples. Note that $\size{S} = \sizegood \bsize$.

For any function $h$ over $(x,y)$, we denote the expectation of $h$ w.r.t. uniform distribution on subset $S'\subseteq S$ by $\E_{S'}[h(x_i^\batch,y_i^\batch)] := \sum_{(x_i^\batch,y_i^\batch)\in S'}\frac{h(x_i^\batch,y_i^\batch)}{|S'|}$.

\subsection{Auxiliary lemmas}\label{sec:auxl}

In this subsection, we state and prove Lemmas~\ref{lem:distlargenormbound} and~\ref{lem:poplargenormbounf}. We will use these lemmas in proof of Theorem~\ref{th:conofcov} in the following subsection.

In the next lemma, for any unit vectors $u$, we bound the expected second moment of the tails of $|x_i^\batch\cdot u|$, for covariate $x_i^\batch$ of a random sample from the distribution $\dist$.
\begin{lemma}\label{lem:distlargenormbound}
For all $\theta >  1 $, and all unit vectors $u\in\reals^d$,  
\[
\Pr{}_\dist[\|x_i^\batch\cdot u\|^2\ge \sqrt{C} \theta] \le \frac{1}{\theta^2} \text{ and } \E{}_{\dist}[\mathbbm 1(\|x_i^\batch\cdot u\|^2\ge \sqrt C\theta)\cdot  \|x_i^\batch\cdot u\|^2]\le \frac{\sqrt C}{\theta}
\]
\end{lemma}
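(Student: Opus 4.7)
The plan is to derive both bounds as straightforward consequences of the $L_4$-$L_2$ hypercontractivity of $x_i^\batch$ (Assumption 1 in the distribution assumptions) combined with the normalization $\|\Sigma\| = 1$ (which implies $\E_\dist[(x_i^\batch \cdot u)^2] \le 1$ for every unit $u$).

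For the tail probability bound, I would apply Markov's inequality to the fourth power of $x_i^\batch \cdot u$. Squaring the event gives
\[
\Pr{}_\dist[(x_i^\batch \cdot u)^2 \ge \sqrt{C}\theta] = \Pr{}_\dist[(x_i^\batch \cdot u)^4 \ge C\theta^2] \le \frac{\E_\dist[(x_i^\batch\cdot u)^4]}{C\theta^2},
\]
and then the hypercontractivity bound $\E_\dist[(x_i^\batch\cdot u)^4] \le C\,\E_\dist[(x_i^\batch\cdot u)^2]^2 \le C\|\Sigma\|^2 = C$ gives the claimed $1/\theta^2$. Note the hypothesis $\theta > 1$ is not actually needed for this first inequality; it will matter for the second bound only through the guarantee that $\E_\dist[(x_i^\batch \cdot u)^2] \le 1 \le \sqrt{C}\theta$, ensuring the truncated region is in the tail.

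For the truncated second-moment bound, I would apply the Cauchy--Schwarz inequality to split the indicator from the random variable:
\[
\E{}_\dist\bigl[\mathbbm{1}((x_i^\batch\cdot u)^2 \ge \sqrt{C}\theta)\cdot (x_i^\batch\cdot u)^2\bigr] \le \sqrt{\Pr{}_\dist[(x_i^\batch\cdot u)^2 \ge \sqrt{C}\theta]}\cdot \sqrt{\E{}_\dist[(x_i^\batch\cdot u)^4]}.
\]
Substituting the tail bound just proved, together with hypercontractivity and $\|\Sigma\| \le 1$, yields
\[
\E{}_\dist\bigl[\mathbbm{1}((x_i^\batch\cdot u)^2 \ge \sqrt{C}\theta)\cdot (x_i^\batch\cdot u)^2\bigr] \le \frac{1}{\theta}\cdot \sqrt{C} = \frac{\sqrt{C}}{\theta}.
\]

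There is no real obstacle here: the only ingredients are Markov's inequality, Cauchy--Schwarz, $L_4$-$L_2$ hypercontractivity, and the scaling $\|\Sigma\|=1$. The lemma is essentially a packaged form of the standard observation that hypercontractivity upgrades second-moment control to polynomial tail decay and integrable truncated second moments, and it will later be used to control contributions from the tails of $|x_i^\batch \cdot u|$ when establishing concentration of clipped-gradient covariances in Theorem~\ref{th:conofcov}.
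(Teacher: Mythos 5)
Your proposal is correct and follows essentially the same route as the paper: Markov's inequality applied to $(x_i^\batch\cdot u)^4$ together with $L_4$-$L_2$ hypercontractivity and $\E_\dist[(x_i^\batch\cdot u)^2]\le\|\Sigma\|=1$ for the tail bound, then Cauchy--Schwarz to split the indicator from $(x_i^\batch\cdot u)^2$ and reuse of the tail bound for the truncated second moment. No gaps; your side remark that $\theta>1$ is not needed for the first inequality is accurate and harmless.
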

\begin{proof}
The first part of the lemma follows from Markov’s inequality,
\[
\Pr{}_\dist[\|x_i^\batch\cdot u\|^2\ge \sqrt{C} \theta] = \Pr{}_\dist[\|x_i^\batch\cdot u\|^4\ge C \theta^2] = \frac{\E{}_\dist [\|x_i^\batch\cdot u\|^4] }{C\theta^2}\le \frac{1}{\theta^2},
\]
where the last step uses $L4-L2$ hypercontractivity.
This proves the first bound in the lemma.

For the second bound, note that
\begin{align*}
\E{}_{\dist}[\mathbbm 1(\|x_i^\batch\cdot u\|^2\ge \sqrt C\theta)\cdot  \|x_i^\batch\cdot u\|^2] &\ineqlabel{a}\le\sqrt{\E{}_{\dist}[\mathbbm 1(\|x_i^\batch\cdot u\|^2\ge \sqrt C\theta)]\cdot \E{}_{\dist}[ \|x_i^\batch\cdot u\|^4]}\\ 
&\ineqlabel{b}\le\sqrt{\Pr{}_\dist[\|x_i^\batch\cdot u\|^2\ge \sqrt{C} \theta]\cdot C  \E{}_{\dist}[ \|x_i^\batch\cdot u\|^2]^2} \\
&\ineqlabel{c}\le \frac{\sqrt C}{\theta} \E{}_{\dist}[ \|x_i^\batch\cdot u\|^2], 
\end{align*}
here (s) follows from the Cauchy-Schwarz inequality, (b) uses $L4-L2$ hypercontractivity, and (c) follows from the first bound in the lemma.
\end{proof}

In the next lemma, for any unit vectors $u$, we provide a high probability bound on the expected second moment of the tails of $|x_i^\batch\cdot u|$, wheres $x_i^\batch$ are covariates of samples in good batches $\goodbatches$.

\begin{lemma}\label{lem:poplargenormbounf}
For any given $\theta >  1 $, and $\sizegood \bsize =\Omega( d\theta^2\log(\frac{C_1 d\theta}{C}))$, with probability at least $1-2/d^2$, for all unit vectors $u$,
\[
 \E{}_{S}\mleft[\mathbbm 1\mleft(\|x_i^\batch\cdot u\|^2\ge 3\sqrt C\theta\mright)\cdot  \|x_i^\batch\cdot u\|^2\mright]\le \cO\mleft(\frac{\sqrt C}{\theta}\mright).
\]
\end{lemma}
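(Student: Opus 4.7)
The plan is to prove this uniform bound via an $\epsilon$-net argument on the unit sphere combined with a concentration inequality applied to each direction in the net, then extended to all unit vectors by a Lipschitz-type continuity argument that handles the discontinuity of the indicator.

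First I would observe that the $N := \sizegood\bsize$ samples in $S$ are i.i.d.\ draws from $\dist$, since good batches consist of i.i.d.\ samples from $\dist$ and different good batches are independent. For a fixed unit vector $u$, set $g_u(x) := \mathbbm{1}(|x\cdot u|^2 \ge 3\sqrt{C}\theta)\,|x\cdot u|^2$. Lemma~\ref{lem:distlargenormbound}, applied with $3\theta$ in place of $\theta$, gives $\E_\dist[g_u(x)] \le \sqrt{C}/(3\theta)$. The assumption $\|x\|\le C_1\sqrt{d}$ a.s.\ yields $0\le g_u(x)\le C_1^2 d$, and hence $\mathrm{Var}(g_u(x))\le \E[g_u(x)^2]\le C_1^2 d\cdot \E[g_u(x)]\le C_1^2 d\sqrt{C}/(3\theta)$. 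Bernstein's inequality then delivers, for each fixed $u$, $\E_S[g_u] \le \cO(\sqrt{C}/\theta)$ with failure probability $\exp\!\left(-\Omega(N\sqrt{C}/(\theta C_1^2 d))\right)$.

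Next I would fix an $\epsilon$-net $\mathcal{N}_\epsilon$ of the unit sphere with $|\mathcal{N}_\epsilon|\le (3/\epsilon)^d$, choosing $\epsilon$ of order $\sqrt{C}\theta/(C_1^2 d)$ so that the perturbation $2\epsilon C_1^2 d$ is small compared to the threshold $3\sqrt{C}\theta$. For any unit vector $u$, picking $u'\in\mathcal{N}_\epsilon$ with $\|u-u'\|\le\epsilon$, the inequality $\big||x\cdot u|^2-|x\cdot u'|^2\big|\le 2\epsilon\|x\|^2\le 2\epsilon C_1^2 d$ allows the pointwise bound $g_u(x)\le (|x\cdot u'|^2 + 2\epsilon C_1^2 d)\,\mathbbm{1}(|x\cdot u'|^2\ge 3\sqrt{C}\theta - 2\epsilon C_1^2 d)$. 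Hence $\sup_u \E_S[g_u]$ is controlled by the supremum over $u'\in\mathcal{N}_\epsilon$ of the corresponding statistic at the slightly relaxed threshold $2\sqrt{C}\theta$, whose mean and variance differ from those of $g_{u'}$ only by a constant factor. Union-bounding Bernstein over $\mathcal{N}_\epsilon$ with failure probability $2/d^2$ requires the exponent to exceed $d\log(3/\epsilon)+\log(d^2)=\Theta(d\log(C_1 d\theta/C))$, which yields the claimed sample size $N=\Omega(d\theta^2\log(C_1 d\theta/C))$.

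The main obstacle is the discontinuity of the indicator in $u$, which prevents a straightforward Lipschitz extension; the workable fix is the threshold-relaxation step above, which is viable only because $\epsilon$ can be taken small enough that the relaxed threshold remains a constant multiple of $3\sqrt{C}\theta$. A secondary subtlety is that the crude Bernstein variance bound $C_1^2 d\sqrt{C}/(3\theta)$ may need refinement to reach the stated $d\theta^2$ rate rather than a cruder $d^2\theta$ rate; this can be achieved by a dyadic peeling of the range of $|x\cdot u|^2$ into shells $[3\sqrt{C}\theta\cdot 2^k,\,3\sqrt{C}\theta\cdot 2^{k+1})$ (each of probability $\lesssim 1/(\theta^2 4^k)$ by Lemma~\ref{lem:distlargenormbound}) whose empirical frequencies are controlled by multiplicative Chernoff, and then summing the geometric contributions to recover the desired $\cO(\sqrt{C}/\theta)$ bound.
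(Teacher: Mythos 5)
Your first-stage plan (Bernstein per direction plus a net with threshold relaxation) is fine as far as it goes, and you correctly diagnose that it only yields $N=\sizegood\bsize=\Omega(d^2\theta\log(\cdot))$ because the only pointwise bound on $g_u$ is $C_1^2 d$. The problem is that your proposed fix -- dyadic peeling of the range of $|x_i^\batch\cdot u|^2$ into shells $[3\sqrt C\theta 2^k,3\sqrt C\theta 2^{k+1})$ with per-shell multiplicative Chernoff -- does not actually recover the claimed rate $N=\Omega(d\theta^2\log(C_1d\theta/C))$. Once you union-bound over a net of size $e^{\Theta(d\log(C_1d\theta/C))}$, a shell with population probability $p_k\lesssim 1/(\theta^2 4^k)$ can only be controlled up to an additive slack of order $d\log(\cdot)/N$ in empirical frequency (multiplicative control requires $Np_k\gtrsim d\log(\cdot)$, which already fails at $k$ of constant size when $N\approx d\theta^2\log$). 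Multiplying that slack by the shell magnitude and summing, the deepest shell (value $\approx C_1^2 d$) contributes about $C_1^2 d^2\log(\cdot)/N\approx C_1^2 d/\theta^2$, which exceeds the target $\cO(\sqrt C/\theta)$ whenever $\theta\lesssim C_1^2 d/\sqrt C$ -- precisely the relevant regime. So the peeling argument needs $N=\Omega(d^2\theta\log(\cdot))$ just like the crude Bernstein bound, and the gap is not closed.

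The paper avoids this by never trying to control the heavy shells direction-by-direction. It only uses a Chernoff-plus-net argument at the single threshold $\sqrt C\theta$, where $\Pr[\,|x_i^\batch\cdot u|^2\ge\sqrt C\theta\,]\le 1/\theta^2$ (Lemma~\ref{lem:distlargenormbound}), so $Np_0\approx d\log(\cdot)$ is exactly enough to conclude that, uniformly over the net (and then over all $u$ via the perturbation $(x\cdot u)^2\le 2(x\cdot u')^2+\sqrt C\theta$), the set $S'$ of samples with $|x_i^\batch\cdot u|^2\ge 3\sqrt C\theta$ has size at most $2|S|/\theta^2$. The second moment of the samples in $S'$ is then bounded by an entirely different tool: the uniform subset-covariance concentration result of~\cite{CherapanamjeriATJFB20} (Lemma~\ref{lem:chera} and its Corollary~\ref{cor:cheraconseq}), which with $N=\Omega(d\theta^2\log(d\theta))$ guarantees $\frac{|S'|}{|S|}\E_{S'}[(x_i^\batch\cdot u)^2]\le\cO(\sqrt C/\theta)$ simultaneously for \emph{all} subsets of size $\le |S|/\theta^2$ and all directions $u$. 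That uniform statement is what replaces your per-shell union bounds and is the missing ingredient; without it (or an equivalent substitute), your argument cannot reach the stated sample complexity.
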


The following lemma restates Lemma 5.1 of~\cite{CherapanamjeriATJFB20}. 
The lemma shows that for any large subset of $S$, the covariance of covariates $x_i^\batch$ in $S$ is close to the true covariance for distribution $\dist$ of samples. We will use this result in proving Lemma~\ref{lem:poplargenormbounf}. 
\begin{lemma}\label{lem:chera}
For any fix $\theta >  1 $, and $\sizegood \bsize =\Omega(  d\theta^2\log(d\theta))$, 
with probability at least $1-1/d^2$ for all subsets of $S'\subseteq S$ of size $\ge (1-\frac{1}{\theta^2})|S|$, we have
    \[
    \Sigma -\cO\mleft(\frac{\sqrt C}{\theta}\mright)\cdot I \preceq \E{}_{S'}[x_{i}^\batch (x_{i}^\batch)^\intercal] \preceq\Sigma +\cO\mleft(\frac{\sqrt C}{ \theta}\mright)\cdot I.
    \]
\end{lemma}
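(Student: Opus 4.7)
The plan is to split the statement into (i) an operator-norm bound $\|\E_S[x_i^\batch (x_i^\batch)^\intercal]-\Sigma\|\le \cO(\sqrt C/\theta)$ on the full empirical covariance, and (ii) a uniform ``stability'' estimate showing that deleting any $1/\theta^2$-fraction of $S$ perturbs the empirical second-moment matrix in operator norm by $\cO(\sqrt C/\theta)$. The triangle inequality then yields the lemma for every admissible $S'$ simultaneously.

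For (i), I would apply matrix Bernstein to the i.i.d.\ centered matrices $x_j(x_j)^\intercal-\Sigma$ summed over $S$. The almost-sure bound $\|x\|\le C_1\sqrt d$ gives the uniform spectral estimate $\|x_j(x_j)^\intercal-\Sigma\|\le 2C_1^2 d$, and the variance proxy satisfies $\|\E[\|x\|^2\,xx^\intercal]\|\le C_1^2 d$. Matrix Bernstein then produces error $\cO(\sqrt C/\theta)$ with failure probability at most $1/d^2$ as soon as $|S|=\sizegood\bsize=\Omega(d\theta^2\log(d\theta))$, matching the hypothesis exactly.

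For (ii), fix a unit vector $u\in\reals^d$ and an admissible $S'$, and set $T(u,S')\ed\frac{1}{|S|}\sum_{i\in S\setminus S'}(x_i^\batch\cdot u)^2$. Since $|S|/|S'|\le 1+\cO(1/\theta^2)$, a short calculation shows that the deviation $|u^\intercal(\E_{S'}[xx^\intercal]-\E_S[xx^\intercal])u|$ is dominated by $T(u,S')$ up to an additive $\cO(1/\theta^2)\cdot\|\E_S[xx^\intercal]\|$; by (i) the latter is $\cO(1/\theta^2)$, so it suffices to establish $\sup_{u,\,S'}T(u,S')=\cO(\sqrt C/\theta)$. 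Thresholding at $\tau:=\sqrt C\,\theta$ and using $|S\setminus S'|\le |S|/\theta^2$, one writes
\[
T(u,S')\le \frac{1}{|S|}\sum_i (x_i^\batch\cdot u)^2\,\mathbbm 1\bigl((x_i^\batch\cdot u)^2\ge\tau\bigr)+\frac{|S\setminus S'|\cdot\tau}{|S|}\le \frac{1}{|S|}\sum_i (x_i^\batch\cdot u)^2\,\mathbbm 1\bigl((x_i^\batch\cdot u)^2\ge\tau\bigr)+\frac{\sqrt C}{\theta}.
\]
Lemma~\ref{lem:distlargenormbound} controls the first summand in expectation by $\cO(\sqrt C/\theta)$ pointwise in $u$; uniform empirical control over $u$ then follows from scalar Bernstein applied to the (bounded) truncated summands, combined with a union bound over an $\eta$-net of the unit sphere of mesh $\eta=\Theta(1/(d\theta))$ and cardinality $(\cO(d\theta))^d$, with extension from the net to all unit $u$ via Lipschitz envelopes around the indicator.

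The principal obstacle is precisely this last uniform step: the hard indicator $\mathbbm 1((x\cdot u)^2\ge\tau)$ is discontinuous in $u$, so a naive net argument loses a factor proportional to $\|x\|^2\le C_1^2 d$. The standard workaround is to sandwich the indicator between two $1$-Lipschitz envelopes whose supports are $\{z\ge\tau/2\}$ and $\{z\ge\tau\}$, which removes the discontinuity and, together with the small mesh $\eta$ above, absorbs the resulting approximation error back into $\cO(\sqrt C/\theta)$. The only remaining cost is a logarithmic factor in $d\theta$ that is already present in the sample-complexity assumption, and the resulting proof is essentially the one given for Lemma~5.1 of~\cite{CherapanamjeriATJFB20}, to which the present statement is a direct restatement.
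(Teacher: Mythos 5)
Your outer reduction is sound and is the natural one: matrix Bernstein for the full-sample matrix $\E_{S}[x_i^\batch(x_i^\batch)^\intercal]$, the exact decomposition of $\E_{S'}-\E_{S}$ in terms of the deleted set, and the thresholding of the deleted mass at $\tau=\sqrt C\,\theta$ so that only the uniform tail quantity $\sup_u\frac{1}{|S|}\sum_i (x_i^\batch\cdot u)^2\,\mathbbm 1((x_i^\batch\cdot u)^2\ge\tau)$ remains to be controlled. Note, however, that the paper itself gives no proof of Lemma~\ref{lem:chera}: it is imported verbatim from Lemma~5.1 of \cite{CherapanamjeriATJFB20}, so the only comparison available is against your sketch on its own merits.

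There is a genuine quantitative gap in your final uniformization step. The truncated summands are bounded only by $C_1^2 d$ almost surely, so scalar Bernstein at target deviation $t=\sqrt C/\theta$, combined with a union bound over a net of cardinality $e^{\Theta(d\log(d\theta))}$, produces an additive term of order $C_1^2 d\cdot d\log(d\theta)/|S|$; under the stated hypothesis $|S|=\Theta(d\theta^2\log(d\theta))$ this is $\Theta(C_1^2 d/\theta^2)$, which exceeds the target $\sqrt C/\theta$ whenever $\theta\ll C_1^2 d/\sqrt C$ --- exactly the regime in which the lemma is used downstream (in Theorem~\ref{th:conofcov} one takes $\theta\approx\mu_{\max}\bsize$, polynomial in the batch size, while $d$ is the large parameter). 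The Lipschitz envelopes you propose remove the discontinuity of the indicator for the net-to-sphere extension, but they do not shrink the almost-sure bound, so the argument as written only proves the lemma for $|S|=\Omega(d^2\theta\log(d\theta))$ (equivalently, for $\theta\gtrsim d$), not under the claimed sample size; the assertion that the sample complexity ``matches the hypothesis exactly'' is therefore unsubstantiated at this step. The obvious patch --- apply the cheap Chernoff-on-indicators union bound (as in the paper's proof of Lemma~\ref{lem:poplargenormbounf}) to show that only an $\cO(1/\theta^2)$-fraction of samples exceed the threshold for every $u$, and then convert that small bad subset into a bound on its second moment --- is circular here, since that conversion is precisely Corollary~\ref{cor:cheraconseq}, i.e.\ the content of Lemma~\ref{lem:chera} itself. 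To close the gap you would need either a genuinely finer empirical-process argument (e.g.\ peeling with a mechanism that avoids paying the $C_1^2 d$ envelope against the $e^{\Theta(d\log(d\theta))}$ union bound at the top levels) or to reproduce the actual argument of \cite{CherapanamjeriATJFB20} rather than assert equivalence with it.
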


\begin{remark}
Lemma 5.1 of~\cite{CherapanamjeriATJFB20} assumes that hypercontractive parameter $C$ is a constant and its dependence doesn't appear in their lemma but is implicit in their proof.
hides/ignores its dependence.
\end{remark}

The following corollary is a simple consequence. We will use this corollary in proving Lemma~\ref{lem:poplargenormbounf}.
\begin{corollary}\label{cor:cheraconseq}
For any fix $\theta >  1 $, and $\sizegood \bsize =\Omega(  d\theta^2\log(d\theta))$, 
with probability at least $1-1/d^2$ for all subsets $S'\subseteq S$ of size $\le \frac{|S|}{\theta^2}$ and all unit vectors $u$, we have
    \[
   \frac{|S'|}{|S|}\cdot  \E{}_{S'}[ (x_{i}^\batch\cdot u)^2] \preceq  \cO\mleft(\frac{\sqrt C}{\theta}\mright).
    \]
\end{corollary}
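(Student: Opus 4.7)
The plan is to derive the corollary directly from Lemma~\ref{lem:chera} by passing to complements. Given $S' \subseteq S$ with $|S'| \le |S|/\theta^2$, I will set $S'' := S \setminus S'$, which then satisfies $|S''| \ge (1 - 1/\theta^2)|S|$. Under the high-probability event of Lemma~\ref{lem:chera} (which holds for all sufficiently large subsets of $S$ uniformly), I obtain two sandwich inequalities: one for $S''$ and one for $S$ itself (since $|S| \ge (1-1/\theta^2)|S|$ trivially), namely
\[
\Sigma - \cO\!\left(\tfrac{\sqrt{C}}{\theta}\right) I \;\preceq\; \E{}_{S''}[x_i^\batch (x_i^\batch)^\intercal] \;\preceq\; \Sigma + \cO\!\left(\tfrac{\sqrt{C}}{\theta}\right) I,
\]
and the same pair of bounds with $S''$ replaced by $S$.

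Next, I will use the identity
\[
\E{}_{S}[x_i^\batch (x_i^\batch)^\intercal] \;=\; \tfrac{|S'|}{|S|}\,\E{}_{S'}[x_i^\batch (x_i^\batch)^\intercal] \;+\; \tfrac{|S''|}{|S|}\,\E{}_{S''}[x_i^\batch (x_i^\batch)^\intercal]
\]
to solve for the $S'$-average:
\[
\tfrac{|S'|}{|S|}\,\E{}_{S'}[x_i^\batch (x_i^\batch)^\intercal] \;=\; \E{}_{S}[x_i^\batch (x_i^\batch)^\intercal] \;-\; \tfrac{|S''|}{|S|}\,\E{}_{S''}[x_i^\batch (x_i^\batch)^\intercal].
\]
Plugging the upper bound for $\E_S[\cdot]$ and the lower bound for $\E_{S''}[\cdot]$, the leading terms combine as $\Sigma - \tfrac{|S''|}{|S|}\Sigma = \tfrac{|S'|}{|S|}\Sigma$, which is bounded in operator norm by $\tfrac{1}{\theta^2}\|\Sigma\| = \tfrac{1}{\theta^2}$ using the normalization $\|\Sigma\| = 1$. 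The remaining error terms contribute at most $(1 + \tfrac{|S''|}{|S|})\cO(\sqrt{C}/\theta) I \preceq \cO(\sqrt{C}/\theta) I$. Since $1/\theta^2 \le \sqrt{C}/\theta$ (using $C \ge 1$ and $\theta \ge 1$), this all collapses to $\tfrac{|S'|}{|S|}\,\E{}_{S'}[x_i^\batch (x_i^\batch)^\intercal] \preceq \cO(\sqrt{C}/\theta) \cdot I$.

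Finally, for any unit vector $u$, the desired scalar bound follows by sandwiching with $u$ on both sides:
\[
\tfrac{|S'|}{|S|}\,\E{}_{S'}[(x_i^\batch \cdot u)^2] \;=\; u^\intercal \left(\tfrac{|S'|}{|S|}\,\E{}_{S'}[x_i^\batch (x_i^\batch)^\intercal]\right) u \;\le\; \cO\!\left(\tfrac{\sqrt{C}}{\theta}\right).
\]
The uniform quantification over all subsets $S'$ and all unit vectors $u$ is free: the high-probability event of Lemma~\ref{lem:chera} already delivers the matrix sandwich for every large subset simultaneously, and once we have an operator-norm bound on the matrix, the bound on the quadratic form holds for all $u$ at once. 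I do not anticipate a real obstacle here; the only subtle point is noting that $\Sigma$ cancels to leading order in the subtraction, leaving only the $\tfrac{|S'|}{|S|}\Sigma$ residue which is itself $\cO(1/\theta^2)$ and thus absorbed into the $\cO(\sqrt{C}/\theta)$ error.
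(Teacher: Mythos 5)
Your proposal is correct and matches the paper's own proof essentially step for step: apply Lemma~\ref{lem:chera} to both $S$ and the complement $S\setminus S'$, solve the averaging identity for the $S'$-term, observe that the $\Sigma$ contributions leave only a $\frac{|S'|}{|S|}\Sigma \preceq \frac{1}{\theta^2} I \preceq \cO(\sqrt{C}/\theta) I$ residue, and finish by evaluating the quadratic form at a unit vector $u$. No gaps.
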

\begin{proof}
Consider any set $S'$ of size $\le \frac{|S|}{\theta^2}$.
Since $\size{S\setminus S'}\ge (1-\frac{1}{\theta^2})|S|$, applying Lemma~\ref{lem:chera} for $S\setminus S'$ and $S$,
\[
\Sigma -\cO\mleft(\frac{\sqrt C}{\theta}\mright)\cdot I \preceq \E{}_{S\setminus S'}[x_{i}^\batch (x_{i}^\batch)^\intercal],
\]
and
\[
\E{}_{S}[x_{i}^\batch (x_{i}^\batch)^\intercal] \preceq\Sigma +\cO\mleft(\frac{\sqrt C}{\theta}\mright)\cdot I.
\]
Next,
\begin{align*}
&\E{}_{S}[x_{i}^\batch (x_{i}^\batch)^\intercal]  = \frac{|S'|}{|S|} \E{}_{S'}[x_{i}^\batch (x_{i}^\batch)^\intercal] +  \frac{|S\setminus S'|}{|S|}\E{}_{S\setminus S'}[x_{i}^\batch (x_{i}^\batch)^\intercal]\\
\implies &{|S'|} \E{}_{S'}[x_{i}^\batch (x_{i}^\batch)^\intercal] = |S|\E{}_{S}[x_{i}^\batch (x_{i}^\batch)^\intercal] - {|S\setminus S'|}\E{}_{S\setminus S'}[x_{i}^\batch (x_{i}^\batch)^\intercal]).
\end{align*}
Combining the previous three equations,
\begin{align*}
 {|S'|} \E{}_{S'}[x_{i}^\batch (x_{i}^\batch)^\intercal] &\preceq |S|\mleft(\Sigma +\cO\mleft(\frac{\sqrt C}{\theta}\mright)\cdot I\mright) - {|S\setminus S'|}\mleft(\Sigma -\cO\mleft(\frac{\sqrt C}{\theta}\mright)\cdot I\mright)\\
 &\preceq |S'|\Sigma +(|S|+|S\setminus S'|)\cO\mleft(\frac{\sqrt C}{\theta}\mright)\cdot I\preceq \frac{1}{\theta^2}|S|\Sigma+2|S|\cO\mleft(\frac{\sqrt C}{\theta}\mright)\cdot I\preceq 3|S|\cO\mleft(\frac{\sqrt C}{\theta}\mright)\cdot I,
\end{align*}
where the last line used $\Sigma\preceq I$, $|S'|\le |S|/\theta^2$, $C\ge 1$, and $1/\theta^2\le 1/{\theta}$ for $\theta\ge 1$.

Finally, observing that for any unit vector $ u^\intercal \E{}_{S'}[x_{i}^\batch (x_{i}^\batch)^\intercal] u =  \E{}_{S'}[ (x_{i}^\batch\cdot u)^2]$ completes the proof.
\end{proof}

Now we complete the proof of the Lemma~\ref{lem:poplargenormbounf} with help of the above corollary.
\begin{proof}[Proof of Lemma~\ref{lem:poplargenormbounf}]
From Lemma~\ref{lem:distlargenormbound} we have $\E_\dist[1(\|x_i^\batch\cdot u\|^2 \ge  \sqrt C\theta)] =\Pr[\|x_i^\batch\cdot u\|^2 \ge  \sqrt C\theta]\le \frac{1}{\theta^2}$ Applying Chernoff bound for random variable $\mathbbm 1(\|x_i^\batch\cdot u\|^2 \ge \sqrt C\theta)$, 
\[
\Pr\mleft[\E{}_{S}[1(\|x_i^\batch\cdot u\|^2  \ge \sqrt C\theta)] \le \frac{2}{\theta^2}\mright] =\Pr\mleft[\frac{1}{|S|}\sum_{(i,\batch)\in S}1(\|x_i^\batch\cdot u\|^2  \ge \sqrt C\theta) \le \frac{2}{\theta^2}\mright] \le \exp\mleft(-\frac{|S|}{3\theta^2}\mright).
\]
Hence, for a fix unit vector $u$, with probability $\ge 1- \exp\mleft(-\frac{|S|}{3\theta^2}\mright)$ 
\[
\E{}_{S}[1(\|x_i^\batch\cdot u\|^2  \le \sqrt C\theta)] \le |S|\frac{2}{\theta^2}.
\]
Next, we show that this bound holds uniformly over all unit vectors $u$.

Consider an $\sqrt{\frac{ \sqrt C\theta}{2C_1d}}-$ net of unit sphere $\{u\in \reals^d:\|u\|\le 1\}$ such that for any vector $u$ in this ball there exist a $u'$ in the net such that $\|u-u'\|\le \sqrt{\frac{ \sqrt C\theta}{2C_1d}}$. The standard covering argument~\cite{Vershynin12} shows the existence of such a net of size $e^{\cO(d\log(\frac{C_1 d}{C\theta}))}$.
Then from the union bound, for all vectors $u$ in this net with probability at least $1-e^{\cO(d\log(\frac{C_1 d}{C\theta}))}e^{-\frac{|S|}{3\theta^2}}$,  
\[
   \E{}_{S}[1(\|x_i^\batch\cdot u\|^2  \le \sqrt C\theta)] \le |S|\frac{2}{\theta^2}.
\]
Since $\frac{|S|}{3\theta^2} = \frac{\sizegood\bsize}{3\theta^2} \gg d\log(\frac{C_1 d\theta}{C})\ge d\log(\frac{C_1 d}{C\theta}))$, therefore, $e^{\cO(d\log(\frac{C_1 d}{C\theta}))}e^{-\frac{|S|}{3\theta^2}} \ll e^{-\frac{|S|}{6\theta^2}} \ll 1/d^2$. 

Now consider any vector $u$ in unit ball and $u'$ in the net such that $\|u-u'\|\le \sqrt{\frac{ \sqrt C\theta}{2C_1d}}$.
Then
\begin{align*}
   (x_i^\batch\cdot u)^2 = (x_i^\batch\cdot (u'+(u-u')))^2 &= 2(x_i^\batch\cdot u')^2+ (x_i^\batch\cdot (u-u'))^2 \\
   &\le   2(x_i^\batch\cdot u')^2+2\|u-u'\|^2 \|x_i^\batch\|^2 \\
   &\le  2(x_i^\batch\cdot u')^2  +2 {\frac{ \sqrt C\theta}{2C_1d}}C_1 d \le  2(x_i^\batch\cdot u')^2  +\sqrt C\theta,
\end{align*}
where in the last line we used the assumption that  $\|x_i^b\|\le C_1\sqrt{d}$. When $(x_i^\batch\cdot u')^2\le \sqrt C\theta$, then above sum is bounded by $2\sqrt{C}\theta$.
It follows that with probability $\ge 1-1/d^2$, for all unit vectors $u$, 
\[
    \E{}_{S}[1(\|x_i^\batch\cdot u\|^2  \le 3\sqrt C\theta)] \le |S|\frac{2}{\theta^2}.
\]

Applying Corollary~\ref{cor:cheraconseq} for $S'=\{\|x_i^\batch\cdot u\|^2  \le 3\sqrt C\theta\}$, proves the lemma
\[
\E{}_{S}[\mathbbm 1(\|x_i^\batch\cdot u\|^2\ge 3\sqrt C\theta)\cdot  \|x_i^\batch\cdot u\|^2]= \frac{|S'|}{|S|} \E{}_{S'}[ \|x_i^\batch\cdot u\|^2]\le \cO\mleft(\frac{\sqrt C}{\theta}\mright).
\]
\end{proof}

\subsection{Proof of Theorem~\ref{th:conofcov}}\label{sec:mainconproof}

\begin{proof}[Proof of Theorem~\ref{th:conofcov}]
Note that 
\begin{align*}
\E{}_{\goodbatches}\mleft[\mleft(\clipbatchgrad \cdot u - \E{}_\dist[\clipbatchgrad\cdot u] \mright)^2\mright] &= \frac{1}{\sizegood}\sum_{\batch\in \goodbatches}\mleft(\clipbatchgrad \cdot u - \E{}_\dist[\clipbatchgrad\cdot u] \mright)^2\\
 &= \frac{1}{\sizegood}\sum_{\batch\in \goodbatches}\mleft(\frac{1}{\bsize}\sum_{i\in\bsize}\clipsampgrad \cdot u - \E{}_\dist[\clipbatchgrad\cdot u] \mright)^2\\
 &= \frac{1}{\sizegood}\sum_{\batch\in \goodbatches}\mleft(\frac{1}{\bsize}\sum_{i\in\bsize}\mleft(\clipsampgrad \cdot u - \E{}_\dist[\clipsampgrad\cdot u] \mright)\mright)^2,
\end{align*}
where in the last step we used the expectation of batch and sample gradients are the same, namely $\E{}_\dist[\clipsampgrad\cdot u]=\E{}_\dist[\clipbatchgrad\cdot u]$.

For any positive $\rho>0$ and unit vector $u$,  define 
\[
g^\batch_i(w,\kappa,u,\rho)\ed \frac{\clipsampgrad \cdot u}{\|x_i^\batch\cdot u\|\vee \rho}\rho.
\]
For any good batch $\batch\in\goodbatches$, using the expression of $\clipsampgrad $ in~\eqref{eq:gradexp},
\begin{align}\label{eq:gexp}
g^\batch_i(w,\kappa,u,\rho)= \kappa \rho\mleft(\frac{(x^\batch_i\cdot (w-w^*)-n^\batch_i)}{\|x^\batch_i\cdot (w-w^*)-n^\batch_i\|\vee \kappa}\mright)\mleft(
\frac{x_i^\batch \cdot u}{\|x_i^\batch\cdot u\|\vee \rho}\mright)
.     
\end{align}
From the above expression it follows that $|g^\batch_i(w,\kappa,u,\rho)|\le \kappa\rho$ a.s.

We will choose $\rho$ later in the proof.
Let
\[
Z_i^\batch(w,\kappa,u,\rho) \ed g^\batch_i(w,\kappa,u,\rho)- \E{}_\dist\mleft[g^\batch_i(w,\kappa,u,\rho)\mright].
\]
and 
\begin{align*}
\tilde Z_i^\batch(w,\kappa,u,\rho)& \ed \clipsampgrad\cdot u- \E{}_\dist\mleft[\clipsampgrad\cdot u\mright]-Z_i^\batch(w,\kappa,u,\rho)\\
&= \clipsampgrad\cdot u-g^\batch_i(w,\kappa,u,\rho)- \E{}_\dist\mleft[\clipsampgrad\cdot u-g^\batch_i(w,\kappa,u,\rho)\mright].    
\end{align*}

When $w$, $u$, $\kappa$, and $\rho$ are fixed or clear from the context, we will omit them from the notation of $Z_i^\batch$ and $\tilde Z_i^\batch$.
Then,
\begin{align}
\E{}_{\goodbatches}\mleft[\mleft(\clipbatchgrad \cdot u - \E{}_\dist[\clipbatchgrad\cdot u] \mright)^2\mright]
 &= \frac{1}{\sizegood}\sum_{\batch\in \goodbatches}\mleft(\frac{1}{\bsize}\sum_{i\in\bsize}(Z_i^\batch+\tilde Z_i^\batch) \mright)^2\nonumber\\
  &\le \frac{2}{\sizegood}\sum_{\batch\in \goodbatches}\mleft(\frac{1}{\bsize}\sum_{i\in\bsize}Z_i^\batch \mright)^2 +  \frac{2}{\sizegood}\sum_{\batch\in \goodbatches}\mleft(\frac{1}{\bsize}\sum_{i\in\bsize}\tilde Z_i^\batch \mright)^2\nonumber\\
   &\le \frac{2}{\sizegood}\sum_{\batch\in \goodbatches}\mleft(\frac{1}{\bsize}\sum_{i\in\bsize}Z_i^\batch \mright)^2 +  \frac{2}{\sizegood}\sum_{\batch\in \goodbatches}\frac{1}{\bsize}\sum_{i\in\bsize}(\tilde Z_i^\batch)^2\label{eq:sumbreak},
\end{align}
here in the last step we used Jensen's inequality $(\E[Z])^2\le \E[Z^2]$.

We bound the two summations separately. To bound the first summation we first show that $Z_i^\batch$ are bounded, and then use Bernstein's inequality.
We bound the second term using Lemma~\ref{lem:poplargenormbounf} and Lemma~\ref{lem:distlargenormbound}.


From~\eqref{eq:gexp}, it follows that $|g^\batch_i(w,\kappa,u,\rho)|\le \kappa\rho$ a.s., and therefore, $|Z_{i}^\batch|\le 2\kappa \rho$.

Since $|Z_i^\batch|$ is bounded by $2\kappa \rho$, it is a $(2\kappa \rho)^2$ sub-gaussian random variable. Using the fact that the sum of sub-gaussian random variables is sub-gaussian, the sum $\sum_{i=1}^{\bsize} Z_i^\batch$ is $\bsize (2\kappa \rho)^2$ sub-gaussian random variable.
Since square of a sub-gaussian is sub-exponential~\cite{Philippe15} (Lemma 1.12), hence $ (\sum_{i=1}^{\bsize} Z_i^\batch)^2- \E_{\dist}(\sum_{i=1}^{\bsize} Z_i^\batch)^2$ is sub-exponential with parameter $16\bsize (2\kappa \rho)^2$.

Bernstein’s inequality~\cite{Philippe15} (Theorem 1.12) for sub-Gaussian random variables implies that with probability $\ge 1-\delta$,
\[
\frac{1}{\sizegood}\sum_{b\in\goodbatches}\mleft(\mleft(\sum_{i=1}^{\bsize} Z_i^\batch\mright)^2- \E{}_{\dist}\mleft[\mleft(\sum_{i=1}^{\bsize} Z_i^\batch\mright)^2\mright]\mright) \le 16\bsize (2\kappa \rho)^2 \max\mleft\{\frac{2\ln(1/\delta)}{\sizegood},\sqrt{\frac{2\ln(1/\delta)}{\sizegood}}\mright\} . 
\]

Since $Z_i^\batch$ are zero mean independent random variables,
\[
 \E{}_{\dist}\mleft[\mleft(\sum_{i=1}^{\bsize} (Z_i^\batch)\mright)^2\mright]=  \bsize \E{}_{\dist}\mleft[(Z_i^\batch)^2\mright] .
\]

We bound the expectation on the right,
\begin{align*}
\E{}_{\dist}\mleft[(Z_i^\batch)^2\mright] &= \E{}_{\dist}\mleft[\mleft(g^\batch_i(w,\kappa,u,\rho)- \E{}_\dist\mleft[g^\batch_i(w,\kappa,u,\rho)\mright]\mright)^2\mright]\\
&\ineqlabel{a}\le \E{}_{\dist}\mleft[\mleft(g^\batch_i(w,\kappa,u,\rho)\mright)^2\mright]\\
&\ineqlabel{b}\le \E{}_{\dist}[(\samplenoise +(w-w^*)\cdot x_i^\batch) ^2(x_i^\batch\cdot u)^2] \\
    &\ineqlabel{c}=\E{}_{\dist}[(\samplenoise)^2 (x_i^\batch\cdot u)^2]+\E{}_{\dist}[((w-w^*)\cdot x_i^\batch)^2( x_i^\batch\cdot u)^2]\\
    &\ineqlabel{d}\le \E{}_{\dist}[(\samplenoise)^2] \E{}_{\dist}[ (x_i^\batch\cdot u)^2]+\sqrt{\E{}_{\dist}[((w-w^*)\cdot x_i^\batch)^4]\E{}_{\dist}[(u\cdot x_i^\batch)^4]}\\
    &\ineqlabel{e}\le \sigma^2 \E{}_{\dist}[(x_i^\batch\cdot u)^2]+\sqrt{C^2 \E{}_{\dist}[((w-w^*)\cdot x_i^\batch)^2]^2\E{}_{\dist}[(u\cdot x_i^\batch)^2]^2}\\
     &\ineqlabel{f}\le \sigma^2 +C\E{}_{\dist}[((w-w^*)\cdot x_i^\batch)^2],
\end{align*}
here inequality (a) uses that squared deviation is smaller than mean squared deviation, inequality (b) follows from the definition of $g_i^\batch$ in~\eqref{eq:gexp},  inequality (c) follows from the independence of $n_i^\batch$ and $x_i^\batch$,  inequality (d) follows the Cauchy–Schwarz inequality, (e) uses the L-2 to L-4 hypercontractivity assumption $\E{}_{\dist}[(u\cdot (x_i^\batch))^4]\le C$, and (f) follows as for any unit vector $\E{}_{\dist}[(x_i^\batch\cdot u)^2]\le \|\Sigma\|\le 1$.

Combining the last three equations, we get that with probability $\ge 1-\delta$,
\begin{align}\label{eq:firstcon}
\frac{1}{\sizegood}\sum_{b\in\goodbatches}\mleft(\sum_{i=1}^{\bsize} Z_i^\batch\mright)^2\le \bsize(\sigma^2 +C\E{}_{\dist}[((w-w^*)\cdot x_i^\batch)^2])  + 64\bsize (\kappa \rho)^2 \max\mleft\{\frac{2\ln(1/\delta)}{\sizegood},\sqrt{\frac{2\ln(1/\delta)}{\sizegood}}\mright\} .     
\end{align}
The above bound holds for given fixed values of parameters $\kappa$, $w$, and $u$. To extend the bound for all values of these parameters (for appropriate ranges of interest), we will use the covering argument. 

With the help of the covering argument, we show that with probability  $\ge 1-\delta e^{\cO(d\log(C_1d\bsize)}-\frac{1}{d^2}$, for all unit vectors $u$, all vectors $w$ and $\kappa\le {(\sigma+\|w-w^*\|)}{d^2\bsize}$,
\begin{align}
\frac{1}{\sizegood}\sum_{b\in\goodbatches}\mleft(\sum_{i=1}^{\bsize} Z_i^\batch (w,\kappa,u,\rho)\mright)^2\le \frac{5}{2}\sigma^2\bsize+13C\bsize  \E{}_{\dist}[((w-w^*)\cdot x_i^\batch)^2]+ 384\bsize (\kappa \rho)^2 \max\mleft\{\frac{2\ln(1/\delta)}{\sizegood},\sqrt{\frac{2\ln(1/\delta)}{\sizegood}}\mright\}.\label{eq:firsttermbound}
\end{align}

We delegate the proof of Equation~\eqref{eq:firsttermbound} using Equation~\eqref{eq:firstcon} and the covering argument to the very end.
The use of covering argument is rather standard. The main subtlety is that the above bound holds for all vectors $w$.
The cover size of all $d$ dimensional vectors is infinite.
To overcome this difficulty we first take union bound for vectors for all $w$ such that $\|w-w^*\|\le R$ for an appropriate choice of $R$. To extend it to any $w$ for which $\|w-w^*\|>R$ is large we show that the behavior of the above quantity on the left for such a $w$ can be approximated by its behavior for $w' = w^* + (w-w^*) \frac{R}{\|w-w^*\|}$.

Note that dividing Equation~\eqref{eq:firsttermbound} by $\bsize^2$ bounds the first term in Equation~\eqref{eq:sumbreak}.
Next, we bound the second term in Equation\eqref{eq:sumbreak}.
Note that
\begin{align*}
 \frac{1}{\bsize\sizegood}\sum_{\batch\in \goodbatches}\sum_{i\in\bsize}(\tilde Z_i^\batch)^2 
 &\le \frac{1}{\bsize\sizegood}\sum_{\batch\in \goodbatches}\sum_{i\in\bsize}\mleft(\clipsampgrad\cdot u-g^\batch_i(w,\kappa,u,\rho) - \E{}_\dist\mleft[\clipsampgrad\cdot u-g^\batch_i(w,\kappa,u,\rho)\mright]\mright)^2\\
 &\le \frac{2}{\bsize\sizegood}\sum_{\batch\in \goodbatches}\sum_{i\in\bsize}\mleft(\mleft(\clipsampgrad\cdot u-g^\batch_i(w,\kappa,u,\rho) \mright)^2+\mleft(\E{}_\dist\mleft[\clipsampgrad\cdot u-g^\batch_i(w,\kappa,u,\rho)\mright]\mright)^2\mright)\\
 &\le \frac{2}{\bsize\sizegood}\sum_{\batch\in \goodbatches}\sum_{i\in\bsize}\mleft(\mleft(\clipsampgrad\cdot u-g^\batch_i(w,\kappa,u,\rho) \mright)^2+\E{}_\dist\mleft[\mleft(\clipsampgrad\cdot u-g^\batch_i(w,\kappa,u,\rho)\mright)^2\mright]\mright).
\end{align*}
From the definitions of $g_i^\batch(w,\kappa,u,\rho)$ and $\clipsampgrad$,
\begin{align*}
|\clipsampgrad\cdot u-g^\batch_i(w,\kappa,u,\rho)| &= \mathbbm 1(\|x_i^\batch\cdot u\|\ge \rho)\mleft|\clipsampgrad\cdot u-\frac{\rho }{\|x_i^\batch\cdot u\|}\clipsampgrad\cdot u \mright|\\
&\le \mathbbm 1(\|x_i^\batch\cdot u\|\ge \rho)\mleft|\clipsampgrad\cdot u \mright|\\
&\le \kappa\mleft|x_i^\batch\cdot u\mright|\cdot\mathbbm 1(\|x_i^\batch\cdot u\|\ge \rho).
\end{align*}
From the above equation, it follows that
\[
\E{}_\dist\mleft[\mleft(\clipsampgrad\cdot u-g^\batch_i(w,\kappa,u,\rho)\mright)^2\mright]\le \kappa^2\E{}_\dist\mleft[\mathbbm 1(\|x_i^\batch\cdot u\|\ge \rho)\mleft| x_i^\batch\cdot u \mright|^2\mright].
\]
Combining the above three bounds,
\begin{align*}
 \frac{1}{\bsize\sizegood}\sum_{\batch\in \goodbatches}\sum_{i\in\bsize}(\tilde Z_i^\batch)^2 
 &\le \frac{2\kappa^2}{\bsize\sizegood}\sum_{\batch\in \goodbatches}\sum_{i\in\bsize}\mleft(\mathbbm 1(\|x_i^\batch\cdot u\|\ge \rho)\mleft| x_i^\batch\cdot u \mright|^2+\E{}_\dist\mleft[\mathbbm 1(\|x_i^\batch\cdot u\|\ge \rho)\mleft| x_i^\batch\cdot u \mright|^2\mright]\mright)\\
 &={2\kappa^2}\mleft(\E{}_{S}\mleft[\mathbbm 1(\|x_i^\batch\cdot u\|\ge \rho)\mleft| x_i^\batch\cdot u \mright|^2\mright]+\E{}_\dist\mleft[\mathbbm 1(\|x_i^\batch\cdot u\|\ge \rho)\mleft| x_i^\batch\cdot u \mright|^2\mright]\mright),
\end{align*}
here the last line uses the fact that $S$ is the collection of all good samples.


For $\rho^2\ge 3\sqrt C$, and $\sizegood \bsize =\Omega( d\rho^4\log(\frac{C_1 d\rho}{C}))$, Lemma~\ref{lem:poplargenormbounf} implies that with probability at least $1-2/d^2$, for all unit vectors $u$, we have 
\[
\E{}_{S}\mleft[\mathbbm 1(\|x_i^\batch\cdot u\|\ge \rho)\mleft| x_i^\batch\cdot u \mright|^2\mright]= \E{}_{S}\mleft[\mathbbm 1(\|x_i^\batch\cdot u\|^2\ge \rho^2)\mleft| x_i^\batch\cdot u \mright|^2\mright]\le \cO(\sqrt{C}/\rho^2) .
\]
And from Lemma~\ref{lem:distlargenormbound}, for $\rho^2\ge \sqrt C$ and any unit vectors $u$,
\[
\E{}_\dist\mleft[\mathbbm 1(\|x_i^\batch\cdot u\|\ge \rho)\mleft| x_i^\batch\cdot u \mright|^2\mright] \le \cO(\sqrt{C}/\rho^2).
\]

By combining the above three bounds it follows that, if  $\rho^2\ge \sqrt C$, and $\sizegood \bsize =\Omega( d\rho^4\log(\frac{C_1 d\rho}{C}))$, with probability at least $1-2/d^2$, for all unit vectors $u$,
\begin{align*}
 \frac{1}{\bsize\sizegood}\sum_{\batch\in \goodbatches}\sum_{i\in\bsize}(\tilde Z_i^\batch(w,\kappa,u,\beta))^2 \le \cO\mleft(\frac{\sqrt{C}\kappa^2}{\rho^2}\mright).   
\end{align*}
Combining the above bound, Equation~\eqref{eq:firsttermbound} and~\eqref{eq:sumbreak} we get that if $\rho^2=\Omega(\sqrt C)$, and $\sizegood  =\Omega( \frac{d\rho^4}\bsize\log(\frac{C_1 d\rho}{C}))$ then with probability  $\ge 1-\delta e^{\cO(d\log(C_1d\bsize)}-\frac{3}{d^2}$, for all unit vectors $u$, all vectors $w$ and $\kappa\le {(\sigma+\|w-w^*\|)}{d^2\bsize}$,
\begin{align*}
&\E{}_{\goodbatches}\mleft[\mleft(\clipbatchgrad \cdot u - \E{}_\dist[\clipbatchgrad\cdot u] \mright)^2\mright]\\
&\le \frac{2}{\bsize^2}\mleft(\frac{5}{2}\sigma^2\bsize+13C\bsize  \E{}_{\dist}[((w-w^*)\cdot x_i^\batch)^2]+ 384\bsize (\kappa \rho)^2 \max\mleft\{\frac{2\ln(1/\delta)}{\sizegood},\sqrt{\frac{2\ln(1/\delta)}{\sizegood}}\mright\} \mright)+    \cO\mleft(\frac{\sqrt{C}\kappa^2}{\rho^2}\mright).
\end{align*}

Recall that $1\le \mu_{\max}\le  \frac{d^4\bsize^2}{C}$. Choose $\rho^2 =\mu_{\max}\sqrt C\bsize$. Note that $\sqrt{\mu_{\max}(\sigma^2+ C \E{}_{\dist}[((w-w^*)\cdot x_i^\batch)^2])}\le {(\sigma+\|w-w^*\|)}{d^2\bsize}$.
Then from the above equation choosing $\rho^2 =\mu_{\max}\sqrt{C}\bsize$, for all \[\kappa\le \sqrt{\mu_{\max}(\sigma^2+ C \E{}_{\dist}[((w-w^*)\cdot x_i^\batch)^2])},\] with probability $\ge 1-\delta e^{\cO(d\log(C_1d\bsize)}-\frac{3}{d^2}$, for all unit vectors $u$, all vectors $w$ ,
\begin{align*}
&\E{}_{\goodbatches}\mleft[\mleft(\clipbatchgrad \cdot u - \E{}_\dist[\clipbatchgrad\cdot u] \mright)^2\mright]\\
&\le \cO\mleft(\frac{\sigma^2+ C \E{}_{\dist}[((w-w^*)\cdot x_i^\batch)^2]}{\bsize}\mright) \mleft( 1+\bsize\mu_{\max}^2\sqrt{C} \max\mleft\{\frac{2\ln(1/\delta)}{\sizegood},\sqrt{\frac{2\ln(1/\delta)}{\sizegood}}\mright\} \mright).
\end{align*}
Choose $\delta = e^{-\Theta(d\log(C_1d\bsize)}$, and $\sizegood = \Omega( \frac{d\rho^4}{\bsize}\log(\frac{C_1 d\rho}{C}) + C\mu_{\max}^4  d\bsize^2\log(C_1d\bsize)) = \Omega(\rho_{\max}^4\bsize^2 d\log(d))$.
Then with probability  $\ge 1-\frac{4}{d^2}$, for all unit vectors $u$, all vectors $w$ and for all $\kappa^2\le \mu_{\max}(\sigma^2+ C \E{}_{\dist}[((w-w^*)\cdot x_i^\batch)^2])$,  
\begin{align*}
&\E{}_{\goodbatches}\mleft[\mleft(\clipbatchgrad \cdot u - \E{}_\dist[\clipbatchgrad\cdot u] \mright)^2\mright]\le \cO\mleft(\frac{\sigma^2+ C \E{}_{\dist}[((w-w^*)\cdot x_i^\batch)^2]}{\bsize}\mright),
\end{align*}
which is the desired bound.

We complete the proof by proving Equation~\eqref{eq:firsttermbound}. 

\paragraph{Proof of Equation~\eqref{eq:firsttermbound}}
To complete the proof of the theorem next we prove Equation~\eqref{eq:firsttermbound} with the help of Equation~\eqref{eq:firstcon} and covering argument.
To use the covering argument, we first show that $g^\batch_i(w,\kappa,u,\rho)$ do not change by much by slight deviation of these parameters. From the definition of $Z^\batch_i(w,\kappa,u,\rho)$, the same conclusion would then hold for it.

By the triangle inequality, 
\begin{align*}
&|g^\batch_i(w,\kappa,u,\rho) - g^\batch_i(w',\kappa',u',\rho)| \\
&\le  |g^\batch_i(w',\kappa',u,\rho) - g^\batch_i(w',\kappa',u',\rho)|+|g^\batch_i(w,\kappa',u,\rho) - g^\batch_i(w',\kappa',u,\rho)|+|g^\batch_i(w,\kappa,u,\rho) - g^\batch_i(w,\kappa',u,\rho)|.    
\end{align*}
We bound each term on the right one by one.
To bound these terms we use Equation~\eqref{eq:gexp}, the assumption that $\|x_i^\batch\|\le C_1\sqrt d$ and the definition of the function $g()$.
For the first term,
\[
|g^\batch_i(w',\kappa',u,\rho) - g^\batch_i(w',\kappa',u',\rho)| \le \|(u-u')x_i^\batch\|\kappa'\le C_1\|u-u'\|\sqrt d \kappa',
\]
for the second term,
\[
|g^\batch_i(w,\kappa',u,\rho) - g^\batch_i(w',\kappa',u,\rho)| \le |u\cdot x_i^\batch| \cdot|(w-w')\cdot x_i^\batch|\le \| x_i^\batch\|^2\cdot\|w-w'\|\le C_1^2d \|w-w'\|,
\]
and for the last term
\[
|g^\batch_i(w,\kappa,u,\rho) - g^\batch_i(w,\kappa',u,\rho)|\le |\kappa-\kappa'|\cdot |u\cdot x_i^\batch| \le C_1\sqrt{d}|\kappa-\kappa'|.
\]
Combining the three bounds,
\begin{align*}
&|g^\batch_i(w,\kappa,u,\rho) - g^\batch_i(w',\kappa',u',\rho)| \le  C_1\|u-u'\|\sqrt d \kappa'+C_1^2d \|w-w'\|+C_1\sqrt{d}|\kappa-\kappa'|.    
\end{align*}

For $ \|u-u'\|\le 1/(24C_1d^5\bsize^3)$, $\kappa'\le 2d^4\sigma\bsize^2$, $\|w-w'\|\le \sigma/(12d C_1^2\bsize)$ and $|\kappa-\kappa'|\le \sigma/(12C_1d\bsize)$, 
\begin{align*}
&|g^\batch_i(w,\kappa,u,\rho) - g^\batch_i(w',\kappa',u',\rho)| \le  \sigma/4\bsize \text{ a.s.}    
\end{align*}
This would imply,
\begin{align*}
&|Z^\batch_i(w,\kappa,u,\rho) - Z^\batch_i(w',\kappa',u',\rho)| \le  \sigma/2\bsize \text{ a.s.}   
\end{align*}
Using this bound,
\begin{align}
\frac{1}{\sizegood}\sum_{b\in\goodbatches}\mleft(\sum_{i=1}^{\bsize} Z_i^\batch (w,\kappa,u,\rho)\mright)^2&\le  \frac{1}{\sizegood}\sum_{b\in\goodbatches}\mleft(\sum_{i=1}^{\bsize} \mleft(Z_i^\batch (w',\kappa',u',\rho')+\frac{\sigma}{2\bsize}\mright)\mright)^2\nonumber\\
&\le\frac{2}{\sizegood}\sum_{b\in\goodbatches}\mleft(\sum_{i=1}^{\bsize} Z_i^\batch (w',\kappa',u',\rho')\mright)^2 +\frac{2}{\sizegood}\sum_{b\in\goodbatches}\mleft(\sum_{i=1}^{\bsize} \frac{\sigma}{2\bsize}\mright)^2 \nonumber\\
&\le \frac{2}{\sizegood}\sum_{b\in\goodbatches}\mleft(\sum_{i=1}^{\bsize} Z_i^\batch (w',\kappa',u',\rho')\mright)^2+\frac{\sigma^2}2\label{eq:nettoall}.
\end{align}
Let $\cU\ed\{ u\in \reals^d : \|u\|=1\}$, $\cW\ed\{w\in\reals^d: \|w-w^*\| \le d^2  \sigma\bsize)\}$, and $\cK\ed \mleft[0,2{d^4\sigma\bsize^2}\mright]$.

Standard covering argument shows that there exist covers such that 
\begin{align}\label{eq:coverbound1}
\cU'\subseteq \cU: \forall u\in \cU, \min_{u'\in\cU'}\|u-u' \|\le \frac{1}{(24C_1d^5\bsize^3)} ,
\end{align}
\begin{align}\label{eq:coverbound2}
\cW'\subseteq \cW: \forall w\in \cW, \min_{w'\in\cW'}\|w-w' \|\le \frac{\sigma}{12C_1^2d\bsize} ,
\end{align}
and 
\begin{align}\label{eq:coverbound3}
\cK'\subseteq \cK: \forall \kappa\in \cK, \min_{\kappa'\in\cK', \kappa'\ge \kappa}|\kappa-\kappa' |\le  \frac{\sigma}{12C_1d\bsize} ,
\end{align}
and the size of each is $|\cU'|,|\cW'|, |\cK'|\le e^{\cO(d\log(C_1d\bsize)}$.

In equation~\eqref{eq:firstcon}, taking the union bound over all elements in $\cU'$, $\cW'$ and $\cK'$, it follows that  
with probability $\ge 1-\delta e^{\cO(d\log(C_1d\bsize)}$, 
for all $u'\in\cU'$, $w'\in\cW'$ and $\kappa'\in \cK'$
\begin{align*}
\frac{1}{\sizegood}\sum_{b\in\goodbatches}\mleft(\sum_{i=1}^{\bsize} Z_i^\batch (w',\kappa',u',\rho)\mright)^2\le \bsize(\sigma^2 +C\E{}_{\dist}[((w'-w^*)\cdot x_i^\batch)^2])  + 64\bsize (\kappa' \rho)^2 \max\mleft\{\frac{2\ln(1/\delta)}{\sizegood},\sqrt{\frac{2\ln(1/\delta)}{\sizegood}}\mright\} .     
\end{align*}
Combining the above bound with Equation~\eqref{eq:nettoall}, it follows that with probability $\ge 1-\delta e^{\cO(d\log(C_1d\bsize)}$, for all $u\in\cU$, $w\in\cW$ and $\kappa\in \cK$ and elements $u'$, $w'$ and $\kappa'$ in the respective nets satisfying equations~\eqref{eq:coverbound1},\eqref{eq:coverbound2}, and \eqref{eq:coverbound3},
\begin{align}
&\frac{1}{\sizegood}\sum_{b\in\goodbatches}\mleft(\sum_{i=1}^{\bsize} Z_i^\batch (w,\kappa,u,\rho)\mright)^2\le 2\bsize(\sigma^2 +C\E{}_{\dist}[((w'-w^*)\cdot x_i^\batch)^2]) + 128\bsize (\kappa' \rho)^2 \max\mleft\{\frac{2\ln(1/\delta)}{\sizegood},\sqrt{\frac{2\ln(1/\delta)}{\sizegood}}\mright\}+\frac{\sigma^2}2 \nonumber\\
&\le 2\bsize(\sigma^2 (1+\frac{1}{4\bsize})+C\E{}_{\dist}[((w'-w^*)\cdot x_i^\batch)^2]) + 128\bsize (\kappa \rho)^2 \max\mleft\{\frac{2\ln(1/\delta)}{\sizegood},\sqrt{\frac{2\ln(1/\delta)}{\sizegood}}\mright\}\nonumber\\
&\le 2\bsize(\frac{5}{4}\sigma^2 +2C\E{}_{\dist}[((w-w^*)\cdot x_i^\batch)^2])+ 128\bsize (\kappa \rho)^2 \max\mleft\{\frac{2\ln(1/\delta)}{\sizegood},\sqrt{\frac{2\ln(1/\delta)}{\sizegood}}\mright\}
. \label{eq:limwbound}    
\end{align}
here (a) follows from the bound $\kappa\ge \kappa'$ in Equation~\eqref{eq:coverbound3}, and (b) follows by first writing $w'-w^* = (w-w^*) + (w'-w)$ and then using the bound $\|w'-w\|\le\frac{\sigma}{12C_1^2d\bsize}$ in Equation~\eqref{eq:coverbound2}.

Next, we further remove the restriction $w\in \cW$ and extend the above bound to all vectors $w$.

Consider a $w\notin \cW$ and $\kappa\in \mleft[0,{(\sigma+\|w-w^*\|)}{d^2\bsize}\mright]$.
From the definition of $\cW$, we have $\|w-w^*\|> d^2\sigma\bsize$.
Let $w' = w^* + \frac{w-w^*}{\|w-w^*\|}d^2\sigma \bsize $ and $\kappa' = \frac{d^2\sigma\bsize}{\|w-w^*\|}\kappa$.
Observe that $\|w'-w\| =d^2\sigma \bsize$ and 
\[
\kappa'\le {(\sigma+\|w-w^*\|)}{d^2\bsize}\frac{d^2\sigma\bsize}{\|w-w^*\|}\le d^2\sigma n \frac{d^2\sigma\bsize}{\|w-w^*\|}+ d^4\sigma\bsize^2\le {d^2\sigma\bsize}+d^4\sigma\bsize^2\le 2d^4\sigma\bsize^2,
\]
hence, $w'\in \cW$ and $\kappa'\in \cK$.
From Equation~\eqref{eq:gexp},
\begin{align*}
&\mleft|\frac{\|w-w^*\|}{d^2\sigma\bsize}g^\batch_i(w',\kappa',u,\rho) - g^\batch_i(w,\kappa,u,\rho)\mright| \\
&\ineqlabel{a}=
\frac{\rho\|x_i^\batch \cdot u\|}{\|x_i^\batch\cdot u\|\vee \rho}\cdot \mleft| \frac{\|w-w^*\|}{d^2\sigma\bsize}\kappa'\mleft(\frac{(x^\batch_i\cdot (w'-w^*)-n^\batch_i)}{\|x^\batch_i\cdot (w'-w^*)-n^\batch_i\|\vee \kappa'}\mright) -
\kappa\mleft(\frac{(x^\batch_i\cdot (w-w^*)-n^\batch_i)}{\|x^\batch_i\cdot (w-w^*)-n^\batch_i\|\vee \kappa}\mright)\mright|\\
&\ineqlabel{b}\le
\|x_i^\batch \cdot u\|\cdot \mleft| \kappa\mleft(\frac{(x^\batch_i\cdot  \frac{w-w^*}{\|w-w^*\|}d^2\sigma \bsize-n^\batch_i)}{\|x^\batch_i\cdot  \frac{w-w^*}{\|w-w^*\|}d^2\sigma \bsize-n^\batch_i\|\vee \frac{d^2\sigma\bsize}{\|w-w^*\|}\kappa}\mright) -
\kappa\mleft(\frac{(x^\batch_i\cdot (w-w^*)-n^\batch_i)}{\|x^\batch_i\cdot (w-w^*)-n^\batch_i\|\vee \kappa}\mright)\mright|\\
&=
\|x_i^\batch \cdot u\|\cdot \mleft| 
\kappa\mleft(\frac{(x^\batch_i\cdot (w-w^*)-\frac{d^2\sigma \bsize}{\|w-w^*\|}n^\batch_i)}{\|x^\batch_i\cdot (w-w^*)-\frac{d^2\sigma \bsize}{\|w-w^*\|}n^\batch_i\|\vee \kappa}\mright)-\kappa\mleft(\frac{(x^\batch_i\cdot  (w-w^*)-n^\batch_i)}{\|x^\batch_i\cdot (w-w^*)-n^\batch_i\|\vee \kappa}\mright) \mright|\\
&\ineqlabel{c}\le \|x_i^\batch \cdot u\|\cdot \mleft|\frac{d^2\sigma \bsize}{\|w-w^*\|}n^\batch_i -n_i^\batch \mright|\\
&\ineqlabel{d}\le C_1\sqrt d |n_i^\batch|\frac{d^2\sigma\bsize}{\|w-w^*\|}\\
&\ineqlabel{e}\le  C_1\sqrt d |n_i^\batch|,    
\end{align*}
here (a) follows from the definition of $g_i^\batch$, inequality (b) follows as $\rho\le\|x_i^\batch\cdot u\|\vee \rho$, inequality (c) uses the fact that for any $a,\Delta$ and $b\ge 0$, we have $|b \frac{a +\Delta}{(a+\Delta)\vee b} - b \frac{a}{a\vee b}|\le |\Delta|$, inequality (c) uses $\|x_i^\batch\|\le  C_1\sqrt d$ and the last inequality (e) uses $\|w-w^*\|> d^2\sigma\bsize$.

Therefore,
\[
\mleft|\frac{\|w-w^*\|}{d^2\sigma\bsize}Z^\batch_i(w',\kappa',u,\rho) - Z^\batch_i(w,\kappa,u,\rho)\mright|\le C_1\sqrt d\mleft(|n_i^\batch|+\E[|n_i^\batch|]\mright)\le C_1\sqrt d\mleft(|n_i^\batch|+\sigma\mright).
\]
From the above equation,
\begin{align*}
\frac{1}{\sizegood}\sum_{b\in\goodbatches}\mleft(\sum_{i=1}^{\bsize} Z_i^\batch (w,\kappa,u,\rho)\mright)^2
& \le \frac{1}{\sizegood}\sum_{b\in\goodbatches}\mleft(\sum_{i=1}^{\bsize} \mleft(\frac{\|w-w^*\|}{d^2\sigma\bsize} Z_i^\batch (w',\kappa',u,\rho)+ C_1\sqrt d|n_i^\batch|+ C_1\sqrt d\sigma\mright)\mright)^2\\
&  \ineqlabel{a}\le\frac{1}{\sizegood}\sum_{b\in\goodbatches}\mleft(3 \mleft(\sum_{i=1}^{\bsize} \frac{\|w-w^*\|}{d^2\sigma\bsize}Z_i^\batch (w',\kappa',u,\rho)\mright)^2+3\mleft(\sum_{i=1}^{\bsize} C_1\sqrt d|n_i^\batch|\mright)^2+3\mleft(\sum_{i=1}^{\bsize} C_1\sqrt d\sigma\mright)^2\mright)\\
& \ineqlabel{b} \le\frac{3\|w-w^*\|^2}{d^4\sigma^2\bsize^2} \frac{1}{\sizegood}\sum_{b\in\goodbatches}\mleft(\sum_{i=1}^{\bsize} Z_i^\batch (w',\kappa',u,\rho)\mright)^2+\frac{3dC_1^2}{\sizegood}\sum_{b\in\goodbatches}\mleft(\bsize\sum_{i=1}^{\bsize}|n_i^\batch|^2\mright)+{3dC_1^2}{\bsize^2}\sigma^2\\
& \ineqlabel{c}\le\frac{3\|w-w^*\|^2}{d^4\sigma^2\bsize^2} \frac{1}{\sizegood}\sum_{b\in\goodbatches}\mleft(\sum_{i=1}^{\bsize} Z_i^\batch (w',\kappa',u,\rho)\mright)^2+3d C_1^2n^2\sigma^2(d^2+1), \end{align*}
here (a) and (b) uses $(\sum_{i=1}^t z_i)\le t\sum_{i=1}^t z_i^2$ and inequality (c) holds with with probability $\ge 1-\frac{1}{d^2}$ by Markov inequality, as $Pr[\frac{1}{\bsize\sizegood}\sum_{b\in\goodbatches}\sum_{i=1}^{\bsize}|n_i^\batch|^2 > d^2\E_{\dist}[(n_i^\batch)^2)]\le \frac1{d^2}$.

Recall that $\|w'-w\|\le d^2\sigma\bsize$ and $\kappa\le 2d^4\sigma\bsize^2$, therefore in the above equation, we can bound the first term on the right by using high probability bound in Equation~\eqref{eq:limwbound}.
Then, with probability $\ge 1-\delta e^{\cO(d\log(C_1d\bsize)}-\frac{1}{d^2}$,
\begin{align*}
&\frac{1}{\sizegood}\sum_{b\in\goodbatches}\mleft(\sum_{i=1}^{\bsize} Z_i^\batch (w,\kappa,u,\rho)\mright)^2\\
& \le \frac{3\|w-w^*\|^2} {d^4\sigma^2\bsize^2} \mleft(2\bsize(\frac{5}{4}\sigma^2 +2C\E{}_{\dist}[((w'-w^*)\cdot x_i^\batch)^2])+ 128\bsize (\kappa' \rho)^2 \max\mleft\{\frac{2\ln(1/\delta)}{\sizegood},\sqrt{\frac{2\ln(1/\delta)}{\sizegood}}\mright\}\mright)+3dC_1^2{\bsize^2}\sigma^2(d^2+1)\\
& \ineqlabel{a}= 12C\bsize \E{}_{\dist}[((w-w^*)\cdot x_i^\batch)^2]  + \frac{15\|w-w^*\|^2} {2d^4\sigma^2\bsize^2}\bsize\sigma^2 + 384\bsize (\kappa \rho)^2 \max\mleft\{\frac{2\ln(1/\delta)}{\sizegood},\sqrt{\frac{2\ln(1/\delta)}{\sizegood}}\mright\}+3dC_1^2{\bsize^2}\sigma^2(d^2+1)\\
& \ineqlabel{b}\le 13C\bsize \E{}_{\dist}[((w-w^*)\cdot x_i^\batch)^2]+ 384\bsize (\kappa \rho)^2 \max\mleft\{\frac{2\ln(1/\delta)}{\sizegood},\sqrt{\frac{2\ln(1/\delta)}{\sizegood}}\mright\},
\end{align*}
here equality (a) uses the relation $w'-w^* = \frac{w-w^*}{\|w-w^*\|}d^2\sigma \bsize $ and $\kappa' = \frac{d^2\sigma\bsize}{\|w-w^*\|}\kappa$, and (b) follows as $C\E{}_{\dist}[((w-w^*)\cdot x_i^\batch)^2]\ge C\frac{\|w-w^*\|^2\|\Sigma\|}{\connum}= C\frac{\|w-w^*\|^2}{\connum}\ge Cd^4\sigma^2n^2/\connum$, where $\connum$ is condition number of $\Sigma$, hence $C\E{}_{\dist}[((w-w^*)\cdot x_i^\batch)^2]\gg \frac{15\|w-w^*\|^2} {2d^4\sigma^2\bsize^2}\bsize\sigma^2$ and $C\E{}_{\dist}[((w-w^*)\cdot x_i^\batch)^2]\ge C\frac{\|w-w^*\|^2}{\connum}\gg \frac{15\|w-w^*\|^2} {2d^4\sigma^2\bsize^2}\bsize\sigma^2$ and $C\E{}_{\dist}[((w-w^*)\cdot x_i^\batch)^2]\gg 3dC_1^2{\bsize^2}\sigma^2(d^2+1)$.

The above bound holds for all unit vectors $u$, $w\notin \cW$ and $\kappa\le {(\sigma+\|w-w^*\|)}{d^2\bsize}$,
\[
\frac{1}{\sizegood}\sum_{b\in\goodbatches}\mleft(\sum_{i=1}^{\bsize} Z_i^\batch (w,\kappa,u,\rho)\mright)^2\le 13C\bsize  \E{}_{\dist}[((w-w^*)\cdot x_i^\batch)^2]+ 450\bsize (\kappa \rho)^2 \max\mleft\{\frac{2\ln(1/\delta)}{\sizegood},\sqrt{\frac{2\ln(1/\delta)}{\sizegood}}\mright\}.
\]
Recall that bound in Equation~\eqref{eq:limwbound} holds for all unit vectors $u$, $w\in \cW$ and $\kappa\le \cK'$ with probability  $\ge 1-\delta e^{\cO(d\log(C_1d\bsize)}$. Note that for $w\in \cW$,  $  {(\sigma+\|w-w^*\|)}{d^2\bsize}\le {2d^4\sigma\bsize^2}$, hence $[0,{(\sigma+\|w-w^*\|)}{d^2\bsize}]\subseteq \cK'$.
Hence the above bound holds for all unit vectors $u$, $w\in \cW$ and $\kappa\le {(\sigma+\|w-w^*\|)}{d^2\bsize}$.
Combining the two bounds, with probability  $\ge 1-\delta e^{\cO(d\log(C_1d\bsize)}-\frac{1}{d^2}$, for all unit vectors $u$, all vectors $w$ and $\kappa\le {(\sigma+\|w-w^*\|)}{d^2\bsize}$,
\begin{align*}
\frac{1}{\sizegood}\sum_{b\in\goodbatches}\mleft(\sum_{i=1}^{\bsize} Z_i^\batch (w,\kappa,u,\rho)\mright)^2\le \frac{5}{2}\sigma^2\bsize+13C\bsize  \E{}_{\dist}[((w-w^*)\cdot x_i^\batch)^2]+ 384\bsize (\kappa \rho)^2 \max\mleft\{\frac{2\ln(1/\delta)}{\sizegood},\sqrt{\frac{2\ln(1/\delta)}{\sizegood}}\mright\}.
\end{align*}
This completes the proof of Equation~\eqref{eq:firsttermbound}.
\end{proof}

\end{document}